\newtheorem{definition}{Definition}
\newtheorem{theorem}{Theorem}
\newtheorem{lemma}{Lemma}
\newtheorem{proposition}{Proposition}
\newtheorem{corollary}{Corollary}
\newtheorem{principle}{Principle}
\newtheorem{example}{Example}
\title{Cycle is All You Need: More Is Different}
\author{ \href{https://orcid.org/0000-0003-2067-2763}{\includegraphics[scale=0.06]{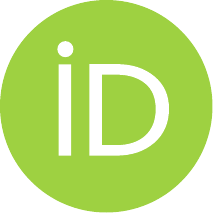}\hspace{1mm}Xin Li}\thanks{This work was partially supported by NSF IIS-2401748 and BCS-2401398. The author has used ChatGPT models to assist the development of theoretical ideas and visual illustrations presented in this paper.} \\
	Department of Computer Science\\
	University at Albany\\
	Albany, NY 12222 \\
	\texttt{xli48@albany.edu} 
}
\begin{document}
\maketitle

\begin{abstract}
We propose an information topological framework for intelligence in which \emph{cycle closure} is the fundamental mechanism of memory. 
Building on the first principle, we argue that memory is best understood not as a static store of representations, but as the ability to \emph{re-enter and traverse latent cycles} in neural state space.  
We identify these invariant cycles as the natural carriers of meaning across scales: they act as \emph{alignment checkpoints} between context ($\Psi$) and content ($\Phi$), filtering out order-specific noise, enforcing closure, and preserving only what remains consistent across variations. 
A key principle underlying this framework is the \emph{dot-cycle dichotomy}: trivial cycles collapse to dots ($H_0$), serving as transient contextual scaffolds ($\Psi$), while nontrivial cycles ($H_1$ and higher) encode low-entropy content invariants ($\Phi$) that persist as memory. 
This dichotomy clarifies how cognition achieves both adaptability and stability: dots support exploration, while cycles carry persistent knowledge across contexts. 
Biologically, this dichotomy is implemented by \emph{polychronous neural groups} (PNGs), which realize closed $1$-cycles through delay-locked spiking trajectories stabilized by spike-timing dependent plasticity (STDP). 
Oscillatory phase coding embeds PNGs within global temporal scaffolds, with slow rhythms (e.g., theta) segmenting experience into macro-cycles and fast rhythms (e.g., gamma) encoding discrete content packets. 
Coincidence detection enforces boundary cancellation ($\partial^2=0$), ensuring that only reproducible, order-invariant cycles survive. 
Through cross-frequency nesting and replay, these micro-cycles are recursively composed into a hierarchy of cycles, yielding a multiscale architecture of memory persistence and generalization ("more is different"). 
The perception-action cycle further generalizes this principle, introducing \emph{high-order invariance}: not only does order not matter within perception or action, but even the sequencing of sense-act alternations is irrelevant so long as the cycle closes on the intended goal. 
Evolutionarily, such mechanisms extend the homing cycles of spatial navigation, making navigation the ancestral substrate of structured memory. 
Finally, the sheaf-cosheaf duality provides a unifying mathematical 
language: contextual scaffolds ($\Psi$) glue local features into global 
sections (sheaf), while content cycles ($\Phi$) extend into global plans 
(cosheaf). Closure in this dual structure aligns top-down predictive 
potentials with bottom-up experiential cycles. By treating cycles as the 
algebraic footprint of broken symmetry, and by grounding their biological 
realization in PNG hierarchies, our framework unifies perception, action, 
and memory as instances of latent navigation. Beyond cognition, this same 
closure principle illuminates the nature of consciousness: awareness arises 
when local fragments of perception and global plans align into persistent 
cycles that integrate (unity) yet remain differentiated (richness) across 
contexts. Conscious experience is thus the phenomenological correlate of 
cycle persistence, the survival of high-order invariants across time and 
modality. We conclude that \emph{cycle is all you need}: memory and 
consciousness emerge from persistent topological invariants, enabling 
intelligent systems to generalize in non-ergodic environments while 
maintaining long-term coherence with minimal energetic cost.

\end{abstract}

\keywords{first principle \and broken symmetry \and boundary cancellation \and invariant cycles \and dot-cycle dichotomy \and structural invariance \and  persistent homology \and latent navigation \and sheaf-cosheaf view \and sensorimotor learning \and abstract thought}

\section{Introduction}
\label{sec:1}


John Archibald Wheeler’s celebrated dictum, \emph{“It from Bit”} \cite{wheeler2018information}, asserts that physical reality is fundamentally informational: what we perceive as ``things'' 
are distilled from acts of information exchange. 
In this seminal essay, 
Wheeler articulated four guiding negations, the ``Four No's'', as 
constraints on any ultimate account of physical reality:

\begin{enumerate}
  \item \textbf{No tower of turtles.}  
  There is no infinite regress of structures, each supported by another 
  deeper structure. Instead, Wheeler proposes closure in the form of cycles:  
  physics $\to$ observer-participancy $\to$ information $\to$ physics.

  \item \textbf{No laws.}  
  The laws of physics are not timeless Platonic entities; they must have 
  come into being. Wheeler emphasizes ``law without law,'' grounded in 
  principles like the topological identity $\partial^2 = 0$, rather than 
  pre-existing mechanical blueprints.

  \item \textbf{No continuum.}  
  There is no fundamental continuum of space, time, or even probability.  
  The continuum is a convenient mathematical idealization, while physics 
  at root is discrete and information-theoretic, founded on irreversible 
  quantum events (yes/no registrations).

  \item \textbf{No space, no time.}  
  Space and time are not absolute backdrops; they are emergent orders of 
  things, tools of description, not primordial entities. At Planck scales, 
  even connectivity and temporal order lose meaning, requiring a reformulation 
  of existence in terms of bits rather than spacetime.
\end{enumerate}

What is the implication of It-from-Bit for intelligence? We propose that intelligence itself can be understood as a process of extracting 
persistent structure from streams of information, where what survives is not 
the fleeting bit, but the invariant \emph{cycle} \cite{davatolhagh2024bit}. 

\begin{principle}[First Principle of Intelligence]
Intelligence is the capacity to stabilize invariants by cycle closure. 
At its core, cognition operates by minimizing joint context-content 
uncertainty $H(\Psi,\Phi)$, eliminating dangling boundaries and promoting 
them into closed cycles. These cycles constitute the fundamental units of 
meaning, memory, and prediction. 
\end{principle}

Our guiding claim is that \emph{cycle is all you need}: the organization of 
cognition, memory, and abstract thoughts in neural systems follows from the universal role of cycles 
as the algebraic residue of broken symmetry and the topological skeleton 
of information flow \cite{li2025information}. In the spirit of Wheeler, we propose the following four No's for cognition.

\subsection*{The Four-No's for Cognition}

To articulate the informational constraints on cognition, we advance four 
foundational ``No’s'' that serve as boundary conditions on any viable theory:

\begin{enumerate}
    \item \textbf{No isolated information.} Bits are never standalone: they 
    acquire meaning only through relations that close into cycles. 
    Information without recurrence dissipates as noise.
    \item \textbf{No privileged order.} The cognitive system must be robust 
    to permutations of local steps (e.g., \ eye movements, motor primitives). 
    What matters is closure into a cycle, not the linear order of micro-events.
    \item \textbf{No static storage.} Memory is not a passive container of 
    representations, but a dynamic ability to re-enter latent cycles in state space.
    \item \textbf{No prediction without invariance.} Forecasting future states 
    requires reducing entropy by filtering order-dependent variations; only 
    invariant cycles can stabilize the predictive substrate.
\end{enumerate}

\noindent\textbf{From constraints to clues.} These four principles define cognition as a non-ergodic information process \cite{li2025Beyond}: 
rather than averaging over all possible trajectories, the mind concentrates 
its dynamics onto recurrent, invariant cycles that persist across perturbations. 
Taken together, the Four No's funnel cognition toward recurrent organization: items must close into cycles (no isolated information), be insensitive to micro-order (no privileged order), support re-entry (no static storage), and stabilize invariants for prediction (no prediction without invariance).
The lightest formalism that enforces all four at once is the chain complex with boundary operator $\partial$ \cite{hatcher2002algebraic}: its nilpotency, $\partial^2=0$, cancels stray endpoints so that only closed traversals remain. 
This is the key new insight underlying the dot-cycle dichotomy, as shown in Fig.~\ref{fig:dot-cycle-dichotomy}, and it sets up our first clue.

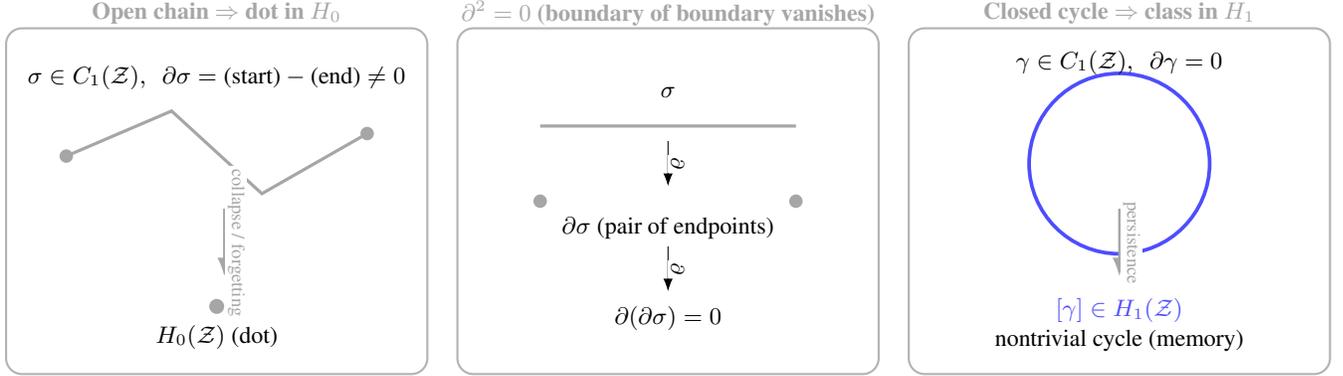
\begin{figure}[t]
\centering
\begin{tikzpicture}[
  >=Latex,
  scale=1.0,
  every node/.style={font=\small},
  panel/.style={rounded corners=6pt, draw=gray!60, line width=0.8pt, minimum width=5.6cm, minimum height=4.6cm},
  title/.style={font=\small\bfseries, text=gray!70},
  chainline/.style={line width=1.2pt, gray!70},
  cycleline/.style={line width=1.4pt, blue!70},
  endpoint/.style={circle, fill=gray!70, inner sep=1.8pt},
  arrowlab/.style={midway, above, sloped, fill=white, inner sep=1pt}
]

\node[panel] (LeftBox)  at (-6.0,0) {};
\node[panel] (MidBox)   at (  0.0,0) {};
\node[panel] (RightBox) at (  6.0,0) {};

\node[title] at ([yshift=2.5cm]LeftBox)  {Open chain $\Rightarrow$ dot in $H_0$};
\node[title] at ([yshift=2.5cm]MidBox)   {$\partial^2=0$ (boundary of boundary vanishes)};
\node[title] at ([yshift=2.5cm]RightBox) {Closed cycle $\Rightarrow$ class in $H_1$};

\draw[chainline] ($(LeftBox)+(-2.0,0.6)$) -- ($(LeftBox)+(-0.6,1.2)$) --
                 ($(LeftBox)+( 0.6,0.1)$) -- ($(LeftBox)+( 2.0,0.9)$);

\node[endpoint] (LStart) at ($(LeftBox)+(-2.0,0.6)$) {};
\node[endpoint] (LEnd)   at ($(LeftBox)+( 2.0,0.9)$) {};

\node at ($(LeftBox)+(0,1.65)$) {$\sigma \in C_1(\mathcal{Z}),\ \ \partial\sigma = \text{(start)} - \text{(end)} \neq 0$};

\draw[->, gray!70, line width=0.9pt] ($(LeftBox)+(0.1,-0.1)$) -- ($(LeftBox)+(0.1,-1.0)$)
  node[arrowlab] {\scriptsize collapse / forgetting};
\node[circle, fill=gray!70, inner sep=2pt] at ($(LeftBox)+(0,-1.4)$) {};
\node at ($(LeftBox)+(0,-1.8)$) {$H_0(\mathcal{Z})$ (dot)};

\node at ($(MidBox)+(0,1.45)$) {$\sigma$};
\draw[chainline] ($(MidBox)+(-1.7,1.0)$) -- ($(MidBox)+(1.7,1.0)$);

\draw[->] ($(MidBox)+(0,0.8)$) -- ($(MidBox)+(0,0.2)$) node[arrowlab] {\scriptsize $\partial$};
\node[endpoint] at ($(MidBox)+(-1.7,0.0)$) {};
\node[endpoint] at ($(MidBox)+( 1.7,0.0)$) {};
\node at ($(MidBox)+(0,-0.35)$) {$\partial\sigma$ (pair of endpoints)};

\draw[->] ($(MidBox)+(0,-0.6)$) -- ($(MidBox)+(0,-1.2)$) node[arrowlab] {\scriptsize $\partial$};
\node at ($(MidBox)+(0,-1.55)$) {$\partial(\partial\sigma)=0$};

\draw[cycleline] ($(RightBox)+(0,0.5)$) circle (1.2);
\node at ($(RightBox)+(0,1.85)$) {$\gamma \in C_1(\mathcal{Z}),\ \ \partial\gamma = 0$};

\draw[->, gray!70, line width=0.9pt] ($(RightBox)+(0,-0.1)$) -- ($(RightBox)+(0,-1.0)$)
  node[arrowlab] {\scriptsize persistence};
\node[blue!70] at ($(RightBox)+(0,-1.45)$) {$[\gamma] \in H_1(\mathcal{Z})$};
\node at ($(RightBox)+(0,-1.85)$) {nontrivial cycle (memory)};

\end{tikzpicture}
\caption{\textbf{$\partial^2=0$ enforces the dot-cycle dichotomy.} 
\emph{Left:} An open chain $\sigma$ has a nonzero boundary $\partial\sigma$ and
collapses to a dot (class in $H_0$), carrying no relational content. 
\emph{Middle:} The boundary operator squares to zero: $\partial(\partial\sigma)=0$. 
\emph{Right:} A closed chain $\gamma$ with $\partial\gamma=0$ persists as a homology
class $[\gamma]\in H_1$, i.e., a cycle that encodes order-invariant structure.}
\label{fig:dot-cycle-dichotomy}
\end{figure}

\begin{corollary}[The Boundary of a Boundary Vanishes]
Under the First Principle, intelligence is realized 
through cycle closure. This closure is only possible because the boundary 
operator $\partial$ satisfies the fundamental identity
$\partial^2 = 0$.
That is, the boundary of a boundary vanishes. Cognitively, this law ensures that when cognition promotes boundaries into cycles, no further inconsistencies remain at the next level: every open edge is paired, every fragment canceled. This guarantees the 
existence of stable invariants (cycles), which are the carriers of meaning, 
memory, and communication. 
Therefore, $\partial^2=0$ constitutes the \emph{First Clue} of 
intelligence: coherence arises because boundaries consistently vanish 
when lifted, enabling cycles to persist.
\end{corollary}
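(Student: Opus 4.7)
The plan is to proceed in three stages. First, I would formalize the cognitive state space $\mathcal{Z}$ as carrying a simplicial (or CW) structure whose free modules of $n$-chains $C_n(\mathcal{Z})$ encode $n$-dimensional fragments of experience: $0$-chains for discrete events, $1$-chains for transitions and micro-trajectories, $2$-chains for contextual fillings that glue transitions together. The boundary operator $\partial_n : C_n(\mathcal{Z}) \to C_{n-1}(\mathcal{Z})$ is defined on each basis simplex as the signed sum of its codimension-$1$ faces. Under the First Principle, stable invariants must arise as cycles, i.e., chains $\gamma$ with $\partial\gamma = 0$, and the cognitively meaningful ones are those not themselves boundaries of higher chains.

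Second, I would establish $\partial^2 = 0$ by the standard alternating-sum cancellation: for an $(n+1)$-simplex $[v_0, \dots, v_{n+1}]$, applying $\partial$ twice produces every codimension-$2$ face with two opposing signs that cancel. In the cognitive setting this identity cannot be taken as purely formal; it must be anchored in the Four No's. \emph{No privileged order} forces the boundary of a chain to depend only on its oriented frontier, independent of how intermediate steps are concatenated; \emph{no isolated information} forbids dangling endpoints from surviving at the next level. Together these guarantee $\partial(\partial\sigma) = 0$, since any candidate residual endpoint would itself be an orphan bit and thus excluded by hypothesis.

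Third, the inclusion $\mathrm{im}\,\partial_{n+1} \subseteq \ker\partial_n$ then makes $H_n(\mathcal{Z}) = \ker\partial_n / \mathrm{im}\,\partial_{n+1}$ well-defined, from which the dot-cycle dichotomy of Fig.~\ref{fig:dot-cycle-dichotomy} follows immediately: chains with $\partial\sigma \neq 0$ collapse to transient classes in $H_0$ (dots, the scaffold $\Psi$), while closed non-bounding chains persist as nontrivial classes in $H_1$ (content cycles $\Phi$). This persistence is precisely what \emph{no static storage} and \emph{no prediction without invariance} require, so $\partial^2 = 0$ is revealed as the minimal algebraic condition under which the First Principle can be realized, which is what justifies calling it the First Clue.

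The main obstacle will be making step two honest in the neural setting. In classical algebraic topology $\partial^2 = 0$ is a combinatorial tautology on simplices, but here the stronger claim is that cognitive dynamics genuinely instantiate a chain complex rather than merely admitting one as an analogy. I would need to argue that mechanisms such as coincidence detection by polychronous neural groups, together with STDP-stabilized delay-locked spiking, physically implement the pairwise cancellation of inconsistent boundary contributions, so that only globally closed traversals survive replay. Turning this biological picture into a genuine boundary map (with a consistent orientation convention and a well-defined grading) is the crux, and it is exactly what the later sections on PNGs and cross-frequency oscillatory scaffolding are meant to supply.
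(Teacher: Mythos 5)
Your proposal is correct and follows essentially the same route the paper takes: the paper states this corollary without a formal proof, resting on the standard alternating-sum identity $\partial^2=0$ for the simplicial boundary operator (cited to Hatcher), the resulting dot--cycle dichotomy (its Lemma on $\partial^2=0$ enforcing the dichotomy and Fig.~\ref{fig:dot-cycle-dichotomy}), and the later biological anchoring via coincidence detection, phase coding, and PNG closure --- exactly the three stages and the ``crux'' you identify. The only mild caution is that your attempt to re-derive $\partial^2=0$ from the Four No's should be read as motivation for adopting the chain-complex formalism (as the paper does) rather than as an independent deduction, since the identity holds by construction of $\partial$.
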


\subsection*{The First Clue: The Boundary of a Boundary Vanishes}

The topological underpinning of our framework begins with a simple but profound 
identity: 
$\partial^2 = 0$.
This statement, that the boundary of a boundary vanishes, is the first 
clue for how memory must be organized \cite{squire1993structure}. In homology, it ensures that open 
chains (unclosed fragments of experience) cancel at their boundaries, leaving 
behind only closed cycles as persistent invariants \cite{hatcher2002algebraic}. In cognition, this translates into the principle that incomplete or inconsistent relations are filtered out, 
while closed cycles of relations endure as stable memory traces \cite{baddeley1997human}. 

\noindent\textbf{Dot–cycle Dichotomy.}
At the chain level, a “dot” (0–simplex) records isolated content, whereas a “cycle” (1–cycle) captures a closed relation in which endpoints cancel.
The rule $\partial^2=0$ formalizes this passage: boundaries of fragments do not compose, but pairwise cancellation at endpoints yields a cycle that survives in homology.
Cognitively, this is the move from token to trace \cite{li2025Delta}: contents $\Phi$ are registered as dots, yet only when linked by contextual relations $\Psi$ into a closed cycle do they consolidate as durable memory.
This bridge motivates the claim that the primitive of memory is not the item but the cycle, setting up the biological mechanisms below.
Coincidence detection in neural circuits provides a biological realization 
of this algebraic rule: inputs that do not align temporally cancel, whereas 
synchronous inputs reinforce a closed trajectory \cite{konig1996integrator}. Similarly, oscillatory phase coding cyclically parameterizes time, ensuring that every cognitive boundary 
is embedded in a cycle \cite{buzsaki2006rhythms}. 

The vanishing of boundaries thus guarantees that what remains in memory is not arbitrary fragments but coherent cycles: the minimal 
invariants that bind context and content into an intelligible whole. From a computational standpoint, this marks a profound departure from the Turing paradigm \cite{turing1936computable}. Traditional machines rely on symbolic tokens and sequential operations, where meaning is assigned externally to states of a register or tape. In contrast, a cycle-based architecture derives meaning intrinsically from topological closure: invariants are not “written” into memory but emerge from the very dynamics of neural interaction \cite{gerstner2014neuronal}. This dot–cycle dichotomy, where trivial cycles collapse and only nontrivial cycles persist, provides a natural mechanism for error correction, generalization, and energy efficiency without requiring exhaustive symbolic manipulation or gradient descent over high-dimensional parameter spaces.
Such a principle opens a non-Turing path for AI: instead of operating on static representations, machines could operate on homological carriers of information that stabilize through $\partial^2=0$ \cite{wheeler2018information}. Closure rather than linear update rules would regulate information flow, and persistence of memory would follow from structural invariants rather than explicit storage. In this sense, cycle-based computation is not just a metaphor but a new substrate for building cognitive machines, one in which intelligence arises from organizing dynamics into cycles rather than executing instructions in sequence.

\subsection*{From It-from-Bit to Cognition-from-Cycle}

By situating cognition within Wheeler’s informational ontology, we suggest a 
refinement: \emph{It is from cycle}. Cycles, not bits, are the natural atoms 
of cognition \cite{li2025Beyond}. They reconcile the demands of memory persistence, order invariance, 
and predictive efficiency by embodying what remains invariant when symmetry is broken and boundaries cancel. 
This refinement can be traced directly to the first topological clue, 
$\partial^2=0$, the principle that the boundary of a boundary vanishes. 
In cognitive dynamics, this identity guarantees that open or inconsistent 
relations cancel at their endpoints, leaving only closed cycles as stable 
invariants. Two fundamental neural mechanisms instantiate this principle 
in biological hardware. 
First, \emph{oscillatory phase coding} provides a cyclic parameterization 
of time \cite{russo2024integration}: neural events are not encoded as isolated instants but as phases 
on an ongoing oscillation (theta, gamma, etc.). Every oscillation is a cycle 
on $S^1$, so temporal flow is continuously folded back into a cycle. This 
cyclic scaffolding ensures that beginnings and endings align, enforcing the 
closure condition required for persistent memory. 
Second, \emph{coincidence detection} enforces boundary cancellation. Neurons that fire only when multiple inputs arrive within a narrow temporal window 
implement the algebraic rule $\partial^2=0$ at the level of spikes. Inputs 
that fail to align are suppressed, while those that synchronize reinforce 
a closed trajectory of activity \cite{matell2004cortico}. In this way, biological circuitry eliminates 
dangling boundaries and retains only cycles that embody stable relations. 
Together, phase coding and coincidence detection transform Wheeler’s 
\emph{It-from-Bit} into what we call \emph{Cognition-from-Cycle}: oscillations 
supply the circular scaffold for cognition, and coincidence detection prunes 
inconsistencies so that only closed cycles persist. 

This perspective on cognition has a striking parallel in Grothendieck’s philosophy of mathematics \cite{grothendieck1957quelques}. Grothendieck argued that mathematics should not begin with isolated problems or ad hoc tricks, but with the most general structures in which those problems naturally reside. For him, the power of mathematics lay in constructing universal frameworks, schemes, sheaves, cohomology \cite{hatcher2002algebraic}, so that particular cases appear merely as shadows of deeper structures. The same principle underlies the dot–cycle dichotomy in cognition: dots represent atomic, specific instances, while cycles represent the stable structures or cycles that emerge when those dots are consistently related. Just as Grothendieck’s use of sheaf theory and cohomology guarantees that local data can be glued into global invariants, cognition uses closure and cycle-formation to stabilize raw experiences into coherent memory and meaning. The zigzag lemma provides the algebraic mechanism for this transformation \cite{hatcher2002algebraic}: boundaries (dots that cannot close locally) are promoted into cycles (cycles that persist globally). Based on the first principle, cognition mirrors Grothendieck’s structural method: coherence is achieved not by the accumulation of dots, but by ensuring that every dot finds its closure in a cycle, unifying specifics under universal structure.

The rest of this paper 
develops this principle formally, showing how cycle formation unifies 
perception, action, and memory into a single topological framework. We first discuss how broken symmetry leads to invariant cycles in Sec. \ref{sec:2}. In Sec. \ref{sec:3}, we lay out the biological foundation of hierarchical cycle formation via polychronous neural groups. Based on topological closure, we present a unified cognition-from-cycle framework from the perspective of bootstrapping in Sec. \ref{sec:4}. But ``more is different'', the hierarchy of cycle formation has enabled the generation of more sophisticated memory systems and intelligent designs. In Sec. \ref{sec:5}, we study high-order invariance and interpret conscious experience as the phenomenological correlate of 
cycle persistence, the survival of high-order invariants across time and 
modality. We discuss the implications of Cognition-from-Cycle for AGI and cognitive computation in Sec. \ref{sec:6}.

\section{Motivation: from Invariant Cycles to Dynamic Alignment}
\label{sec:2}

\subsection{Topological View: Invariant Cycles as the Non-Ergodic Counterpart to Measure Preservation}

Classical ergodic theory is built on the notion of a measure-preserving transformation \cite{walters2000introduction}.
A dynamical system $(X,\mathcal{B},\mu,T)$ consists of a probability space
$(X,\mathcal{B},\mu)$ and a measurable transformation $T:X\to X$ satisfying
$\mu(T^{-1}A) = \mu(A), \quad \forall A \in \mathcal{B}$.
This measure invariance guarantees that long-term time averages along almost every trajectory
coincide with ensemble averages with respect to $\mu$. In this setting, entropy
(e.g., Kolmogorov-Sinai entropy \cite{cornfeld2012ergodic}) quantifies the unpredictability of the evolution
under the assumption of ergodicity.
Intelligent systems, however, are fundamentally non-ergodic \cite{li2025Beyond}: they retain memory,
exhibit path dependence, and actively reduce uncertainty. In such systems,
the measure $\mu$ is not preserved, but typically \emph{concentrated}
onto lower-dimensional recurrent structures through learning and adaptation \cite{spisak2025self}.
This concentration corresponds to entropy minimization rather than entropy conservation.

We propose that the appropriate generalization of ``measure-preservation'' in the
non-ergodic setting is \emph{cycle-preservation}. That is, while probability measures
are not conserved globally, the system preserves \emph{topological invariants}
encoded in cycles that represent memory traces and recurrent behavioral motifs.
Formally, let $(X,T)$ be a discrete-time dynamical system on a topological state space $X$. A $k$-cycle is a chain $\gamma \in Z_k(X)$ satisfying $\partial \gamma = 0$.
Under the induced map $T_\ast$ on chains, invariance of $\gamma$ requires that
$T_\ast \gamma - \gamma = \partial \beta$
for some $(k+1)$-chain $\beta$. Equivalently,
$[T_\ast \gamma] = [\gamma]~ \text{in}~ H_k(X)$, where $H_k(X)$ denotes the 
k-th homology group of the topological space $X$,
so that $\gamma$ is invariant up to homology class. In this way, although
trajectories deform under dynamics, the \emph{memory} encoded by the homology class
persists.

\begin{table}[h!]
\centering
\caption{Comparison between ergodic and non-ergodic frameworks.}
\begin{tabular}{|p{3.5cm}|p{5.5cm}|p{5.5cm}|}
\hline
\textbf{Aspect} & \textbf{Ergodic Theory} & \textbf{Non-Ergodic (Adaptive Systems)} \\
\hline
Invariant structure & Measure-preserving transformations ($\mu(T^{-1}A)=\mu(A)$) & Measure concentration on recurrent, lower-dimensional cycles or attractors \\
\hline
Entropy dynamics & Entropy is conserved or grows under mixing (Kolmogorov--Sinai entropy) & Entropy minimized via concentration on structured states (Friston 2012) \\
\hline
Typical trajectories & Explore the entire space given enough time (ergodicity) & Converge onto invariant cycles, attractors, or heteroclinic orbits \\
\hline
Statistical averaging & Time averages = ensemble averages & Time averages biased by concentration; ensemble averages may not exist \\
\hline
Memory mechanism & No persistent memory beyond statistical mixing & Memory encoded as stable homology classes $[\gamma]\in H_k$ (cycles) \\
\hline
Prediction & Statistical prediction based on invariant measure & Structural prediction based on persistence of invariant cycles \\
\hline
\end{tabular}
\label{tab:duality}
\end{table}

This shift in perspective reframes the role of entropy reduction. 
In ergodic systems, entropy is managed by distributing trajectories 
uniformly across the entire state space $X$, ensuring statistical 
equivalence of time and ensemble averages. By contrast, in non-ergodic, 
adaptive systems, entropy reduction is achieved through 
\emph{measure concentration}: rather than exploring all of $X$, 
trajectories are funneled toward lower-dimensional recurrent sets. 
These recurrent sets correspond to \emph{persistent cycles} that remain 
stable under perturbations and across variations in initial conditions. 
In this sense, cycles act as the carriers of invariant information, 
preserving structural regularities across history-dependent dynamics 
and filtering out order-specific noise. The outcome is that 
intelligence emerges not from uniform exploration, but from the ability 
to stabilize information flow through the persistence of these invariant 
structures, as shown in Table \ref{tab:duality}. Formally, we have

\begin{proposition}[Non-Ergodic Invariance Principle]
\label{prop:nonergodic-invariance}
Let $(X,T)$ be a dynamical system on a topological state space $X$.
Then the natural counterpart of measure-preservation in ergodic theory is
\emph{cycle-preservation}:
$T_\ast : H_k(X) \to H_k(X), [\gamma] \mapsto [\gamma]$.
That is, an intelligent system preserves homology classes of cycles
even while its measure evolves non-uniformly. These invariant cycles formalize
memory persistence as the structural backbone of cognition.
\end{proposition}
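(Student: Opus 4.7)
The plan is to establish the proposition in two layers: an essentially formal algebraic layer that reduces cycle-preservation to functoriality of singular homology, and a conceptual layer that justifies it as the genuine non-ergodic counterpart of measure-preservation. First I would invoke the standard fact that a continuous self-map $T:X\to X$ induces a chain map $T_\#:C_k(X)\to C_k(X)$ on the singular chain complex satisfying $\partial\circ T_\# = T_\#\circ\partial$. This commutation forces $T_\#$ to carry $Z_k(X)$ to itself and $B_k(X)$ to itself, so it descends canonically to a homomorphism $T_\ast:H_k(X)\to H_k(X)$. The hypothesis stated in the excerpt, $T_\ast\gamma - \gamma = \partial\beta$ for some $(k+1)$-chain $\beta$, is then literally the condition that $[T_\ast\gamma]=[\gamma]$ in the quotient $Z_k/B_k$, so the algebraic half of the claim is immediate once functoriality is in place.

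Second, I would build the duality with ergodic theory explicitly. In the measure-theoretic category, invariance is the fixed-point equation $T_\ast\mu=\mu$ for the pushforward of measures; in the homological category, invariance is the fixed-point equation $T_\ast[\gamma]=[\gamma]$ for the pushforward of cycles. I would frame Table~\ref{tab:duality} as two instances of one schema: a functor (pushforward) acting on an invariant object (measure, resp.\ homology class). Where ergodicity guarantees that time-averages along orbits reproduce ensemble statistics, cycle-preservation guarantees that deformed orbits still realize the same topological class, i.e.\ memory as persistence of $[\gamma]$. Invoking Corollary~1 ($\partial^2=0$), I would argue that open chains carry nonzero boundary and therefore cannot survive the quotient to $H_k$; only closed $\gamma$ do, and these are precisely the candidates for the invariance equation.

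The main obstacle is not algebraic but foundational: one must specify on which space $H_k(X)$ is computed so that the equation $T_\ast[\gamma]=[\gamma]$ is nontrivially enforced by the dynamics rather than postulated. In a non-ergodic regime the measure concentrates on a recurrent substructure (attractor, $\omega$-limit set, or Morse-theoretic skeleton of the flow), and the honest statement is that $T$ restricted to this substructure acts as the identity on $H_k$ of the recurrent set. I would address this by passing from $X$ to the attracting set $\Lambda = \bigcap_{n\ge 0} T^n(U)$ of a trapping region $U$, noting that $T|_\Lambda$ is a homeomorphism and that its action on $H_k(\Lambda)$ is a finite-order permutation on a finitely generated group in the compact case; averaging or passing to an iterate $T^N$ then yields the pointwise identity on classes. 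This reduction, rather than the chain-map formalism, is where the real content sits, and it is the step I expect to require the most care to state precisely without slipping into circularity.
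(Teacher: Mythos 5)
The paper never actually proves this proposition: it is stated as a framing principle, with the preceding text simply defining cycle-invariance by the condition $T_\ast\gamma-\gamma=\partial\beta$, i.e.\ $[T_\ast\gamma]=[\gamma]$ in $H_k(X)$. Your first two paragraphs (functoriality of $T_\#$, commutation with $\partial$, descent to $T_\ast$ on $H_k$, and the observation that the stated hypothesis is literally the definition of equality of classes in $Z_k/B_k$) reproduce that setup faithfully, and your reading of the ergodic analogy as two instances of a pushforward fixed-point equation is consistent with how the paper uses Table~\ref{tab:duality}.

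The genuine gap is in your third paragraph, where you try to turn the postulate into a theorem by restricting to an attractor $\Lambda=\bigcap_{n\ge0}T^n(U)$ and claiming that $T|_\Lambda$ induces a finite-order permutation of a finitely generated $H_k(\Lambda)$, so that some iterate $T^N$ acts as the identity on classes. Both halves of that claim fail. Compact invariant sets need not have finitely generated (singular or \v{C}ech) homology: solenoid attractors are the standard example. And even when $H_k(\Lambda)$ is finitely generated, the induced automorphism need not have finite order: for a hyperbolic toral automorphism $T_A:\mathbb{T}^2\to\mathbb{T}^2$ with $A=\left(\begin{smallmatrix}2&1\\1&1\end{smallmatrix}\right)$, the whole torus is the (compact) invariant set, $T_A$ is a homeomorphism, yet $T_{A\ast}=A$ on $H_1(\mathbb{T}^2)\cong\mathbb{Z}^2$ has infinite order and fixes no nonzero class, so no iterate is the identity on $H_1$; the expanding map $z\mapsto z^2$ on $S^1$ (multiplication by $2$ on $H_1$) makes the same point in the non-invertible case. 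Note also that $T_A$ is measure-preserving and ergodic, which shows that cycle-preservation cannot be derived as a "counterpart" of measure-preservation at this level of generality at all: the correct reading, and the one the paper adopts, is that $T_\ast[\gamma]=[\gamma]$ is a modeling assumption characterizing the class of systems the authors call intelligent/non-ergodic, not a consequence of the hypotheses "$(X,T)$ a dynamical system on a topological space." Your instinct that "this is the step requiring the most care" was right; as written, that step is false rather than merely delicate.
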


When a non-ergodic system with many symmetric possibilities is forced to choose one outcome, 
symmetry is broken \cite{li2025Broken}. In neural and cognitive dynamics, this choice does not erase the 
unselected alternatives; instead, it organizes them into a closed cycle of relations:
the chosen state, its competitors, and the transitions among them. 
In other words, the brain does not simply ``pick a winner'' among symmetric options. 
It establishes a cycle that records the selection, keeps the alternatives accessible 
for recall or switching, and stabilizes the outcome through recurrent interaction \cite{li2025Beyond}. 
Broken symmetry, therefore, inevitably produces cycle formation, since the invariant 
residue of selection is a cycle connecting choice, memory, and potential revision.

This perspective reframes the role of entropy in prediction. 
Proposition~\ref{prop:nonergodic-invariance} establishes that non-ergodic systems 
preserve homology classes of cycles as their structural invariants. 
From an information-theoretic viewpoint, symmetry corresponds to maximal 
uncertainty: if all outcomes are equivalent under a symmetry group $G$, the induced 
distribution is uniform (entropy is maximized). Symmetry breaking reduces this 
uncertainty by eliminating redundant possibilities, thereby lowering entropy and 
concentrating probability mass around residual invariant cycles. 
In high dimensions, this process can be understood through the 
\emph{theory of measure concentration} \cite{ledoux2001concentration}: instead of spreading trajectories uniformly 
(as in ergodic systems), the dynamics of learning and memory focus trajectories 
around persistent cycles. To make this precise, we introduce the notion of \emph{residual invariants} \cite{beekman2019introduction}: the structural survivors of symmetry breaking concentrate probability 
mass onto persistent cycles and formalize what remains stable under the 
reduced symmetry subgroup.

\begin{definition}[Residual Invariants under Symmetry Breaking]
\label{def:residual-invariants}
Let a system evolve on a state space $\mathcal{Z}$ with symmetry group $G$. 
Suppose a perturbation $\varepsilon$ breaks $G$-equivariance, reducing the 
symmetry to a subgroup $H \subset G$ and forcing selection of a representative 
state $\Phi_\varepsilon \in \mathcal{Z}$. The \emph{residual invariants} are 
those structures that remain preserved under $H$ despite the breaking of $G$. 
Formally, they are equivalence classes of cycles 
$[\gamma] \in H_k(\mathcal{Z})$
that are stable under $H$-action and persist under perturbations of $\varepsilon$.  
\end{definition}

Intuitively, residual invariants encode what remains stable after a decision 
or perturbation: in physics, they correspond to conserved quantities or 
Goldstone modes \cite{beekman2019introduction}; in topology, to persistent homology classes \cite{edelsbrunner2008persistent}; and in cognition, 
to cycles that bind chosen outcomes with unchosen alternatives, enabling recall, 
revision, and reuse \cite{chen2023our}. This intuition can be formalized by showing that residual invariants 
emerging from symmetry breaking necessarily take the form of closed cycles, 
which persist as homology classes and provide the structural foundation of memory.

\begin{lemma}[Symmetry Breaking Generates Invariant Cycles]
\label{lem:symm-invariant}
Let a system evolve on a state space $\mathcal{Z}$ with symmetry group $G$.
Suppose a perturbation $\varepsilon$ breaks $G$-equivariance by forcing the selection of a representative state $\Phi_\varepsilon$. Then:
1) The broken symmetry induces residual structures (orbits)
invariant under residual transformations $H\subset G$.
2) These residual invariants manifest as closed cycles
$\gamma\subset \mathcal{Z}$ stabilized by feedback (i.e.\ $\partial \gamma=0$).
3) $\gamma$ defines a homology class
$[\gamma]\in H_k(\mathcal{Z})$ that is stable under perturbations of $\varepsilon$,
formalizing memory persistence.
\end{lemma}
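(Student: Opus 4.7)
The strategy is to separate the three claims into (i) a group-theoretic step that identifies the residual invariants with orbits of the stabilizer, (ii) a topological step that packages these orbits as closed chains, and (iii) a stability step that lifts closure to a persistent homology class. The glue between (i) and (ii) is the observation that symmetry breaking does not destroy the $G$-orbit $G\cdot\Phi_\varepsilon$; it only breaks transitivity at the pointwise level while preserving it at the orbit level, so orbits are the natural candidates for the residual invariants in Definition~\ref{def:residual-invariants}.

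\textbf{Step 1: residual subgroup and orbit.} Fix the perturbation $\varepsilon$ and the selected state $\Phi_\varepsilon\in\mathcal{Z}$. Set $H=\mathrm{Stab}_G(\Phi_\varepsilon)=\{g\in G:g\cdot\Phi_\varepsilon=\Phi_\varepsilon\}$, which is a subgroup of $G$ by the standard isotropy argument. The orbit $\mathcal{O}_\varepsilon=G\cdot\Phi_\varepsilon\cong G/H$ is, by construction, set-wise invariant under the residual $H$-action (and indeed under the full $G$-action, though no longer pointwise), which proves part~(1). This step is routine and essentially spells out the orbit--stabilizer theorem in the language of broken symmetry.

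\textbf{Step 2: orbits as closed cycles.} To establish part~(2), I would realize $\mathcal{O}_\varepsilon$ as a chain $\gamma\in C_k(\mathcal{Z})$ with $\partial\gamma=0$. When $G$ is a compact Lie group, $G/H$ is a closed manifold and carries a fundamental class whose simplicial (or singular) representative has vanishing boundary because closed manifolds have no boundary. When $G$ is discrete, the orbit is finite and the feedback dynamics supply directed transitions between symmetry-equivalent states; I would then define $\gamma$ as the 1-chain obtained by concatenating these transitions in cyclic order, so that $\partial\gamma=0$ holds automatically by pairwise cancellation of endpoints along the feedback loop. In either case the algebraic identity $\partial\gamma=0$ is the formal counterpart of the intuitive statement ``feedback closes the orbit into a cycle'', and the Corollary on $\partial^2=0$ ensures no dangling boundary survives at higher levels.

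\textbf{Step 3: stability of the homology class.} For part~(3), let $\varepsilon'$ be a small perturbation of $\varepsilon$. Continuous dependence of $\Phi_\varepsilon$ (an implicit function / structural stability argument) gives a continuous family $\gamma_{\varepsilon'}$ of closed chains, and any two members of this family differ by a bounding $(k+1)$-chain $\beta$, i.e.\ $\gamma_{\varepsilon'}-\gamma_\varepsilon=\partial\beta$. Hence $[\gamma_{\varepsilon'}]=[\gamma_\varepsilon]\in H_k(\mathcal{Z})$, and persistence under the filtration indexed by $\varepsilon$ yields a well-defined, stable homology class. This is exactly the persistent-homology reformulation needed to match Proposition~\ref{prop:nonergodic-invariance}, closing the loop between symmetry breaking and cycle preservation.

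\textbf{Main obstacle.} The hard part will be Step~2 in full generality: making the passage from an abstract orbit $G/H$ to a concrete cycle $\gamma\in C_k(\mathcal{Z})$ requires either compactness of $G$ (to get a fundamental class) or an explicit dynamical feedback mechanism (to close discrete orbits into directed loops). I would finesse this by stating the lemma under a mild regularity assumption, e.g.\ that $G$ acts by homeomorphisms and that the feedback dynamics admit a continuous return map on $\mathcal{O}_\varepsilon$, which covers both the Goldstone-mode and the limit-cycle cases without forcing a single framework.
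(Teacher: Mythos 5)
The paper states this lemma without proof --- the surrounding text only gestures at Goldstone modes, persistent homology classes, and Definition~\ref{def:residual-invariants} --- so there is no official argument to compare yours against. Your proposal is essentially the natural formalization of what the paper intends: $H$ as the isotropy subgroup of the selected state $\Phi_\varepsilon$, the orbit $G\cdot\Phi_\varepsilon\cong G/H$ as the residual structure, closure via a fundamental class (compact Lie case) or a feedback loop through the finite orbit (discrete case), and a homotopy/cylinder argument for stability in $\varepsilon$. This is more precise than anything in the paper, and your Step~3 mechanism (a continuous family of closed chains differs by a bounding $(k+1)$-chain) is exactly the $T_\ast\gamma-\gamma=\partial\beta$ formulation used in Proposition~\ref{prop:nonergodic-invariance}, so the approach is faithful to the paper's intent.

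Three caveats should be promoted from finesse to explicit hypotheses, because the lemma is false in the strength the paper wants without them. First, the fundamental class of $G/H$ over $\mathbb{Z}$ requires orientability (e.g.\ $SO(3)/O(2)\cong\mathbb{RP}^2$ has none); either assume it or pass to $\mathbb{Z}_2$ coefficients. Second, pushing that class forward along the inclusion $G\cdot\Phi_\varepsilon\hookrightarrow\mathcal{Z}$ only produces \emph{some} class in $H_k(\mathcal{Z})$, possibly zero --- if $\mathcal{Z}$ is contractible every cycle bounds --- so the ``memory persistence'' reading needs a separate nontriviality hypothesis that your argument cannot derive from the stated assumptions; as written you prove existence and stability of a class, not that it survives as a nontrivial invariant. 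Third, the Step~3 implicit-function argument requires the orbit type (the conjugacy class of $H$) to be locally constant in $\varepsilon$; where the stabilizer jumps, the orbit changes topology and the family $\gamma_{\varepsilon'}$ is not continuous, so homologousness can fail. None of this breaks your architecture --- these are precisely the regularity conditions you flag at the end --- but they belong in the statement of the lemma, not in a closing remark.
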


The lemma establishes that symmetry breaking inevitably leaves behind 
residual invariants in the form of cycles, which act as stable memory traces 
of past selections. To fully understand their cognitive function, one must 
ask: What advantage does the system gain from organizing dynamics into such 
closed cycles? The key lies in the fact that cycles identify equivalence classes 
of trajectories, collapsing many superficially different paths into the same 
topological invariant \cite{hatcher2002algebraic}. In other words, once dynamics are organized into 
homology classes, prediction and memory no longer depend on the precise 
\emph{order} of steps, but only on the closure of the cycle \cite{li2025Beyond}. This observation 
leads directly to the following corollary: cycles serve as the structural 
basis of \emph{order invariance}, ensuring robustness in navigation, perception, action, 
and more abstract cognitive computations.

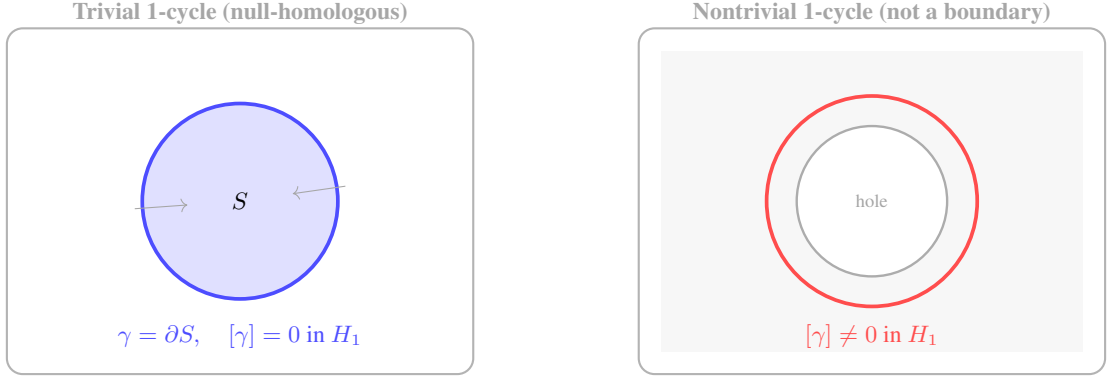
\begin{figure}[t]
\centering
\begin{tikzpicture}[
  scale=1.0,
  every node/.style={font=\small},
  panel/.style={rounded corners=6pt, draw=gray!60, line width=0.8pt, minimum width=6.2cm, minimum height=4.6cm},
  cycletriv/.style={line width=1.4pt, blue!70},
  cyclenontriv/.style={line width=1.4pt, red!70},
  fillface/.style={fill=blue!12, draw=none},
  holedisc/.style={fill=white, draw=gray!65, line width=0.9pt}
]

\begin{scope}[shift={(-4.2,0)}]
  \node[panel] (LeftBox) at (0,0) {};
  \node[gray!70] at (0,2.5) {\bfseries Trivial 1-cycle (null-homologous)};

  \fill[fillface] (0,0) circle (1.3);

  \draw[cycletriv] (0,0) circle (1.3);

  \node at (0,0) {$S$};
  \node[blue!70] at (0,-1.8) {$\gamma=\partial S,\quad [\gamma]=0 \ \text{in}\ H_1$};

  \draw[->, gray!70] (1.4,0.2) -- (0.7,0.1);
  \draw[->, gray!70] (-1.4,-0.1) -- (-0.7,-0.05);
\end{scope}

\begin{scope}[shift={(4.2,0)}]
  \node[panel] (RightBox) at (0,0) {};
  \node[gray!70] at (0,2.5) {\bfseries Nontrivial 1-cycle (not a boundary)};

  \fill[gray!06] (-2.8,-2.0) rectangle (2.8,2.0);

  \node[holedisc] (Hole) at (0,0) [circle, minimum width=2.0cm, minimum height=2.0cm] {};

  \draw[cyclenontriv] (0,0) circle (1.4);

  \node[gray!70] at (0,0) {\scriptsize hole};
  \node[red!70] at (0,-1.8) {$[\gamma]\neq 0 \ \text{in}\ H_1$};
\end{scope}

\end{tikzpicture}
\caption{\textbf{Trivial vs.\ nontrivial 1-cycles.} 
\emph{Left:} A cycle that is the boundary of a filled region ($\gamma=\partial S$) is 
\emph{null-homologous} and hence trivial in $H_1$: it can be “canceled” as a boundary. 
\emph{Right:} A cycle encircling a hole is not the boundary of any 2-chain in the space, 
so it represents a \emph{nontrivial} class in $H_1$. In our framework, trivial cycles 
correspond to high-entropy, short-lived scaffolds ($\Psi$) that collapse under boundary 
cancellation ($\partial^2=0$), whereas nontrivial cycles correspond to low-entropy content 
invariants ($\Phi$) that persist as memory.}
\label{fig:trivial-vs-nontrivial}
\end{figure}

\begin{theorem}[Cycles Encode Order Invariance]
\label{thm:order-invariance}
Let $(\mathcal{Z},x_0)$ be a pointed state space (latent manifold or graph) with base state $x_0$
(``home''). Let $\mathcal{A}=\{a_1,\dots,a_m\}$ denote a finite set of local moves inducing paths
$\{\alpha_i\}$ starting and ending in a neighborhood of their endpoints.
For any finite sequence of moves $w=a_{i_1}\cdots a_{i_k}$ that yields a cycle
$\gamma_w$ at $x_0$ (i.e., a homing trajectory), the first homology class
$[\gamma_w]\in H_1(\mathcal{Z};\mathbb{Z})$ depends only on the \emph{multiset} of moves used (and their net orientations), not on their order.
Equivalently, all order permutations of $w$ that remain cycles at $x_0$ determine the same element in $H_1$.
\end{theorem}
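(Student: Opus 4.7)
The strategy is to move from the pathwise (non-commutative) composition of moves to the chain-level description, where order of summands is immaterial because addition in $C_1(\mathcal{Z};\mathbb{Z})$ is commutative. The crucial observation is that although different permutations of $w$ can yield distinct loops (in general representing distinct elements of $\pi_1(\mathcal{Z},x_0)$), the assignment loop-to-1-chain factors through a free abelian group, which identifies all orderings of the same multiset.

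\emph{Step 1 (chain representation).} Assign to each move $a_i$ a 1-chain $\alpha_i \in C_1(\mathcal{Z};\mathbb{Z})$ (a singular 1-simplex, or an edge if $\mathcal{Z}$ is modeled as a graph), with the convention that traversing $a_i$ against its reference orientation contributes $-\alpha_i$. For a concrete realization of $w=a_{i_1}\cdots a_{i_k}$ as a continuous loop at $x_0$, the induced 1-chain is $\gamma_w=\sum_{j=1}^{k}\epsilon_j\,\alpha_{i_j}$, where $\epsilon_j\in\{\pm 1\}$ records the net orientation of the $j$-th traversal. The hypothesis that the path returns to $x_0$ gives $\partial\gamma_w=0$, so $\gamma_w\in Z_1(\mathcal{Z};\mathbb{Z})$.

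\emph{Step 2 (abelianness of $C_1$).} Given any permutation $w'$ of $w$ that also closes at $x_0$ and uses the same signed multiset of moves, the associated 1-chain $\gamma_{w'}$ is literally the same element of $C_1$ as $\gamma_w$, since the sum does not depend on the order of its summands. In particular $[\gamma_w]=[\gamma_{w'}]$ in $H_1(\mathcal{Z};\mathbb{Z})$. Equivalently, by Hurewicz, $H_1(\mathcal{Z};\mathbb{Z})\cong \pi_1(\mathcal{Z},x_0)^{\mathrm{ab}}$, and any two closed orderings differ in $\pi_1$ by a product of commutators, which vanish under abelianization. This provides the homotopy-theoretic gloss on the chain-level computation and connects the statement to the dot-cycle dichotomy emphasized earlier: what survives in $H_1$ is precisely the commutator-free, order-independent residue.

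\emph{Main obstacle.} The algebra is routine once one works at the correct level; the delicate point is geometric, not algebraic. One must justify that a permutation of $w$ actually yields a well-defined loop at $x_0$ with the same signed multiset. Consecutive moves $a_{i_j},a_{i_{j+1}}$ need not be composable on the nose, and reordering can force endpoints to land only in a neighborhood rather than coinciding. I would handle this by invoking the hypothesis that each $\alpha_i$ starts and ends in a neighborhood of its nominal endpoints, inserting short ``bridging'' arcs between successive moves and at the base point to achieve literal closure. These bridging arcs can be chosen to bound 2-chains (since they live in contractible neighborhoods), so they contribute zero to $H_1$ and do not affect $[\gamma_w]$. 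A second, smaller bookkeeping task is to make the notion of ``net orientation'' precise: fix a reference orientation for each $a_i$ once and for all, and interpret the multiset as a multiset of signed letters in $\{a_i^{\pm 1}\}$ whose signed total for each letter is invariant under permutation. With these conventions in place, Steps 1 and 2 complete the proof.
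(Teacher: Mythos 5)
Your proposal is correct and follows essentially the same route as the paper: the paper's proof likewise rests on the Abelian property of chain addition via the Hurewicz map, concluding that $[\gamma_w]$ depends only on the cumulative signed $1$-chain, which is exactly your Steps 1--2 with the emphasis reversed. Your additional bridging-arc discussion addresses the endpoint-composability issue the paper leaves implicit; the only nit is that a single bridging arc does not itself "bound a 2-chain" (it is not a cycle) -- the correct statement is that the difference $\gamma_{w'}-\gamma_w$ is a $1$-cycle supported in the disjoint contractible neighborhoods and is therefore null-homologous, which your argument supplies with a one-line repair.
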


\begin{proof}[Proof:]
The key insight is the Abelian property of the addition operator. Concatenate local moves to form cycles based at $x_0$, producing elements of the fundamental group
$\pi_1(\mathcal{Z},x_0)$. The Hurewicz map $h:\pi_1(\mathcal{Z},x_0)\to H_1(\mathcal{Z};\mathbb{Z})$
abelianizes path composition: commutators vanish in $H_1$. Hence for cycles $\gamma,\eta$,
$[\gamma\cdot\eta]=[\eta\cdot\gamma]$ and, more generally, any permutation of cycle segments yields the same homology class,
provided the path remains closed. Thus $[\gamma_w]$ is invariant to the \emph{order} of constituent moves and depends only
on their cumulative 1-chain (the signed sum of traversed edges/segments). Intuitively, homology collapses
all order-specific reparameterizations and commutator structure, retaining only the closed-cycle content.
\end{proof}

\begin{example}[Dots as $H_0$ vs.\ cycles as $H_1$ in Memory Consolidation]
\label{example:dots-vs-cycles}
In the homological interpretation of memory, trivial and nontrivial cycles 
map onto distinct cognitive outcomes:
1) (\emph{Dots as $H_0$}) Collapsed or trivial cycles reduce to isolated 
points in the state space. Algebraically, these correspond to connected 
components $H_0(\mathcal{Z})$, encoding mere existence without relational 
structure. Cognitively, such fragments represent unbound or forgotten traces, 
which fail to persist across perturbations.
2) (\emph{cycles as $H_1$}) Nontrivial closed trajectories correspond to 
classes in $H_1(\mathcal{Z})$, encoding relations that cannot be reduced to 
boundaries. These cycles serve as persistent invariants, forming the low-entropy 
content variables $\Phi$ that underlie stable memory and predictive generalization.
\end{example}

Theorem~\ref{thm:order-invariance} establishes that once trajectories are 
organized into cycles, their predictive value no longer depends on the precise 
ordering of steps but only on the closure of the cycle. This reduction reflects 
a deeper topological dichotomy in memory formation. Algebraically, the identity 
$\partial^2=0$ ensures that boundaries of boundaries vanish: incomplete chains 
cannot accumulate meaning unless they close, and only closed cycles can survive 
as invariants. Cognitively, this corresponds to the fact that exploratory 
fragments either collapse into trivial points (dots) with no relational content, 
or are stabilized into nontrivial cycles that encode order-invariant memory. In 
this sense, $\partial^2=0$ acts as the algebraic filter that separates forgotten 
scaffolds from consolidated invariants. To make this distinction explicit, we now 
formalize the roles of $H_0$ and $H_1$ in memory consolidation (refer to Fig. \ref{fig:trivial-vs-nontrivial}).

\begin{lemma}[$\partial^2=0$ Enforces the Dot-cycle Dichotomy]
\label{lemma:dot-cycle}
Let $C_\ast(\mathcal{Z})$ denote the chain complex of a neural state space $\mathcal{Z}$. 
The homological identity $\partial^2=0$ implies that:
1) Any open chain $\sigma \in C_1(\mathcal{Z})$ with $\partial \sigma \neq 0$ 
must collapse to a trivial 0-cycle in $H_0(\mathcal{Z})$, encoding mere connectivity 
without relational content.
2) Any closed chain $\gamma \in C_1(\mathcal{Z})$ with $\partial \gamma = 0$ 
defines a homology class $[\gamma]\in H_1(\mathcal{Z})$. If $\gamma$ is not the 
boundary of a higher-dimensional chain, it represents a nontrivial cycle that 
persists as a stable memory trace.
Thus, $\partial^2=0$ acts as a topological filter: boundaries of boundaries vanish, 
ensuring that only two outcomes are possible, collapse into trivial dots ($H_0$) 
or persistence as nontrivial cycles ($H_1$). In cognition, this dichotomy captures 
the distinction between forgotten fragments and consolidated memories.
\end{lemma}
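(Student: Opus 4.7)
The plan is to work entirely at the level of the chain complex $(C_\ast(\mathcal{Z}),\partial_\ast)$ and extract the two clauses from the single algebraic fact $\partial_n\circ\partial_{n+1}=0$. First I would unpack the standard data: $Z_k=\ker\partial_k$ denotes the $k$-cycles, $B_k=\mathrm{im}\,\partial_{k+1}$ the $k$-boundaries, and $H_k=Z_k/B_k$. The nilpotency $\partial^2=0$ is precisely what forces $B_k\subseteq Z_k$, so the quotient that defines $H_k$ is well-posed. This is the pivot on which both clauses turn, and I would flag it at the start of the argument so that the remainder is essentially bookkeeping.

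For clause (1), I would take $\sigma\in C_1(\mathcal{Z})$ with $\partial\sigma\neq 0$. Because $\partial_0$ is the zero map on $C_0$, every 0-chain is automatically a 0-cycle, so $\partial\sigma\in Z_0$. But as an image under $\partial_1$ it also lies in $B_0$, hence $[\partial\sigma]=0$ in $H_0(\mathcal{Z})$. The cognitive reading is then that the only residue of the open chain $\sigma$ at the homology level is a statement about which connected component its endpoints occupy, that is, a "dot" in the sense of $H_0$, carrying connectivity information but no closed-relational content.

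For clause (2), assume $\gamma\in C_1$ with $\partial\gamma=0$. Then $\gamma\in Z_1$, so the class $[\gamma]\in H_1=Z_1/B_1$ is defined; this quotient is meaningful precisely because $\partial^2=0$ ensures $B_1\subseteq Z_1$. If moreover $\gamma\notin B_1$, i.e., $\gamma$ is not the boundary of any 2-chain, then $[\gamma]\neq 0$, and $\gamma$ represents a nontrivial element of $H_1(\mathcal{Z})$. The persistence claim, that $[\gamma]$ survives perturbations, follows from the homotopy invariance of homology, and is already quantified by Theorem~\ref{thm:order-invariance}, which shows that $[\gamma]$ is insensitive to order-level reparameterizations of the underlying word of local moves.

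The hard part is not the algebra but the interpretive bridge in clause (1): strictly speaking, $\partial\sigma$ is a nonzero element of $C_0$; it is only its \emph{class} in $H_0$ that is trivial as a boundary, and one has to avoid conflating "open chain with nonzero boundary" with "zero chain." I would therefore phrase clause (1) as the assertion that an open 1-chain contributes no class to $H_1$ and only a connectivity datum in $H_0$, rather than literally claiming $\sigma$ vanishes. With that reading the two clauses are exhaustive and mutually exclusive: every $\sigma\in C_1$ is either open (contributing only to $H_0$ via its endpoints) or closed (representing a class in $H_1$ that is either a boundary and thus trivial, or not a boundary and thus a persistent cycle), which is the dot-cycle dichotomy claimed, and is consistent with Example~\ref{example:dots-vs-cycles}.
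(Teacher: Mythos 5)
Your argument is correct, and it is essentially the only argument available: the paper itself states Lemma~\ref{lemma:dot-cycle} without a proof, relying implicitly on the standard facts recalled in its appendix ($Z_k=\ker\partial_k$, $B_k=\mathrm{im}\,\partial_{k+1}$, $H_k=Z_k/B_k$, with $\partial^2=0$ guaranteeing $B_k\subseteq Z_k$), which is exactly the bookkeeping you carry out. Your explicit handling of clause (1) --- noting that $\partial\sigma$ is a nonzero element of $C_0$ whose \emph{class} is trivial in $H_0$ because $B_0\subseteq Z_0$, so the open chain contributes only a connectivity datum rather than literally vanishing --- is a precision the paper's informal phrasing ("must collapse to a trivial 0-cycle") glosses over, and your reading is the charitable one consistent with Example~\ref{example:dots-vs-cycles}. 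The only caveat is that the "persists as a stable memory trace" part of clause (2) is interpretive rather than algebraic; citing homotopy invariance and Theorem~\ref{thm:order-invariance} is as much justification as the paper's own level of rigor demands.
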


Lemma~\ref{lemma:dot-cycle} formalizes the algebraic consequence of 
$\partial^2=0$: boundaries of boundaries vanish, leaving only two 
possibilities for 1-chains, collapse into trivial 0-cycles or persistence 
as nontrivial 1-cycles, which are analogous to binary states in classical Turing machines. To connect this structural dichotomy with cognition, we now interpret these two outcomes in terms of memory consolidation: 
dots in $H_0$ correspond to forgotten fragments, while cycles in $H_1$ 
correspond to persistent, order-invariant memories. 

\paragraph{From dots to cycles as order-invariant memory.}
Example~\ref{example:dots-vs-cycles} illustrates how trivial $H_0$ components 
(``dots'') capture disconnected, high-entropy fragments, while nontrivial $H_1$ 
cycles encode stable relational invariants that persist under perturbation. 
Theorem~\ref{thm:order-invariance} then formalizes this intuition: once a 
trajectory closes into a cycle, its homology class is invariant under 
permutations of the generating moves, depending only on their multiset and 
orientation. In other words, loops erase superficial ordering while preserving 
relational content. Figure~\ref{fig:memory} situates this process within the 
CCUP pipeline: high-entropy scaffolds ($\Psi$) consist of transient loops and 
dangling boundaries; oscillatory alignment and coincidence detection enforce 
closure ($\partial^2=0$), thereby transforming dots into cycles; and the result 
is low-entropy content ($\Phi$) stabilized as persistent homology classes. 
This transition from dots to cycles is the algebraic mechanism of predictive coding via inverted inference \cite{li2025Inverted}, ensuring that cognition retains order-invariant relational 
structures rather than fragile, order-sensitive fragments. Formally, we have

\begin{lemma}[Inverted Inference Facilitates Closure]
Let $(\Psi,\Phi)$ denote context-content pairs in a chain complex 
$(C_\bullet,\partial)$. Forward inference projects $\Psi\to\Phi$, 
but may leave residual boundaries $\partial(\Psi,\Phi)\neq 0$. 
Inverted inference updates $\Psi \mapsto \Psi'$ such that 
mismatched boundaries are absorbed, yielding:
$\partial^2(\Psi',\Phi)=0$.
Hence, inverted inference acts as a homotopy operator that cancels 
misaligned boundaries, ensuring that only reproducible cycles 
$[\gamma]\in H_k(C_\bullet)$ survive. 
\end{lemma}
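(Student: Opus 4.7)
The plan is to interpret the statement through the standard chain-homotopy picture, viewing inverted inference as the degree-raising operator that witnesses equivalence between the naive forward projection and its closed-cycle counterpart. First I would make the chain complex explicit: model a context-content pair as an element of a bicomplex $C_{p,q}$ with total differential $\partial = \partial_\Psi + \partial_\Phi$, so that $\partial(\Psi,\Phi)=0$ means precisely that the combined pair is a cycle. Forward inference becomes a chain map $F$ that projects $\Psi$ onto a predicted $\Phi$, and the residual $\partial(\Psi,\Phi)$ quantifies the failure of $F$ to land in $\ker\partial$. Under this dictionary the substantive content of the lemma is not the tautology $\partial^2=0$, but the vanishing of the first-order mismatch $\partial(\Psi',\Phi)$ after the update; the double $\partial$ should then be read as closure at the next hierarchical scale.

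Next I would construct the inverted-inference operator $I: C_\bullet \to C_{\bullet+1}$ as a degree-raising map satisfying the chain-homotopy identity $\partial I + I\partial = \mathrm{id} - \Pi$, where $\Pi$ projects onto the closed subcomplex. Setting $\Psi' := \Psi - I\bigl(\partial(\Psi,\Phi)\bigr)$ yields an updated context whose paired boundary is
\[
\partial(\Psi',\Phi) \;=\; \partial(\Psi,\Phi) - \partial I\bigl(\partial(\Psi,\Phi)\bigr) \;=\; I\bigl(\partial^2(\Psi,\Phi)\bigr) \;=\; 0,
\]
invoking nilpotency at the final step. Applying $\partial$ a second time then gives $\partial^2(\Psi',\Phi)=0$ automatically, so the update is internally consistent and can be iterated. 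Passing to homology, $F$ and $\Pi\circ F$ induce the same map, so only classes $[\gamma]\in H_k(C_\bullet)$ that are reproducible under $F$ survive; inconsistent scaffolds are absorbed into exact terms by $I$ and collapse to trivial classes, confirming that $I$ plays the role of the homotopy that converts dangling boundaries into closed cycles.

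The hard part will be pinning down a canonical choice of $I$: chain homotopies are unique only up to higher homotopy, so I would either exhibit a preferred operator, most naturally the one delivered by a deformation retraction of $C_\bullet$ onto its closed subcomplex via the homological perturbation lemma, or show that the vanishing of $\partial(\Psi',\Phi)$ is independent of the choice by a spectral-sequence argument on the bicomplex. A secondary obstacle is interpretive rather than technical: I must articulate why the algebraic operator $I$ corresponds biologically to the oscillatory-phase realignment and coincidence-detection mechanisms invoked earlier, so that the lemma is read as a formal counterpart of the physical claim that recurrent top-down updates rewrite context until its predictions close consistently with content.
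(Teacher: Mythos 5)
The paper states this lemma without any proof, so there is no official argument to compare against; your attempt is an independent formalization, and your interpretive move is the right one: since $\partial^2=0$ holds identically in any chain complex, the substantive claim must be $\partial(\Psi',\Phi)=0$ after the update. However, the core algebraic step has a genuine gap. You posit a homotopy identity $\partial I + I\partial = \mathrm{id}-\Pi$ with $\Pi$ ``the projection onto the closed subcomplex.'' If $\Pi$ is an idempotent with image $Z_\bullet=\ker\partial$, it restricts to the identity on cycles, and then no such $I$ can exist: applying the identity to an arbitrary $x$ and using $\partial\Pi x=0$ gives $\partial I\partial x=\partial x$, while applying it to $\partial x$ and using $\Pi\partial x=\partial x$ gives $\partial I\partial x=0$; together these force $\partial x=0$ for every $x$, i.e.\ the identity is consistent only for complexes with zero differential. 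The same confusion shows up in your displayed computation, which silently drops a term: the correct expansion is $\partial(\Psi',\Phi)=\Pi\,\partial(\Psi,\Phi)+I\,\partial^2(\Psi,\Phi)=\Pi\,\partial(\Psi,\Phi)$, and with your stated choice of $\Pi$ this equals $\partial(\Psi,\Phi)$ itself, so the update produces no cancellation at all.

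The repair is to take $\Pi$ to be a retraction onto \emph{homology} rather than onto cycles --- e.g.\ the harmonic projection coming from a Hodge decomposition, or the projector $\Pi=ip$ of a strong deformation retract $(i,p,I)$ of $C_\bullet$ onto $H_\bullet(C)$, which is exactly what the homological perturbation lemma you cite actually delivers. Such a $\Pi$ annihilates boundaries, so with $\Psi'=\Psi-I\,\partial(\Psi,\Phi)$ you genuinely obtain $\partial(\Psi',\Phi)=\Pi\,\partial(\Psi,\Phi)=0$, and since $\mathrm{id}-\Pi$ is null-homotopic the induced map on $H_k(C_\bullet)$ is unchanged, which is the precise sense in which only reproducible cycles survive. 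One further point needs patching: $I\,\partial(\Psi,\Phi)$ is a chain of the total complex, so for the update to have the form $\Psi\mapsto\Psi'$ with $\Phi$ held fixed you must require the homotopy (or the bicomplex grading) to place the correction in the context summand; otherwise the operator rewrites content as well as context, contradicting the statement. With these two repairs --- homology/harmonic projection in place of the cycle projection, and a homotopy compatible with the $(\Psi,\Phi)$ splitting --- your argument closes and is in fact more explicit than anything the paper provides.
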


The homological account above identifies memory as order-invariant cycles
produced by closure and stabilized by inverted inference. To view the same
mechanism through a complementary lens, we translate “closure” into
information-theoretic terms: cycles correspond to \emph{broken symmetries} that
reduce uncertainty by concentrating probability mass on reproducible
configurations, thereby enabling prediction. In this sense, topological
persistence (nontrivial $H_k$ classes) and statistical efficiency (entropy
reduction) are two descriptions of the same phenomenon, local mismatches are
cancelled as boundaries, and what remains are low-entropy, order-invariant
carriers of structure. We now formalize this correspondence.

\subsection{Information-Theoretic View: Entropy Minimization and Dynamic Alignment}

The preceding results established invariant cycles as the structural backbone 
of non-ergodic dynamics, grounding memory persistence and order invariance in 
homology. Complementary to this topological account, one can adopt an 
information-theoretic perspective \cite{cover1999elements}. From this viewpoint, symmetry corresponds 
to maximal entropy and minimal predictive power, while symmetry breaking 
reduces entropy and concentrates probability mass onto invariant cycles. 
The following lemmas formalize this connection, showing how entropy 
minimization through cycle formation provides a substrate for reliable and 
order-independent prediction.

\begin{lemma}[Broken Symmetry Facilitates Prediction via Entropy Minimization]
\label{lem:entropy}
Let a system evolve on a state space $\mathcal{Z}$ with symmetry group $G$.
If all $g \in G$ are equivalent, the induced distribution over outcomes is uniform,
yielding maximal entropy $H = \log |G|$ and no predictive power.
A perturbation breaking $G$ reduces the support of admissible outcomes,
lowering entropy $H' < H$.
The residual invariant cycle $\gamma \subset \mathcal{Z}$ preserves what remains
stable across variations, providing a reliable substrate for prediction.
Thus, prediction becomes possible when symmetry is broken, because uncertainty
is reduced and invariants encode persistent, order-independent structure.
\end{lemma}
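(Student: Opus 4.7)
The plan is to split the argument into three sequential claims, uniform distribution under full symmetry, strict entropy reduction under symmetry breaking, and genuine predictive gain from concentration onto an invariant cycle, and then join them via the data-processing inequality. First I would establish the symmetric baseline by noting that $G$-equivariance of the induced distribution, $p(g\cdot z) = p(z)$ for all $g \in G$, together with the maximum entropy principle, forces $p$ to be uniform on each $G$-orbit; if $G$ acts freely on the relevant orbit $\mathcal{O}\subset\mathcal{Z}$, then $H(p) = \log|\mathcal{O}| = \log|G|$, matching the advertised upper bound and confirming zero predictive information since $I(X_t;X_{t+1})=0$ under the uniform $G$-action.

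Next I would quantify the entropy drop after symmetry breaking to a residual subgroup $H \subset G$. Admissible outcomes collapse onto an $H$-orbit of size $|H|<|G|$, so by the standard support-cardinality bound $H(p') \le \log|H| < \log|G|$; the gap $\log[G:H]$ is exactly the information gained by committing to a coset representative $\Phi_\varepsilon$. At this point I would invoke Lemma~\ref{lem:symm-invariant} to identify the surviving $H$-orbit with a closed cycle $\gamma \subset \mathcal{Z}$ carrying a homology class $[\gamma]\in H_k(\mathcal{Z})$, so that the entropy reduction coincides with concentration of probability mass onto a topological invariant rather than onto an arbitrary subset; this is the step where the information-theoretic and homological pictures fuse.

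The hard part will be the last link from lowered entropy to genuine predictive power: entropy reduction of the marginal alone does not sharpen conditional distributions. I would close this gap by writing the predictive information as $I(X_t;X_{t+1}) = H(X_{t+1}) - H(X_{t+1}\mid X_t)$ and arguing that restriction of the dynamics to $\gamma$ lowers the conditional entropy faster than the marginal, because flow confined to a closed cycle becomes (approximately) deterministic up to phase, so successive states are tightly coupled. Finally, I would appeal to Theorem~\ref{thm:order-invariance} to conclude that the resulting predictive substrate depends only on the multiset of constituent moves rather than their ordering, which delivers the order-independent invariant asserted in the statement and completes the three-step program.
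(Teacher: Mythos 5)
The paper states Lemma~\ref{lem:entropy} without any proof (it is followed immediately by discussion), so there is no official argument to match your proposal against; your three-step formalization is in the spirit of the surrounding text, and your middle step (support collapse onto an $H$-orbit, $H(p')\le\log|H|<\log|G|$ with gap $\log[G:H]$, then identification of the surviving orbit with the cycle of Lemma~\ref{lem:symm-invariant}) is sound given the implicit finiteness and free-action assumptions. However, two of your links have genuine gaps. First, your baseline claim that $G$-equivariance plus a uniform marginal yields $I(X_t;X_{t+1})=0$ is false as stated: take $\mathcal{Z}=G=\mathbb{Z}_n$ with the deterministic rotation $T(z)=z+1$. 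The uniform measure is invariant, all group elements are ``equivalent,'' the marginal entropy is maximal $\log|G|$, yet $X_{t+1}$ is a deterministic function of $X_t$, so predictive information is maximal, not zero. A uniform marginal never by itself kills mutual information; to get ``no predictive power'' you must assume that the \emph{conditional} (predictive) law of the outcome given the past is itself $G$-invariant, hence uniform on the orbit --- i.e., the symmetry hypothesis has to be imposed on the channel, not derived from equivariance of the stationary distribution via a maximum-entropy appeal.

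Second, your closing step --- that restriction of the dynamics to $\gamma$ lowers $H(X_{t+1}\mid X_t)$ faster than $H(X_{t+1})$ because flow on a closed cycle is ``approximately deterministic up to phase'' --- is an additional dynamical assumption, not a consequence of anything in the hypotheses or of Lemma~\ref{lem:symm-invariant}; a cycle in homology constrains the topology of the support, not the stochasticity of transitions along it. The data-processing inequality announced in your plan is never actually invoked and would not supply this step either. You correctly identified this as the hard part, but as written it is asserted rather than proved: either state it explicitly as a hypothesis (e.g., transitions restricted to $\gamma$ are deterministic or have conditional entropy bounded by some $\epsilon$) or give an argument. With those two repairs --- symmetry imposed on the conditional law, and an explicit near-determinism assumption on the cycle dynamics --- your decomposition, together with the appeal to Theorem~\ref{thm:order-invariance} for order-independence, would deliver a defensible formalization of a lemma the paper itself leaves at the level of assertion.
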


The first lemma establishes that prediction becomes possible only after symmetry 
is broken, since this reduces entropy and reveals residual invariants. 
Yet the existence of invariants alone does not guarantee predictive efficiency: 
the system must further \emph{organize} these invariants into closed cycles 
that filter out order-dependent noise \cite{khona2022attractor}. In other words, broken symmetry provides 
the conditions for prediction, while the concentration of dynamics onto invariant 
cycles ensures that prediction is \emph{robust} and generalizable. 
The next lemma formalizes this refinement.

\begin{proposition}[Entropy Minimization Improves Prediction by Cycles]
\label{prop:prediction-simple}
Let a system generate trajectories in a state space $\mathcal{Z}$.
Suppose initially, the system has a symmetry $G$ (e.g.\ different orders of moves or observations are treated as equivalent). A perturbation breaks this full symmetry, but leaves behind
an invariant cycle $\gamma \subset \mathcal{Z}$ with $\partial \gamma = 0$.
Then we have:
1) The cycle $\gamma$ encodes what is stable across different orders or paths;
2) Predictions about future outcomes need only depend on $\gamma$ (and context),
not on the detailed order of past steps;
3) Thus, broken symmetry reduces noise from order-specific variations and improves
prediction by preserving only what remains invariant.
\end{proposition}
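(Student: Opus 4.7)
The plan is to prove the three claims sequentially by combining the topological invariance results already established (Lemma~\ref{lem:symm-invariant} and Theorem~\ref{thm:order-invariance}) with the entropy bound from Lemma~\ref{lem:entropy}, so that the proposition follows as a structural corollary rather than requiring fresh machinery. The overarching idea is that once $\gamma$ is identified as the residual invariant under the reduction $G \to H$, all predictive content must factor through $[\gamma]\in H_1(\mathcal{Z})$, and an elementary data-processing inequality then upgrades ``invariance'' into ``noise reduction.''

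First I would establish claim (1). Applying Lemma~\ref{lem:symm-invariant} to the perturbation that breaks $G$, the residual $H$-orbit structure forces the selected representative to lie on a closed cycle $\gamma$ with $\partial\gamma=0$, and $[\gamma]\in H_1(\mathcal{Z})$ is stable under further $\varepsilon$-perturbations. Next, to lift this pointwise stability to a statement about arbitrary generating sequences, I would invoke Theorem~\ref{thm:order-invariance}: any word $w$ of local moves whose concatenation produces a homing trajectory yields the same class $[\gamma_w]=[\gamma]$ regardless of order, because the Hurewicz map abelianizes $\pi_1$. Thus $\gamma$ literally is ``what is stable across different orders or paths,'' which is claim (1).

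For claim (2), I would cast prediction in conditional terms. Let $F$ denote a future event (or future state of the cycle), let $W$ denote the full ordered history of past moves, and let $[\gamma_W]$ denote its induced homology class. The order-invariance theorem implies that $[\gamma_W]$ is a deterministic function of the multiset of moves, so there is a factorization $W \to [\gamma_W] \to F$ along which the conditional distribution of $F$ depends on $W$ only through $[\gamma_W]$. By the data-processing inequality, $I(F;W\mid \mathrm{context}) = I(F;[\gamma_W]\mid \mathrm{context})$, i.e. no predictive information is lost by discarding order. Conditioning on context $\Psi$ and the invariant $[\gamma]$ therefore suffices, which is claim (2).

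Claim (3) would follow by combining Lemma~\ref{lem:entropy} with the factorization above: symmetry breaking reduces the outcome support from $|G|$ to $|G|/|H|$, lowering marginal entropy, and the order-specific fluctuations within a fiber of $W \mapsto [\gamma_W]$ become conditionally independent of $F$, so they contribute pure noise that is filtered out when one predicts via $\gamma$ rather than via $W$. Concretely, $H(F\mid [\gamma])\le H(F\mid W)+\text{(order noise)}$ collapses to equality of the informative parts, while the noise term is annihilated. The main obstacle I anticipate is making the ``order noise'' term rigorous without a probability model on the move sequences: the cleanest route is to declare the permutation action on $W$ as a nuisance group and invoke sufficiency of $[\gamma_W]$ for $F$ under any $G$-invariant prior, so that the entropy gap is exactly the conditional entropy of the permutation coset given $[\gamma_W]$. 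Everything else is bookkeeping on top of the already-proved invariance and entropy lemmas.
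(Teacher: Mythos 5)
The paper never actually proves Proposition~\ref{prop:prediction-simple}: it is stated as an interpretive bridge between Lemma~\ref{lem:entropy} and Corollary~\ref{cor:prediction}, with no argument attached. Your attempt to formalize it information-theoretically is therefore more ambitious than the source, and claims (1) and (3) are assembled from the right ingredients (Lemma~\ref{lem:symm-invariant}, Theorem~\ref{thm:order-invariance}, Lemma~\ref{lem:entropy}) in essentially the spirit the paper intends.

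There is, however, a genuine gap in your claim (2), and it is the load-bearing step. The data-processing inequality applied to the Markov chain $F \to W \to [\gamma_W]$ (where $[\gamma_W]$ is a deterministic function of $W$) gives only $I(F;[\gamma_W]\mid \Psi)\le I(F;W\mid \Psi)$. The equality you assert requires the reverse conditional independence $F \perp W \mid [\gamma_W],\Psi$, i.e.\ that the predictive law factors through the homology class --- but that is precisely the content of claim (2), not something Theorem~\ref{thm:order-invariance} can supply: that theorem is a purely topological statement about closed paths having order-independent classes in $H_1$, and says nothing about how future outcomes $F$ depend on the history. Your proposed repair (treat permutations as a nuisance group and invoke sufficiency under a $G$-invariant prior) only delivers sufficiency of the $G$-orbit of $W$, i.e.\ the \emph{multiset} of moves; the homology class is strictly coarser than the multiset (oppositely oriented traversals cancel in $H_1$, and different multisets can share a class), so sufficiency of $[\gamma_W]$ does not follow from exchangeability alone. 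To close the argument you must add an explicit hypothesis --- e.g.\ that the conditional law of $F$ given the past is invariant not just under reordering but under any modification of $W$ preserving the net $1$-chain (equivalently, that the generative model factors through $H_1(\mathcal{Z})$) --- after which the DPI equality and your entropy bookkeeping in (3) go through. Given the informal register of the proposition this is a fixable omission, but as written the central step assumes what it sets out to prove.
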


The previous lemma establishes how, at the local level of trajectories,
symmetry breaking reduces entropy by filtering away order-dependent noise,
leaving behind invariant cycles that encode predictive regularities. 
To extend this observation to the global dynamics of a non-ergodic system,
we must recognize that prediction is reliable only when the evolving 
probability measure itself concentrates on such cycles. 
This concentration formalizes the shift from transient fluctuations 
to persistent invariants, yielding the following corollary.

\begin{corollary}[Prediction as Concentration on Cycles]
\label{cor:prediction}
For a non-ergodic system $(X,T)$, prediction is possible if and only if 
the probability measure $\mu_t$ concentrates on invariant cycles 
$[\gamma] \in H_k(X)$ as $t \to \infty$. 
Equivalently:
\[
\text{Prediction} \;\;\Longleftrightarrow\;\;
\text{Entropy Reduction by Symmetry Breaking} \;\;\Longleftrightarrow\;\;
\text{Measure Concentration on Cycles}.
\]
Thus, the structural invariants of broken symmetry are precisely the 
carriers of predictive information, ensuring reliability of memory and 
generalization across time.
\end{corollary}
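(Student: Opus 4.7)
The plan is to establish the corollary as a chain of equivalences by directly leveraging the preceding results, with the operational content of ``prediction is possible'' being formalized first. I would define prediction to mean the existence of a measurable forecast map $F_s$ such that the conditional entropy $H(x_{t+s}\mid x_{0:t})$ remains uniformly bounded in $t$; equivalently, that the push-forward measures $\mu_t=T^t_\ast \mu_0$ accumulate on a nontrivial $T$-invariant support $K\subset X$ whose topological skeleton is independent of $t$. This standardizes the three clauses of the corollary in the common language of asymptotic concentration and lets each preceding result slot in cleanly.

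Next I would run the forward chain \emph{Measure Concentration on Cycles $\Rightarrow$ Entropy Reduction by Symmetry Breaking $\Rightarrow$ Prediction}. Assuming $\mu_t$ concentrates on $[\gamma]\in H_k(X)$, Proposition~\ref{prop:nonergodic-invariance} gives $T_\ast[\gamma]=[\gamma]$, so the support is dynamically invariant. Lemma~\ref{lem:symm-invariant} then identifies $[\gamma]$ as the residue of a broken symmetry $G\to H\subset G$, and Lemma~\ref{lem:entropy} turns this reduction into the entropy drop $H'<H=\log|G|$. Finally Proposition~\ref{prop:prediction-simple} produces a forecast map depending only on $[\gamma]$ (and context), supplying the required $F_s$. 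Each arrow is a one-line appeal to a previously stated result, so no new analysis is needed here.

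The converse direction, \emph{Prediction $\Rightarrow$ Measure Concentration on Cycles}, is where the main obstacle lies, and I would argue it contrapositively. Suppose $\mu_t$ fails to concentrate on any nontrivial homology class. Then either its support remains full-dimensional (the ergodic-like regime), in which case conditional entropy stays at $\log|G|$ and no nontrivial $F_s$ exists, or the support is $T$-invariant but homologically trivial, i.e.\ a boundary $\partial\beta$ of some higher chain. Lemma~\ref{lemma:dot-cycle} then forces the trivial alternative of the dot--cycle dichotomy: the support collapses into $H_0$ components carrying no relational structure, so any candidate $F_s$ degenerates to a context-free guess. The subtle point is ruling out \emph{pseudo-prediction} along transient open chains that have not yet closed; for this I would invoke Poincar\'e recurrence on the $T$-invariant support together with the closure condition $\partial\gamma=0$ guaranteed by the inverted-inference lemma, so that any stably forecastable structure must survive as a nontrivial class in $H_k(X)$. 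Combining the forward and backward directions yields the biconditional, and I would close by noting that $\log|G/H|$ furnishes a quantitative dual between entropy reduction and cycle persistence, making the three equivalences not merely logical but informationally commensurate.
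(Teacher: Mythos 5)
The paper offers no formal proof of this corollary: it is asserted as an informal consequence of Lemma~\ref{lem:entropy} and Proposition~\ref{prop:prediction-simple}, with the surrounding prose doing the work. Your forward chain (concentration $\Rightarrow$ residual invariants $\Rightarrow$ entropy drop $\Rightarrow$ forecast depending only on $[\gamma]$ and context) is exactly that informal chaining, and attempting to pin down ``prediction is possible'' via bounded conditional entropy before arguing is a genuine improvement over the paper's presentation. The trouble is the converse direction, which is where you correctly locate the burden but where your argument would fail.

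Your contrapositive rests on a false dichotomy. If $\mu_t$ fails to concentrate on a nontrivial class in $H_k(X)$, it does not follow that either the support stays full-dimensional with conditional entropy pinned at $\log|G|$, or that any forecast ``degenerates to a context-free guess.'' A system whose measure concentrates on a single attracting fixed point (or any contractible invariant set) is perfectly predictable under your own definition --- conditional entropy tends to zero --- yet its support carries only trivial homology; Lemma~\ref{lemma:dot-cycle} classifies such supports as dots in $H_0$, but that classification is a cognitive gloss, not a theorem that dots cannot support forecasting. So under the bounded-conditional-entropy reading, Prediction does \emph{not} imply concentration on nontrivial cycles, and the biconditional as stated needs either a restricted notion of prediction (e.g., prediction of genuinely recurrent, non-transient structure) or an explicit inclusion of $k=0$ classes, which the paper's dot--cycle dichotomy is at pains to exclude. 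Two further steps would not survive scrutiny: Poincar\'e recurrence requires a finite $T$-invariant measure, which is precisely what the non-ergodic setting withholds ($\mu_t$ evolves and is not preserved), so it cannot be invoked to rule out ``pseudo-prediction''; and the claim that full-dimensional support forces conditional entropy to remain at $\log|G|$ is unsubstantiated (zero-entropy systems with full-support invariant measures, such as irrational rotations, are predictable). In short, your forward direction matches the paper's intent, but the ``only if'' half needs either a sharpened definition of prediction tied to recurrence of nontrivial structure or must be conceded as a heuristic equivalence, as the paper itself implicitly does by omitting a proof.
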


The corollary identifies prediction with the global concentration of measure 
on invariant cycles, showing that reliability emerges only when dynamics 
collapse onto such persistent structures. 
To understand how these cycles arise in the first place, we return to the 
local mechanism: symmetry breaking. 
Whenever a perturbation forces the system to resolve among equivalent 
alternatives, the act of breaking symmetry does not erase the discarded 
possibilities but organizes them into a recurrent cycle. 
This cycle both stabilizes the chosen outcome and retains access to its 
counterfactuals, thereby generating the invariant cycles that ultimately 
carry predictive information.
The information-theoretic account emphasizes how symmetry breaking and entropy 
minimization funnel dynamics onto invariant cycles, thereby transforming 
uncertainty into predictive stability. Yet, this description remains at the level 
of abstract measures and distributions. 

\paragraph{Content variable $\Phi$ as low-entropy homology.}
Within CCUP, the content variable $\Phi$ corresponds to information that is 
both specific and stable. Mathematically, $\Phi$ is identified with 
nontrivial homology classes: cycles $[\gamma] \in H_k(\mathcal{Z})$ that 
cannot be reduced to boundaries. Such cycles encode persistent, 
low-entropy structures because many possible trajectories or micro-states 
collapse into the same equivalence class. In neural terms, $\Phi$ reflects 
patterns of activity that recur reliably across different contexts, such as 
a learned motor primitive, a familiar spatial route, or a well-established 
object representation. By filtering away order-dependent variability, 
$\Phi$ preserves only the invariant relational structure that remains after 
symmetry breaking. This makes $\Phi$ the stable substrate of memory and the 
carrier of predictive power: once identified, it can be recalled, reused, 
and composed into higher-order cognitive structures.

\paragraph{Context variable $\Psi$ as high-entropy scaffolding.}
In contrast, the context variable $\Psi$ captures the transient, exploratory, 
and often noisy aspects of cognition. Topologically, $\Psi$ is associated with 
trivial cycles or short-lived features in the persistence barcode: loops that 
quickly vanish under perturbation or deformation. These cycles act as 
\emph{scaffolding}, supporting the discovery and stabilization of $\Phi$ but 
not themselves persisting as memory. In information-theoretic terms, 
$\Psi$ is high-entropy: it reflects a large space of possibilities, many of 
which will be pruned away as the system concentrates its measure on 
low-entropy $\Phi$ structures. Biologically, $\Psi$ is implemented by 
slow, contextual rhythms (e.g.\ theta oscillations) or exploratory neural 
activity that supplies diverse scaffolds for binding. Through dynamic 
alignment and phase-resetting, these high-entropy contextual structures are 
folded into persistent content loops, allowing cognition to maintain 
flexibility while ensuring stability in memory formation.

Taken together, these complementary roles highlight the dual nature of CCUP (refer to Fig. \ref{fig:memory}): 
$\Phi$ encodes stability through nontrivial cycles, while $\Psi$ supplies the 
flexibility of transient scaffolds. This duality between $\Phi$ and $\Psi$ can be formalized as an ordering 
principle: stable low-entropy structures must emerge first, while 
high-entropy contextual scaffolds provide the variability that refines 
and specifies them, leading to the following theorem.

\begin{principle}[Structure-Before-Specificity Principle under CCUP]
\label{prin:structure-specificity}
Let $\Phi$ denote low-entropy content variables corresponding to 
nontrivial homology classes $[\gamma]\in H_k(\mathcal{Z})$, and let 
$\Psi$ denote high-entropy contextual scaffolds corresponding to 
transient or trivial cycles. Then cognition obeys the following principle:
1) (\textbf{Structure before specificity}) Stable content $\Phi$ 
arises from nontrivial cycles that persist across perturbations. 
These cycles define the backbone of memory and predictive power.
2) (\textbf{Specificity from scaffolding}) Context $\Psi$ supplies 
a high-entropy exploratory substrate: transient cycles that may 
collapse but provide the variability needed to refine, adapt, or 
recombine $\Phi$.
3) (\textbf{Dynamic alignment}) The interaction of $\Psi$ and $\Phi$ 
via cycle closure ($\partial^2=0$) ensures that contextual exploration 
is funneled into persistent content loops, transforming noisy scaffolds 
into stable memory traces.
\end{principle}

In this formulation, cognition develops by first establishing stable 
low-entropy homology classes ($\Phi$), upon which higher-entropy contextual 
scaffolds ($\Psi$) can later impose specificity. This ordering guarantees 
that adaptability is grounded in persistent structure rather than in 
transient variability alone. The crucial step is to explain how these 
two layers, persistent content cycles and transient contextual scaffolds, are 
dynamically aligned in a way that preserves stability while allowing 
flexibility. Topological closure offers the key: by enforcing the identity 
$\partial^2=0$, closure ensures that exploratory scaffolds do not remain 
open-ended fluctuations but are either pruned away or collapsed into 
well-defined loops that contribute to memory. In other words, closure 
provides a homological filter that converts noisy, high-entropy scaffolding 
into reproducible, low-entropy cycles of content. The following proposition 
makes this alignment explicit by showing how biological systems implement 
closure through oscillatory phase coding and coincidence detection, thereby 
realizing the CCUP principle in neural dynamics.

\begin{proposition}[Biological Implementation of Dynamic Alignment]
In CCUP, content variables $\Phi$ correspond to nontrivial homology classes 
(persistent cycles), while context $\Psi$ corresponds to transient or trivial cycles. 
Biologically, dynamic alignment of $\Psi$ and $\Phi$ is realized by:
1) Oscillatory phase coding, where slow rhythms (e.g.\ theta) provide contextual scaffolds and fast rhythms (e.g.\ gamma) encode content, with alignment achieved through phase locking;
2) Coincidence detection, where neurons fire only when contextual and content inputs arrive synchronously, 
enforcing boundary cancellation ($\partial^2=0$) and stabilizing closed loops.
Thus, neural dynamics implement CCUP by aligning transient contextual scaffolds with persistent 
content cycles, ensuring memory persistence and predictive efficiency.
\end{proposition}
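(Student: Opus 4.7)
The plan is to establish the proposition along two parallel tracks corresponding to its two claims, and then combine them by invoking Lemma~\ref{lemma:dot-cycle} and Principle~\ref{prin:structure-specificity} to conclude CCUP implementation. The overall strategy is to exhibit, for each biological mechanism, an explicit map from neural dynamics into the chain complex $(C_\bullet(\mathcal{Z}),\partial)$ such that the mechanism induces the required algebraic structure on homology, thereby promoting neural metaphors into bona fide constructions.

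For claim (1) on phase coding, I would first model the slow rhythm as a continuous phase map $\theta:\mathbb{R}\to S^1_\theta$ generating a contextual scaffold $\Psi$ that parameterizes a family of fast-cycle fibers $\{S^1_\gamma(\vartheta)\}_{\vartheta\in S^1_\theta}$. A gamma burst then corresponds to a 1-chain whose endpoints project to a common $\vartheta$, and cross-frequency phase locking is formalized as a section of the fibration $S^1_\gamma \hookrightarrow E \to S^1_\theta$. Alignment amounts to showing that this section descends to a nontrivial class in $H_1(E)$ whenever the fast burst closes within one slow period, which follows by direct computation on the nested torus $S^1\times S^1$. The key step is to show that without phase locking, gamma bursts produce open chains with nonzero $\partial$ that collapse under the dot-cycle dichotomy of Lemma~\ref{lemma:dot-cycle}, whereas phase-locked bursts close and thus populate $H_1$, identifying $\Psi$-carried context and $\Phi$-carried content as the two factors of the coupled cycle.

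For claim (2) on coincidence detection, I would model a detector neuron as a thresholded operator $\mathcal{C}_\tau$ that emits a spike only when incident input chains $\sigma_1,\sigma_2\in C_1$ satisfy $|t(\sigma_1)-t(\sigma_2)|<\tau$, where $t(\cdot)$ denotes arrival time at a shared node. Algebraically, $\mathcal{C}_\tau$ suppresses any pair whose endpoints fail to coincide, which is equivalent to enforcing $\partial(\sigma_1+\sigma_2)=0$ at the junction. Iterating across a polychronous neural group, chains of coincidence-gated spikes satisfy $\partial^2=0$ because boundary contributions cancel pairwise at each synaptic node. This is precisely the biological realization of the identity invoked in Lemma~\ref{lemma:dot-cycle}, so the surviving support of activity is by construction the subcomplex of closed 1-cycles, i.e.\ the carriers of $\Phi$.

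The hard part will be reconciling biological timing tolerance with algebraic exactness: real coincidence windows are finite, so $\partial^2\neq 0$ strictly, and one must argue that the induced homology is nonetheless well-defined. I plan to address this by invoking persistence stability, which tolerates $\varepsilon$-perturbations in the filtration, and by arguing that STDP acts as a descent on timing jitter that concentrates the empirical measure onto exact cycles in the long-run limit. Composing the two tracks then yields the final clause: phase-locking aligns $\Psi$-scaffolds with $\Phi$-content, coincidence detection prunes non-closed fragments, and together they instantiate the dynamic alignment asserted by Principle~\ref{prin:structure-specificity}, recovering Corollary~\ref{cor:prediction} at the biological level.
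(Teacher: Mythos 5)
Your proposal is correct and follows essentially the same route the paper takes: the paper states this proposition without a dedicated proof and supports it in Section~\ref{sec:3} via Lemma~\ref{lem:phase-closure} (theta/gamma phase coding as a cyclic scaffold, with the nested torus $S^1_\theta\times S^1_\gamma$ and winding classes in $H_1$) and Lemma~\ref{lem:coincidence-closure} (coincidence detection as projection onto the cycle space $Z_1=\ker\partial$, with trial-invariance under phase jitter $\varepsilon<\Delta$ handled by a persistence module over the window parameter), which is exactly your two-track construction including the persistence-stability fix for finite windows. One small phrasing caveat: finite coincidence windows threaten exact closure of empirical chains ($\partial\gamma\neq 0$), not the nilpotency identity $\partial^2=0$ itself, and your remedy (filtration over the window width plus STDP concentration) is precisely how the paper resolves that issue.
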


\begin{figure*}[t]
\centering
\begin{tikzpicture}[
  >=Latex,
  scale=1.0,
  every node/.style={font=\small},
  box/.style={rounded corners=6pt, draw=gray!60, line width=0.8pt, minimum width=4.6cm, minimum height=4.2cm},
  title/.style={font=\small\bfseries, text=gray!70},
  faintloop/.style={gray!55, line width=0.7pt},
  strongloop/.style={blue!70, line width=1.2pt},
  alignarr/.style={-Latex, thick, gray!65},
  win/.style={line width=2.2pt, green!60!black, line cap=round},
  barcode/.style={line width=2.0pt},
  barfaint/.style={gray!60, barcode},
  barstrong/.style={blue!70, barcode}
]

\coordinate (A) at (-6.1,0);
\coordinate (B) at (0,0);
\coordinate (C) at (6.1,0);

\draw[gray!55, line width=0.6pt] (-7.6,2.9) -- (7.6,2.9);
\node[title] at (-0.2,3.25) {entropy};
\draw[->, thick, red!70] (-7.4,3.0) -- (-3.8,3.0) node[midway, above] {\scriptsize high};
\draw[->, thick, orange!70] (-3.8,3.0) -- (-0.2,3.0);
\draw[->, thick, green!70!black] (-0.2,3.0) -- (3.4,3.0);
\draw[->, thick, blue!70] (3.4,3.0) -- (7.0,3.0) node[midway, above] {\scriptsize low};

\node[box, minimum width=5.2cm] (Left) at (A) {};
\node[title] at ([yshift=2.4cm]A) {$\Psi$ \, (context): high-entropy scaffolds};

\foreach \x/\y/\r in {-7.7/0.8/0.5, -6.7/1.2/0.7, -5.5/0.5/0.6, -6.0/-0.9/0.7, -7.0/-0.4/0.5, -5.1/1.0/0.5, -6.4/0.1/0.4}{
  \draw[faintloop] (\x,\y) .. controls (\x+\r, \y+0.3) and (\x-\r, \y-0.2) .. (\x,\y);
}
\draw[faintloop] (-5.2,-0.2) .. controls (-5.0,0.2) .. (-4.8,0.0);
\draw[faintloop] (-7.3,0.1) .. controls (-7.0,-0.2) .. (-6.7,-0.1);

\draw[barfaint] (-7.5,-1.7) -- (-7.5,-1.3);
\draw[barfaint] (-7.1,-1.7) -- (-7.1,-1.45);
\draw[barfaint] (-6.7,-1.7) -- (-6.7,-1.35);
\draw[barfaint] (-6.3,-1.7) -- (-6.3,-1.5);
\draw[barfaint] (-5.9,-1.7) -- (-5.9,-1.4);
\draw[barfaint] (-5.5,-1.7) -- (-5.5,-1.55);
\draw[barfaint] (-5.1,-1.7) -- (-5.1,-1.45);
\node[gray!60] at (-6.3,-2.2) {\scriptsize persistence barcode (transient)};

\node[box, minimum width=5.2cm] (Mid) at (B) {};
\node[title] at ([yshift=2.4cm]B) {Alignment: phase coding $+$ coincidence ($\partial^2\!=\!0$)};

\def\R{1.4}
\draw[gray!65, line width=1.0pt] (0,0.2) circle (\R);
\node[gray!70] at (0,1.95) {\scriptsize $\theta$ phase on $S^1$};

\def\a{30} \def\b{90}
\draw[win] ({\R*cos(\a)},{0.2+\R*sin(\a)}) arc (\a:\b:\R);

\foreach \ang/\lbl in {10/$g_1$, 60/$g_2$, 140/$g_3$, 220/$g_4$, 300/$g_5$}{
  \draw[blue!70, line width=1.1pt] ({0.8*\R*cos(\ang)},{0.2+0.8*\R*sin(\ang)}) --
                                   ({0.98*\R*cos(\ang)},{0.2+0.98*\R*sin(\ang)});
}

\node[green!60!black] at ({0.2+1.6*\R*cos(60)},{-0.2+1.6*\R*sin(60)}) {\scriptsize coincidence window $\Delta$};

\draw[->, gray!65, line width=0.9pt] (0,-1.5) .. controls (2.0,-1.7) .. (0,1.8);
\node[gray!65] at (2.1,-1.6) {\scriptsize closure $\Rightarrow~\partial^2=0$};

\node[box, minimum width=5.2cm] (Right) at (C) {};
\node[title] at ([yshift=2.4cm]C) {$\Phi$ \, (content): low-entropy homology};

\draw[strongloop] (4.8,0.9) .. controls (5.4,1.5) and (6.0,0.3) .. (5.5,0.0) .. controls (5.0,-0.3) and (4.6,0.3) .. (4.8,0.9);
\draw[strongloop] (6.2,0.6) .. controls (6.7,1.2) and (7.3,0.7) .. (7.2,0.1) .. controls (7.1,-0.5) and (6.4,-0.2) .. (6.2,0.6);

\draw[barstrong] (4.7,-1.7) -- (4.7,-1.0);
\draw[barstrong] (5.2,-1.7) -- (5.2,-0.9);
\draw[barstrong] (5.7,-1.7) -- (5.7,-0.85);
\draw[barstrong] (6.2,-1.7) -- (6.2,-0.95);
\draw[barstrong] (6.7,-1.7) -- (6.7,-0.9);
\node[blue!70] at (6.0,-2.2) {\scriptsize persistence barcode (invariants)};

\draw[alignarr] (-3.5,-0.1) -- (-1.3,-0.1) node[midway, above] {\scriptsize phase-resetting};
\draw[alignarr] (1.3,-0.1) -- (3.5,-0.1) node[midway, above] {\scriptsize boundary cancellation};

\end{tikzpicture}
\caption{\textbf{Memory consolidation via cycle closure: Context $\Psi$ to Content $\Phi$ via Cyclic Alignment.}
\textbf{Left:} High-entropy context ($\Psi$) consists of many transient, order-dependent
loops (short persistence bars). \textbf{Middle:} Oscillatory phase coding (theta ring)
and coincidence detection (window $\Delta$) align events by phase and enforce
closure ($\partial^2=0$). \textbf{Right:} Low-entropy content ($\Phi$) emerges as a
small set of persistent cycles (long bars), i.e., nontrivial homology classes that
serve as memory-bearing invariants.}
\label{fig:memory}
\end{figure*}
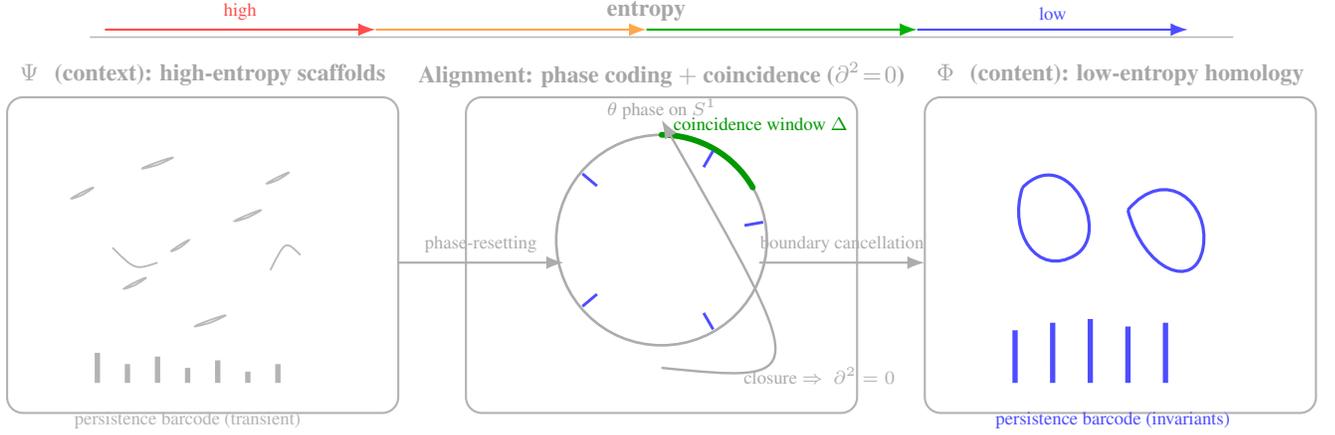

\noindent\textbf{Remark:}
Memory consolidation implements a topological dichotomy: transient 
experiences collapse into trivial 0-cycles (dots), while meaningful and 
predictive structures survive as nontrivial 1-cycles (cycles).
This dichotomy reflects a key insight of the context-content uncertainty principle (CCUP) \cite{li2025CCUP}: 
\textbf{nontriviality in homology encodes the low-entropy content variable 
$\Phi$, while trivial cycles may reflect transient contextual scaffolding 
$\Psi$}. In other words, persistence in $H_1$ corresponds to content that 
survives consolidation and supports prediction, whereas it collapses into $H_0$ 
marks the fate of high-entropy scaffolds that fail to stabilize. This 
interpretation situates the dot-cycle distinction not merely as a topological 
artifact, but as the algebraic expression of how cognition balances 
adaptability (through high-entropic $\Psi$) and stability (through low-entropic $\Phi$).

To understand how dynamic alignment is 
realized in neural systems, we must enrich the framework with topological and 
biological structure. Polychronous neural groups (PNGs) provide the substrate: 
they are defined by precise, delay-locked spike-timing patterns that naturally 
form cycles in neural state space. Their organization rests on two complementary 
mechanisms. First, oscillatory phase coding supplies a cyclic parameterization of 
time, embedding spikes into recurrent cycles and providing the contextual scaffold 
($\Psi$). Second, coincidence detection enforces boundary cancellation by 
filtering out misaligned spikes and stabilizing only those closed trajectories 
that persist as content cycles ($\Phi$). Together, these mechanisms instantiate 
the algebraic principle $\partial^2=0$ in the brain: oscillations define the 
circle on which time is wrapped, and coincidence detection prunes away dangling 
boundaries, ensuring that only invariant cycles survive. The next section develops 
this \emph{information-topological framework} for biological cycle formation, 
showing how PNGs, through phase coding and coincidence detection, implement the 
core insight that cognition arises from the dynamic alignment of context and 
content in nested, hierarchical cycles.

\section{Information Topological Framework for Scaffolding and Closure}
\label{sec:3}

Oscillations discretize time on a circle ($S^1$), providing phase bins within which coincidence detection collapses fragments into recurrent traversals. 
Mathematically, the boundary calculus enforces this filtration: $\partial^2=0$ cancels unmatched endpoints so that only closed chains survive as persistent cycles. 
Cognitively, isolated tokens (dots) do not stabilize memory; only when linked by contextual relations into cycles do they consolidate as durable traces.
In this section, we show how oscillatory phase coding and coincidence detection implement temporal scaffolding and boundary cancellation in spiking networks, turning temporal fragments into cycles. 

\subsection{Oscillation Phase Coding as Temporal Scaffolding}

Neural oscillations instantiate the closure principle by quotienting linear time to a circle: an oscillator implements $t \mapsto e^{i\omega t}\in S^1$, so events are registered by \emph{phase} rather than absolute time.
Biologically, this scaffold is realized at multiple, coupled timescales. 
(i) \emph{Theta–gamma nesting} (e.g., hippocampus–entorhinal) provides a macrocycle ($\theta$, 4–12 Hz) that segments experience and a microcycle ($\gamma$, 30–100 Hz) that tiles each $\theta$ bin with ordered subevents; phase–amplitude coupling thus lays out a toroidal code $S^1_\theta \times S^1_\gamma$ in which winds index recurrent cycles \cite{LismanJensen2013,CanoltyKnight2010,Canolty2006PNAS}.
(ii) \emph{Coincidence detection} sharpens edges of these cycles: NMDA nonlinearity, backpropagating spikes, and fast interneuron circuitry (PV/ING, PING) create narrow $\mathcal{O}(1\!-\!10\text{ ms})$ windows so that only spikes aligned within a phase bin form effective synaptic links; misaligned fragments fail to bind and are pruned \cite{KonigEngelSinger1996,StuartSakmann1994,BuzsakiWang2012}.
(iii) \emph{Spike-timing dependent plasticity (STDP)} orients these links by phase lead/lag, turning phase offsets into directed edges in a chain; repeated traversal within a cycle consolidates these edges, canceling stray endpoints and favoring closed walks \cite{Markram1997Science,BiPoo1998,CaporaleDan2008}.
(iv) \emph{Conduction delays and myelin plasticity} tune effective phase lags, enabling polychronous assemblies: axonal/dendritic delays align distributed spikes into reproducible phase patterns that complete cycles despite spatial dispersion \cite{Izhikevich2006,PajevicBasserFields2014,Fields2015NRN}.
(v) \emph{Phase-of-firing coding and precession} (e.g., hippocampal place cells) map position or task progress to phase on $S^1_\theta$, so that a behavioral episode corresponds to a return map on the Poincaré section; complete laps close in phase space, incomplete traversals do not \cite{OKeefeRecce1993,Montemurro2008CB}.
(vi) \emph{State-dependent reentry} (sharp-wave ripples during NREM/quiet wake) replays phase-ordered sequences on a faster carrier, tightening weights along already-closed paths and suppressing nonclosing detours \cite{FosterWilson2006,DibaBuzsaki2007}.

\noindent\emph{Interpretation.}
Oscillations supply the contextual scaffold $\Psi$ that folds timelines into cyclic coordinates; coincidence and plasticity then implement boundary cancellation in synaptic space.
What persists are cycles, phase-locked traversals whose endpoints identify on $S^1$, while unmatched fragments dissipate.
This sets up the formal lemma below, which recasts phase-binned spiking as a chain whose boundary vanishes after a full cycle.

\begin{lemma}[Oscillatory Phase Coding as Temporal Scaffolding]
\label{lem:phase-closure}
Let $\theta(t) = \omega t \pmod{2\pi}$ denote the phase of a neural oscillator, 
with events encoded relative to $\theta(t)$ on the circle $S^1$. Then oscillatory 
phase coding induces the following invariants:
1) \textbf{Binding:} Events occurring within the same phase window 
    $\theta(t) \in [\phi, \phi+\Delta]$ are grouped together, forming a coherent 
    representation;
2) \textbf{Ordering:} Sequences of events are represented by their relative 
    phase offsets $(\Delta \theta_1,\Delta \theta_2,\dots)$, embedding linear order 
    into a cyclic scaffold;
3) \textbf{Closure:} After a full cycle $\theta(t+T)=\theta(t)$ with 
    $T=\tfrac{2\pi}{\omega}$, the system resets, ensuring that trajectories are 
    organized into cycles rather than unbounded chains.
Together, these properties enforce the topological identity 
$\partial^2 = 0$ at the temporal level: the boundary of one temporal segment 
becomes the beginning of the next, so that each cycle closes before a new one 
begins. Consequently, oscillatory phase coding guarantees consistency of 
memory traces by embedding them in recurrent temporal cycles.
\end{lemma}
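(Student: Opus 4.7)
The plan is to recast the three claims as geometric properties of the circle $S^1$ induced by the oscillator phase, and then to verify the boundary identity at the chain level by exploiting periodicity. First I would model time as $\mathbb{R}$ and introduce the quotient map $p:\mathbb{R}\to S^1$ defined by $t\mapsto e^{i\omega t}$, so that every event $e_j$ occurring at time $t_j$ is identified with its phase $\theta_j = p(t_j)\in S^1$. This converts the linear timeline into a cyclic coordinate system on which binding, ordering, and closure can be formalized simultaneously, rather than treated as three independent assertions.

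Next I would establish binding and ordering as direct consequences of this quotient. For binding, I partition $S^1$ into arcs of width $\Delta$ indexed by $\phi$: the set $U_\phi = \{\theta\in S^1:\theta\in[\phi,\phi+\Delta]\}$ defines an equivalence class of 0-simplices, so any two events with phases in $U_\phi$ are grouped into a single coherent representation. For ordering, given a finite sequence of events $\{e_j\}$ with phases $\{\theta_j\}$, I define directed 1-chains $\sigma_j$ as arcs from $\theta_j$ to $\theta_{j+1}$ with signed length equal to the relative phase offset $\Delta\theta_j=\theta_{j+1}-\theta_j \pmod{2\pi}$. The abelianization argument from Theorem~\ref{thm:order-invariance} then guarantees that the homology class of the concatenated chain depends only on the multiset of phase offsets, embedding linear order into a well-defined cyclic scaffold.

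The main step, and the principal obstacle, is to turn the informal notion of closure into the chain-level identity $\partial^2=0$. Here I would exploit the periodicity $\theta(t+T)=\theta(t)$: any sequence of events traversing a full period generates a 1-chain $\gamma = \sum_j \sigma_j$ whose boundary telescopes to zero, because each interior endpoint appears once with each sign while the terminal phase coincides with the initial one after one revolution. Thus $\gamma \in Z_1(S^1)$ is a genuine cycle, and any 2-chain $\beta$ assembled from phase-aligned coincidences satisfies $\partial\beta \in Z_1$, giving $\partial^2\beta=0$ automatically. The delicate point is justifying that the biological construction, namely coincidence detection within narrow phase windows, really produces 2-chains whose boundaries are the closed 1-chains generated by phase coding, rather than merely asserting closure abstractly. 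To address this I would appeal to the $\Delta$-complex structure on $S^1$ induced by the cover $\{U_\phi\}$, so that the oscillator-induced chain complex inherits $\partial^2=0$ from the cellular homology of the circle itself, tying binding (0-cells), ordering (1-cells), and closure (the generator of $H_1(S^1)$) into a single topological invariant.
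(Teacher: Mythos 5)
Your proposal is correct and follows essentially the same route as the paper's own justification: quotient linear time to $S^1$ via the phase map, treat phase bins as vertices (binding), form oriented edges between consecutive bins whose signed offsets encode order, and observe that after a full $2\pi$ sweep the boundary telescopes to zero ($\partial c = v_{L+1}-v_1 = 0$), with coincidence detection gating which edges exist. Your extra remark that $\partial^2=0$ is inherited from the cellular ($\Delta$-complex) structure of $S^1$ is a slightly more explicit version of the paper's appeal to $[c]\in H_1(S^1)\cong\mathbb{Z}$, not a different argument.
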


The formal statement of Lemma~\ref{lem:phase-closure} captures how oscillatory 
phase coding transforms linear time into a cyclic scaffold, guaranteeing binding, 
ordering, and closure. To visualize this principle, 
Fig.~\ref{fig:phase-coding-closure} illustrates how linear time $t$ is wrapped 
onto the circle $S^1$ (theta phase), with discrete gamma packets embedded at 
distinct phases. Events that fall into the same phase window (green arc) are 
bound together, while relative phase offsets encode ordering. The reset at the 
end of each $\theta$ cycle ensures closure, embodying the algebraic identity 
$\partial^2=0$ in biological timekeeping.
Let $\theta:\mathbb{R}\!\to\!S^1$ be the phase map $\theta(t)=\omega t \!\!\mod 2\pi$, so $T=\tfrac{2\pi}{\omega}$ identifies $t \sim t+T$ and quotients linear time to a circle. 
Partition $S^1$ into $L$ phase bins $\{\varphi_\ell\}_{\ell=1}^L$ and let $v_\ell$ denote the (phase-binned) latent state aggregated within bin $\varphi_\ell$. 
Define oriented edges $e_\ell=[v_\ell,v_{\ell+1}]$ with $v_{L+1}\equiv v_1$. 
The phase-ordered chain $c=\sum_{\ell=1}^L e_\ell$ has
$\partial c=\sum_{\ell=1}^L (v_{\ell+1}-v_\ell)=v_{L+1}-v_1=0$,
so a full $2\pi$ sweep closes into a $1$–cycle. Coincidence detection enforces this construction: only events aligned within a phase window of width $\varepsilon$ create edges, pruning stray fragments whose endpoints would otherwise fail to cancel. Conduction delays implement modular jumps $e_\ell=[v_\ell,v_{\ell+k}]$, yielding a winding number $k$ on $S^1$; after $L$ such steps the path returns to $v_1$, again giving $\partial c=0$ and a homology class $[c]\in H_1(S^1)\cong\mathbb{Z}$.

Two useful views follow. (i) \emph{Poincaré/return map:} sampling at phase $\phi_0$ defines $F_{\phi_0}:x(t)\!\mapsto\!x(t+T)$; fixed points and periodic points of $F_{\phi_0}$ are closed orbits on $S^1$, i.e., cycles \cite{poincare_map_neural}. (ii) \emph{Cross-frequency nesting:} with $\theta$ and $\gamma$ phases, time quotients to a torus $S^1_\theta\!\times\! S^1_\gamma$ and $H_1\!\cong\!\mathbb{Z}^2$; winds $(k_\gamma,k_\theta)$ encode hierarchical cycles \cite{theta_gamma_cfc_memory}. 
\emph{Summary.} Phase coding turns linear sequences into cyclic invariants: the absolute start/end times are identified on $S^1$, so boundaries telescope away and only closed traversals persist. Coincidence gates which edges exist; $\partial^2\!=\!0$ guarantees unmatched endpoints cannot accumulate into memory, while completed cycles survive as stable traces.

\begin{figure}[t]
\centering
\begin{tikzpicture}[
  >=Latex,
  scale=1.05,
  every node/.style={font=\small},
  th/.style={line width=1.1pt, gray!70},
  gpack/.style={line width=0.9pt, -{Latex[length=2.0mm]}},
  bindarc/.style={line width=2.2pt, green!60!black, line cap=round},
  guide/.style={gray!55, dashed, line width=0.6pt}
]

\draw[->,th] (-5.6,0) -- (-2.2,0) node[right] {$t$ (linear time)};
\foreach \x/\lab in {-5.2/$t_0$,-4.7/$t_1$,-4.2/$t_2$,-3.7/$t_3$,-3.2/$t_4$}{
  \draw (\x,0.12) -- (\x,-0.12) node[below=1.3mm] {\lab};
}
\def\R{2.2} \def\cx{2.8} \def\cy{0}
\foreach \x/\ang in {-5.2/140,-4.7/60,-4.2/330,-3.7/250,-3.2/180}{
  \draw[->,gray!60] (\x,0.35) .. controls (-1.9,1.2) .. ({\cx+1.72*cos(\ang)},{\cy+1.72*sin(\ang)});
}
\node[align=left] at (-4,2.) {\footnotesize Time wrapped onto $S^1$:\\[-1pt]\footnotesize $t \mapsto e^{i\omega t}$};

\draw[th] (\cx,\cy) circle (\R);
\node[gray!70] at (\cx,\cy+\R+0.5) {$\theta$ (phase on $S^1$)};

\foreach \ang/\lab in {0/0,90/\frac{\pi}{2},180/\pi,270/\frac{3\pi}{2}}{
  \draw[gray!50, line width=0.5pt] ({\cx+1.04*\R*cos(\ang)},{\cy+1.04*\R*sin(\ang)}) --
                                  ({\cx+0.96*\R*cos(\ang)},{\cy+0.96*\R*sin(\ang)});
  \node[gray!60] at ({\cx+1.28*\R*cos(\ang)},{\cy+1.28*\R*sin(\ang)}) {\scriptsize $\lab$};
}

\def\a{35} \def\b{85} 
\draw[bindarc] ({\cx+\R*cos(\a)},{\cy+\R*sin(\a)}) arc (\a:\b:\R);
\node[green!60!black] at ({\cx+1.35*\R*cos(60)},{\cy+1.35*\R*sin(60)}) {\scriptsize binding window};

\foreach \ang/\lbl in {20/$g_1$, 60/$g_2$, 140/$g_3$, 200/$g_4$, 300/$g_5$}{
  \draw[gpack, blue!70] ({\cx+0.82*\R*cos(\ang)},{\cy+0.82*\R*sin(\ang)}) --
                        ({\cx+0.98*\R*cos(\ang)},{\cy+0.98*\R*sin(\ang)});
  \node[blue!75] at ({\cx+1.15*\R*cos(\ang)},{\cy+1.15*\R*sin(\ang)}) {\scriptsize \lbl};
}

\draw[->,gray!65, line width=0.9pt] ({\cx+\R*0.1},{\cy-\R*1.18}) .. controls (\cx+3.8,-2.0) .. ({\cx+\R*0.1},{\cy+\R*1.18});
\node[gray!65] at (\cx+3.9,-0.1) {\scriptsize closure / reset ($\theta$ cycles)};

\draw[decorate, decoration={brace, amplitude=5pt}, gray!60] (-5.6,-0.8) -- (-2.2,-0.8)
  node[midway, below=6pt] {\scriptsize linear time};
\draw[decorate, decoration={brace, amplitude=5pt}, gray!60] (\cx-\R-0.6,-1.9) -- (\cx+\R+0.6,-1.9)
  node[midway, below=6pt] {\scriptsize cyclic parameterization of events on $S^1$};

\node[align=left] at (-1.2,1.6) {\footnotesize \textbf{Binding:} events in the same};
\node[align=left] at (-1.2,1.2) {\footnotesize \textbf{phase window} group together};
\node[align=left] at (-0.6,0.35) {\footnotesize \textbf{Ordering:} sequence $\;\sim$ phase offsets};
\node[align=left] at (-0.6,-0.55) {\footnotesize \textbf{Closure:} cycle resets $\Rightarrow~\partial^2\!=\!0$};
\end{tikzpicture}
\caption{\textbf{Oscillatory phase coding as topological closure.}
Linear time is wrapped onto the circle $S^1$ (theta phase), placing events by \emph{phase}
rather than absolute time. Gamma packets ($g_1,\dots,g_5$) at distinct phases encode \emph{order}
via phase offsets, while events within the same phase window (green arc) are \emph{bound}.
Cycle reset at the end of each theta period enforces topological closure ($\partial^2=0$),
supporting consistent memory cycles.}
\label{fig:phase-coding-closure}
\end{figure}

\subsection{Coincidence Detection as Topological Closure}

\noindent Lemma~\ref{lem:phase-closure} establishes that oscillatory phase 
coding furnishes a natural scaffold for aligning events on the circle $S^1$, 
ensuring that candidate trajectories can be organized into cyclic frames. 
However, phase alignment alone does not guarantee stability: without a 
mechanism to prune misaligned or inconsistent events, spurious boundaries 
would accumulate and prevent reliable cycle formation. 
Lemma~\ref{lem:coincidence-closure} addresses this gap by showing how 
coincidence detection enforces closure at the level of spike trains, 
cancelling mismatches as boundary terms and preserving only those 
cycles that survive across trials. Together, these results formalize 
the complementary roles of phase scaffolding and coincidence pruning in 
transforming transient alignments into reproducible cognitive invariants.

\begin{lemma}[Coincidence-Induced Closure and Survival of Reproducible Cycles]
\label{lem:coincidence-closure}
Let $\mathcal{N}=\{1,\dots,n\}$ be a set of units (neurons) producing a spike
train $S=\{(i,t_k)\}$ with phases $\phi(t_k)\in S^1$. Fix a coincidence window
$\Delta\in(0,\pi)$ and define the \emph{coincidence relation}
$i \stackrel{\Delta}{\leftrightarrow} j$ iff there exist spikes
$(i,t),(j,t')\in S$ with $|\phi(t)-\phi(t')|_{S^1}\le \Delta$ and
$t<t'$ (to orient time).
Construct the directed 1\mbox{--}skeleton $G_\Delta(S)$ whose vertex set is
$\mathcal{N}$ and whose (possibly multiple) oriented edges are
$e=(i\!\to\! j)$ for every coincident pair $i \stackrel{\Delta}{\leftrightarrow} j$.
Let $C_1(G_\Delta)$ be the free abelian group on edges and $C_0(G_\Delta)$ the
free abelian group on vertices, with boundary $\partial:C_1\to C_0$ given by
$\partial(i\!\to\! j)=j-i$.
Define the \emph{coincidence aggregation} $c_\Delta(S)\in C_1(G_\Delta)$ by
summing all oriented edges (with multiplicities) generated by coincident pairs,
and the \emph{coincidence projection}
$\Pi_\Delta: C_1(G_\Delta)\longrightarrow Z_1(G_\Delta):=\ker\partial$
as the (linear) projection onto the cycle space (e.g., orthogonal projection
with respect to any inner product on $C_\bullet$ or the canonical decomposition
$C_1=Z_1\oplus B_1^\perp$).
Then we have:
1) \textbf{Closure by coincidence.} The \emph{coincidence detector}
$K_\Delta:=\Pi_\Delta\circ(\cdot)$ enforces closure:
$z_\Delta(S):=K_\Delta\big(c_\Delta(S)\big)\in Z_1(G_\Delta)~\text{and}~
\partial\, z_\Delta(S)=0$.
Moreover, the edges removed by $K_\Delta$ are precisely those whose net
contribution appears in $\partial c_\Delta(S)$; i.e.\ misaligned spikes are
canceled as boundary terms and do not survive in $z_\Delta(S)$.
2) \textbf{Survival of reproducible cycles (stability).}
Suppose $S^{(1)},\dots,S^{(T)}$ are trials with phase jitter at most
$\varepsilon<\Delta$ (i.e.\ every coincidence in one trial has a matched
coincidence within phase distance $\varepsilon$ in all others, with the same
orientation). Then for all $t$,
$[z_\Delta(S^{(t)})]\;=\;[z_\Delta(S^{(1)})]\in H_1(G_\Delta;\mathbb{Z})$,
so the homology class is trial\mbox{-}invariant. In particular, in the
persistence module obtained by varying the window $\delta\in(\varepsilon,\Delta]$,
this class has positive lifetime and therefore \emph{survives} while
nonreproducible coincidences die as boundaries.
\end{lemma}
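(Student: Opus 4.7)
The plan is to split the two claims along their natural fault line: part (1) is a purely algebraic statement about the Hodge-like decomposition $C_1(G_\Delta)=Z_1\oplus Z_1^\perp$, while part (2) is a stability statement that exploits the window buffer $\Delta-\varepsilon>0$ to keep the underlying oriented 1-skeleton fixed across jittered trials. I would prove the parts in that order, since the trial-invariance argument reuses the projection identity established in part (1).

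\textbf{Part (1): closure and boundary pruning.} I first record the canonical splitting $C_1(G_\Delta)=Z_1\oplus Z_1^\perp$ induced by the chosen inner product on chains; by construction $\Pi_\Delta$ is the orthogonal projector onto $Z_1=\ker\partial$. Writing $c_\Delta(S)=z+r$ with $z\in Z_1$ and $r\in Z_1^\perp$, closure is immediate, since $\partial z_\Delta(S)=\partial\Pi_\Delta c_\Delta(S)=\partial z=0$. For the ``pruned part'' assertion, the same decomposition yields $\partial c_\Delta(S)=\partial r$, so every edge whose endpoint fails to cancel sits in the complement $r=c_\Delta(S)-z_\Delta(S)$. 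In spike-train language this says that misaligned spikes contribute exactly to $\partial c_\Delta(S)$ and are precisely what $K_\Delta$ removes, which is the algebraic content of ``boundary cancellation.''

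\textbf{Part (2): trial-invariance and persistence.} The key input is the jitter hypothesis $\varepsilon<\Delta$. By the triangle inequality on $S^1$, any pair of spikes coincident within window $\Delta$ in one trial remains coincident within window $\Delta$ in every other trial, and the hypothesis explicitly fixes the orientation of the matched pair. Hence the oriented edge multiset defining $c_\Delta(\cdot)$ is identical across trials, giving $c_\Delta(S^{(t)})=c_\Delta(S^{(1)})$ in $C_1(G_\Delta)$; linearity of $\Pi_\Delta$ then yields $z_\Delta(S^{(t)})=z_\Delta(S^{(1)})$ and a fortiori the equality of their classes in $H_1(G_\Delta;\mathbb{Z})$. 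For the persistence statement, I would view $\delta\mapsto G_\delta$ as a filtration indexed by the coincidence window: the reproducible class is born by $\delta=\varepsilon$ (every jittered coincidence is already present at that scale) and persists at least until $\delta=\Delta$, yielding a barcode interval of length at least $\Delta-\varepsilon>0$, whereas nonreproducible coincidences appear in only one trial and are killed by the projection as boundary terms.

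\textbf{Main obstacle.} The delicate step is ensuring that phase jitter cannot secretly reverse the orientation of a coincident pair: a temporal ordering $t<t'$ could in principle flip under $\varepsilon$-perturbations whenever $|\phi(t)-\phi(t')|_{S^1}$ is itself of order $\varepsilon$. The lemma finesses this by building ``matched with the same orientation'' into the hypothesis, but a clean proof must make that clause load-bearing and invoke it at exactly the step where $c_\Delta(S^{(t)})$ is identified with $c_\Delta(S^{(1)})$; alternatively one could strengthen the setup to require a separation $|\phi(t)-\phi(t')|_{S^1}>\varepsilon$ for every generating pair. A secondary subtlety is that orthogonal projection is naturally a real or rational construction, so defining $\Pi_\Delta$ over $\mathbb{Z}$ requires either passing to $\mathbb{Q}$-coefficients and restoring integrality via universal coefficients, or choosing a spanning-tree splitting of $C_1$ for which $Z_1^\perp$ admits an explicit integral basis of tree edges.
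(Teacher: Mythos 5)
Your proposal is correct and follows essentially the same route as the paper's own sketch: part (1) via the splitting $C_1=Z_1\oplus Z_1^\perp$ and projection onto $\ker\partial$, so that the discarded component carries exactly $\partial c_\Delta(S)$, and part (2) via the jitter hypothesis inducing an orientation-preserving edge correspondence across trials together with the window filtration in $\delta$. If anything, your part (2) is slightly tighter (you obtain literal equality of the chains $c_\Delta(S^{(t)})$, where the paper only argues the projected cycles are homologous via chain correspondences), and your closing remarks on possible orientation flips and on the integrality of the orthogonal projection flag genuine loose ends in the paper's formulation rather than gaps in your argument.
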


\begin{proof}[Proof sketch]
(1) By construction, $\partial c_\Delta(S)$ counts net imbalance of incident
coincidences at each vertex (incoming minus outgoing). Projecting onto
$\ker\partial$ removes exactly those components whose boundary is nonzero; hence
$z_\Delta(S)\in Z_1$ and $\partial z_\Delta(S)=0$. Informally, coincidences that
do not close are eliminated as boundary terms; only closed flow persists.
(2) Phase jitter $\varepsilon<\Delta$ induces edge correspondences between the
$G_\Delta(S^{(t)})$ that preserve orientation and incidence, yielding chain
homotopic $c_\Delta(S^{(t)})$. Projection to $Z_1$ commutes with these
homotopies, so the resulting $z_\Delta(S^{(t)})$ are homologous. Viewing
$\delta$ as a filtration parameter, unmatched (nonreproducible) edges vanish at
$\delta\searrow\varepsilon$, whereas reproducible cycles define a bar of
positive length in $H_1$, hence survive.
\end{proof}

Lemma~\ref{lem:coincidence-closure} established that coincidence detection enforces closure 
by cancelling misaligned spikes, ensuring that only reproducible cycles survive. 
Fig.~\ref{fig:coincidence-boundary-cancellation} illustrates this principle: 
when presynaptic spikes align within a coincidence window $\Delta$ (top), their 
inputs sum coherently and trigger a postsynaptic spike, corresponding to 
$\partial\gamma=0$ (closure). When spikes fall outside the window (bottom), they 
remain as unmatched boundaries that cancel one another, yielding no output. The 
inset shows the topological analogy: different paths that bound the same face 
$\sigma$ cancel in homology, just as misaligned temporal fragments fail to 
stabilize into persistent cycles. Formally, we have 

\begin{definition}[Topological Closure]
Let $(X,\tau)$ be a topological space and $A \subseteq X$. 
The \emph{closure} of $A$, denoted $\overline{A}$, is defined as
$\overline{A} = \bigcap \{ C \subseteq X \mid C \text{ is closed and } A \subseteq C \}$.
Equivalently, $\overline{A}$ consists of all points $x \in X$ such that every
open neighborhood $U \in \tau$ with $x \in U$ satisfies $U \cap A \neq \varnothing$.
\end{definition}

\noindent With the formal notion of closure in hand, we now \emph{operationalize}
it in neural dynamics: replace open neighborhoods by temporal coincidence windows
and subsets $A$ by sets of candidate spikes. Under this identification, the
“points in the closure” are precisely spikes that recurrently co-occur within a
window, and the homological reading of closure ($\partial^2=0$) corresponds to
cancelling unmatched, out-of-window events as boundary terms. This yields a
direct bridge from topological closure to coincidence-driven cycle formation in
neural circuits.
For a PNG to persist, spikes from multiple presynaptic neurons must converge 
within a narrow temporal window at their postsynaptic targets. Coincidence 
detection acts as a filter: inputs that arrive in synchrony are integrated, 
while those that fall outside the coincidence window are effectively cancelled. 
This selective integration implements the algebraic identity $\partial^2=0$: 
misaligned spikes behave like open boundaries that fail to connect, whereas synchronous arrivals cancel boundary terms and enforce cycle closure. In this way, only temporally coherent activity contributes to a closed 1-cycle in the neural state space. Once closure is achieved, spike-timing dependent plasticity (STDP) reinforces 
the recurrent pathways that produced coincident input \cite{caporale2008spike}. Potentiation strengthens 
the synapses along routes that consistently deliver spikes within the window $\Delta$, while depression weakens those that fail to align. Over repeated 
activations, this differential plasticity stabilizes the trajectory as a 
reentrant cycle: the cycle not only replays reliably, but also becomes resistant 
to perturbations of individual spike times. In summary, coincidence detection, 
together with STDP, extracts the low-entropy content variable $\Phi$: a 
reproducible invariant that persists as a memory trace and can later be 
recalled or recombined into higher-order structures \cite{li2025memory}.

The principle ``coincidence detection = boundary cancellation'' can now be 
made explicit. When presynaptic spikes converge within the coincidence window, 
their temporal boundaries align and cancel, producing a closed cycle that can 
drive a stable postsynaptic response \cite{konig1996integrator}. In contrast, when spikes arrive outside 
the window, they leave residual unmatched boundaries that fail to close, and 
no postsynaptic output is generated. Figure~\ref{fig:coincidence-boundary-cancellation} 
illustrates this correspondence: in the neural case, misaligned spikes cancel 
each other’s contributions and disappear from the effective cycle; in the 
topological case, paths that differ by the boundary of a 2-simplex $\sigma$ 
cancel in homology. In both settings, coincidence detection enforces the 
identity $\partial^2=0$, ensuring that only closed cycles survive as 
memory-bearing invariants.

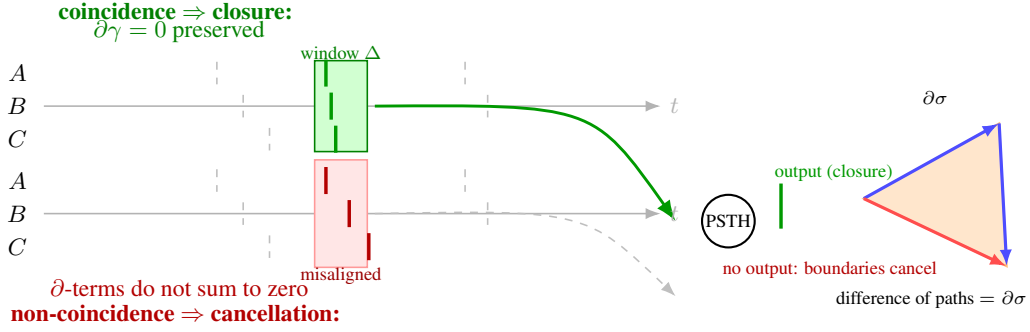
\begin{figure}[t]
\centering
\begin{tikzpicture}[
  >=Latex,
  scale=1.0,
  every node/.style={font=\small},
  axis/.style={gray!60, line width=0.6pt, -{Latex[length=2mm]}},
  spike/.style={line width=1.3pt},
  coinc/.style={green!60!black, line width=1.3pt},
  noco/.style={red!70!black, line width=1.3pt},
  win/.style={fill=green!18, draw=green!50!black, line width=0.6pt},
  guide/.style={gray!50, line width=0.6pt, dashed}
]

\def\L{8.2}     
\def\H{1.1}     
\def\x0{-0.2}   
\def\cx{8.9}    
\def\cy{ 0.0}   

\draw[axis] (\x0, 1.8*\H) -- (\x0+\L, 1.8*\H) node[right] {$t$};
\node[left] at (\x0-0.1, 2.2*\H) {$A$};
\node[left] at (\x0-0.1, 1.8*\H) {$B$};
\node[left] at (\x0-0.1, 1.4*\H) {$C$};

\def\wL{3.4} \def\wR{4.1}
\filldraw[win] (\wL, 1.25*\H) rectangle (\wR, 2.35*\H);
\node[green!50!black] at ({0.5*(\wL+\wR)}, 2.45*\H) {\scriptsize window $\Delta$};

\foreach \y/\x in {2.2*\H/3.55, 1.8*\H/3.62, 1.4*\H/3.68}{
  \draw[coinc] (\x, \y+0.18) -- (\x, \y-0.18);
}
\foreach \y/\x in {2.2*\H/2.1, 1.8*\H/2.45, 1.4*\H/2.8, 2.2*\H/5.4, 1.8*\H/5.7}{
  \draw[guide] (\x, \y+0.15) -- (\x, \y-0.15);
}

\draw[->, green!60!black, line width=1.1pt] (\wR+0.1, 1.8*\H) .. controls (7.1,2.0) .. (8.2,0.45);
\draw[thick] (\cx,0.45) circle (0.35);
\node at (\cx,0.45) {\scriptsize PSTH};
\draw[coinc] (\cx+0.7,0.95) -- (\cx+0.7,0.35);
\node[green!60!black] at (\cx+1.4,1.1) {\scriptsize output (closure)};

\node[align=left, green!50!black] at (1.6, 2.95*\H) {\textbf{coincidence $\Rightarrow$ closure:} };
\node[align=left, green!50!black] at (1.6, 2.7*\H) {$\partial \gamma = 0$ preserved};

\draw[axis] (\x0, 0.5*\H) -- (\x0+\L, 0.5*\H) node[right] {$t$};
\node[left] at (\x0-0.1, 0.9*\H) {$A$};
\node[left] at (\x0-0.1, 0.5*\H) {$B$};
\node[left] at (\x0-0.1, 0.1*\H) {$C$};

\def\mL{3.4} \def\mR{4.1}
\filldraw[fill=red!10, draw=red!40, line width=0.6pt] (\mL, -0.15*\H) rectangle (\mR, 1.15*\H);
\node[red!60!black] at ({0.5*(\mL+\mR)}, -.25*\H) {\scriptsize misaligned};

\foreach \y/\x in {0.9*\H/3.55, 0.5*\H/3.86, 0.1*\H/4.12}{
  \draw[noco] (\x, \y+0.18) -- (\x, \y-0.18);
}
\foreach \y/\x in {0.9*\H/2.1, 0.5*\H/2.45, 0.1*\H/2.8, 0.9*\H/5.4, 0.5*\H/5.7}{
  \draw[guide] (\x, \y+0.15) -- (\x, \y-0.15);
}

\draw[guide, ->] (\mR+0.1, 0.5*\H) .. controls (7.0,0.6) .. (8.2,-0.55);
\node[red!70!black] at (\cx+1.35,-0.2) {\scriptsize no output: boundaries cancel};

\node[align=left, red!70!black] at (1.6, -0.7*\H) {\textbf{non-coincidence $\Rightarrow$ cancellation:} };
\node[align=left, red!70!black] at (1.6, -0.4*\H) {$\partial$-terms do not sum to zero};

\begin{scope}[shift={(10.7,0.75)}]
  \fill[orange!20] (0,0) -- (1.8,1.0) -- (1.9,-0.9) -- cycle;
  \draw[orange!50, line width=0.8pt] (0,0) -- (1.8,1.0) -- (1.9,-0.9) -- cycle;
  \draw[blue!70, line width=1.2pt, -{Latex[length=2.2mm]}] (0,0) -- (1.8,1.0);
  \draw[blue!70, line width=1.2pt, -{Latex[length=2.2mm]}] (1.8,1.0) -- (1.9,-0.9);
  \draw[red!70,  line width=1.2pt, -{Latex[length=2.2mm]}] (0,0) -- (1.9,-0.9);
  \node at (0.95,1.35) {\scriptsize $\partial\sigma$};
  \node at (0.9,-1.35) {\scriptsize difference of paths $=\partial\sigma$};
\end{scope}

\end{tikzpicture}
\caption{\textbf{Coincidence detection = boundary cancellation.}
\emph{Left:} Three presynaptic spike trains ($A,B,C$). \textbf{Top:} Spikes align within a coincidence window $\Delta$
(green band), summate, and produce a postsynaptic spike (closure). \textbf{Bottom:} Spikes are misaligned; inputs fail to
coincide in $\Delta$, so no output occurs (boundaries do not cancel). \emph{Right inset:} Topological analogy - when two
paths differ by the boundary of a face ($\partial\sigma$), their difference cancels in homology; likewise, misaligned
temporal fragments behave as open boundaries, while coincidence implements $\partial^2=0$, leaving only closed cycles.}
\label{fig:coincidence-boundary-cancellation}
\end{figure}

\begin{principle}[Coincidence Detection as Boundary Cancellation]
Let $\{v_i\}$ denote neural events (e.g., spikes) indexed in time, and let 
$e_{ij} = [v_i,v_j]$ denote a directed edge formed when two events fall 
within a coincidence window $\Delta t$. Define a chain 
$c = \sum_{(i,j)} e_{ij}$ over all coincident pairs. 
1) If events are misaligned ($|t_i - t_j| > \Delta t$), 
    no edge is formed; the fragment remains an open chain with 
    nonvanishing boundary $\partial e_{ij} = v_j - v_i$.
2) If events are coincident ($|t_i - t_j| \leq \Delta t$), 
    opposite boundaries cancel: $\partial c = \sum (v_j - v_i) = 0$.
In summary, coincidence detection implements the algebraic rule 
$\partial^2=0$: unmatched endpoints dissipate, while synchronous 
inputs enforce closure. Biologically, this ensures that only 
temporally aligned inputs reinforce into stable cycles, whereas 
misaligned fragments are pruned. 
\end{principle}

\begin{example}[Grid/Place Cells via Oscillatory Phase Coding and Coincidence Detection]
\label{ex:grid-place-osc-coin}
Let an animal move along a path $x(t)\in\mathbb{R}^2$ during exploration. 
A hippocampo-entorhinal \emph{theta} oscillation provides a cyclic temporal scaffold 
$\theta(t)=\omega_\Theta t \;\mathrm{mod}\;2\pi$ on $S^1_\Theta$ (\textbf{oscillatory phase coding}). 
Entorhinal grid cells $g_j$ are modeled by spatial phases 
$\varphi_j(x)\;=\;\langle k_j,\,x\rangle+\phi_j\;\;\mathrm{mod}\;2\pi$,
with wavevectors $k_j$ arranged on (approximately) hexagonal lattices; jointly these induce a torus code on $T^m=(S^1)^m$ and, when combined with theta, a phase space 
$S^1_\Theta\times T^m$.
\noindent\textbf{Coincidence detection.}
Let a downstream (hippocampal) \emph{place cell} $p$ pool inputs from a set $\mathcal{G}$ of grid cells 
with synaptic weights $w_j>0$. Define a coincidence kernel $\kappa_\Delta$ (e.g., von Mises or boxcar) 
of width $\Delta\!\ll\!2\pi$ on the circle. The phase-locked input to $p$ at time $t$ is
$I_p(t)\;=\;\sum_{j\in\mathcal{G}} w_j\;\kappa_\Delta\!\Big(\theta(t)-\varphi_j\big(x(t)\big)\Big)$.
The postsynaptic cell fires when the \emph{theta-gated coincidence functional}
$\mathcal{C}_p[x]\;=\;\frac{1}{T_\Theta}\int_{t_0}^{t_0+T_\Theta} I_p(t)\,dt$
exceeds a threshold $\tau$; here $T_\Theta=2\pi/\omega_\Theta$. Because $\kappa_\Delta$ is sharply peaked, 
misaligned phases average out over a cycle, while aligned phases \emph{sum coherently}.
\noindent\textbf{Claims.}
1) (\emph{Scaffold $\Psi$}) Theta imposes a cyclic temporal parameterization $S^1_\Theta$, and grid phases 
$\varphi_j$ tile space periodically; together they provide a high-entropy scaffold 
$\Psi$ over $S^1_\Theta\times T^m$.
2) (\emph{Content $\Phi$ via coincidence}) A \emph{place field} for $p$ emerges at locations $x^\ast$ where 
\emph{multi-lattice phase alignment} holds: $\theta(t^\ast)\approx \varphi_j(x^\ast)$ (mod $2\pi$) for many $j$.
At such $x^\ast$, $\mathcal{C}_p[x]>\tau$ and $p$ fires reproducibly, defining a low-entropy content invariant $\Phi$.
3) (\emph{Boundary cancellation $\Rightarrow$ closure}) Along an exploratory trajectory segment $\sigma$ that 
does \emph{not} return near $x^\ast$, phase misalignment causes $\int_\sigma I_p(t)\,dt$ to cancel (\emph{boundary terms} 
do not accumulate). When the animal revisits $x^\ast$ in a closed tour $\gamma$ (home cycle), aligned contributions 
add over $\theta$-cycles while misaligned ones cancel, yielding a \emph{closed} neural trajectory with 
$\partial\gamma=0$. Thus coincidence detection implements \emph{boundary cancellation} (algebraically $\partial^2=0$), 
so only closed cycles persist.
4) (\emph{Order invariance}) If two paths $\gamma, \eta$ traverse the same set of grid phases and both return 
to home, the cycle-integrated coincidence depends on the \emph{multiset} of aligned phase hits, not their order. 
Hence the induced class in $H_1$ is identical: $[\gamma]=[\eta]$.
5) (\emph{Homological substrate}) With two incommensurate grid axes plus theta, phases live on 
$S^1_\Theta\times S^1\times S^1$, whose first homology is $H_1\cong\mathbb{Z}^3$. Integer windings encode 
\emph{hierarchical cycles}: gamma-in-theta microcycles accumulate into spatial reentrant cycles that stabilize place fields.
\noindent\textbf{Conclusion.}
Oscillatory phase coding supplies the \emph{scaffold} ($\Psi$), and coincidence detection prunes it to 
\emph{content} ($\Phi$) by canceling misaligned inputs and reinforcing aligned ones. The surviving, reproducible 
grid-place patterns are \emph{closed cycles} that are insensitive to traversal order and persist as homology classes. 
Thus, grid and place cells instantiate \emph{topological closure}: memory arises as invariant cycles formed by 
theta-gated phase alignment and coincidence-based boundary cancellation.
\end{example}

\paragraph{Summary.}
Oscillatory phase coding supplies the cyclic scaffold ($\Psi$), ensuring that 
experience is embedded within recurrent temporal frames, while PNGs implement 
content cycles ($\Phi$) stabilized by coincidence detection. Together they 
instantiate the algebraic identity $\partial^2=0$ in neural dynamics \cite{gerstner2014neuronal}: 
oscillations guarantee that trajectories are organized into cycles, and PNGs 
ensure that only nontrivial, reproducible cycles survive as memory traces. 
In this way, $\Psi$ and $\Phi$ are dynamically aligned: the scaffold partitions 
time into contextual windows, while coincidence detection filters and stabilizes 
only those spike patterns that achieve closure within these windows. The outcome 
is a neural substrate in which memory is not stored as isolated events, but as 
persistent cycles embedded in oscillatory structure \cite{buzsaki2006rhythms}.

This perspective naturally motivates a closer examination of polychronous neural 
groups (PNGs) \cite{izhikevich2006polychronization}. If oscillations provide the scaffold for cycle formation, then PNGs 
are the concrete realizations of those cycles in spiking networks. By combining 
axonal delays, coincidence detection, and STDP, PNGs produce reproducible 
spatiotemporal trajectories that can be iterated, nested, and recombined. 
While oscillatory phase coding establishes the global conditions for 
closure, PNGs reveal how the brain implements cycle formation locally. In the 
next subsection, we formalize this principle by showing how PNGs organize 
cognition as a \emph{hierarchy of cycles}, ranging from micro-level 1-cycles to 
higher-order perceptual, motor, and abstract cycles.

\subsection{PNGs as Hierarchical Cycle Machine}

\noindent
\textbf{Biological picture.}
Polychronous neural groups (PNGs) provide a natural substrate for understanding 
the emergence of invariant cycles in biological neural networks 
\cite{izhikevich2006polychronization}. Their defining property is that precise 
spike-timing patterns, stabilized by axonal delays and spike-timing dependent 
plasticity (STDP) \cite{caporale2008spike}, generate reproducible closed cycles 
in neural state space. In this subsection, we formalize the principle that PNGs 
organize cognition as a \emph{hierarchy of cycles}, ranging from micro-level 
1-cycles to higher-order perceptual, motor, and abstract cycles (see 
Fig.~\ref{fig:png-hierarchy} for a schematic illustration).
At the mechanistic level, axons with heterogeneous delays $d_{ij}$ and STDP 
create polychronous matches: spikes arriving within a coincidence window 
$\Delta$ ignite a reproducible spatiotemporal pattern, forming a PNG. When the 
sum of delays around a route resonates with a carrier rhythm (e.g., $\theta$), 
recurrent activation \emph{closes a cycle}. Importantly, different micro-routes 
that arrive within $\Delta$ collapse to the same postsynaptic event, yielding 
\emph{order-invariant} readout. This shows that PNGs naturally implement the 
principle of closure: only spike configurations that align in time survive as 
stable cycles, while misaligned ones vanish as boundaries. Moreover, 
cross-frequency nesting (\emph{gamma-in-theta}) and replay (ripples) compose 
these micro-cycles into a hierarchy that amortizes past solutions, illustrating 
how memory consolidation builds upon the same cycle-forming machinery.

\paragraph{PNGs as 1-cycles ($\Phi$).}
The biological description can be formalized topologically. Within the 
oscillatory scaffold ($\Psi$), PNGs are embedded as the concrete realizations of 
closed 1-cycles. Each PNG corresponds to a delay-locked trajectory of spikes 
$\gamma$ that is reproducible across trials and stable under perturbations of 
individual spike times. Coincidence detection enforces $\partial \gamma = 0$: 
open spike chains cancel at their boundaries, leaving only closed cycles. 
Therefore, PNGs correspond to nontrivial homology classes 
$[\gamma]\in H_1(\mathcal{Z})$, which serve as the low-entropy content units 
($\Phi$) of memory. This establishes PNGs as the bridge between biological 
mechanisms of spike timing and the algebraic invariants of homology.
Having shown how PNGs realize 1-cycles, we now extend the construction beyond 
the micro-scale. The key insight is that the same closure and order-invariance 
principles can be iterated and composed: {\em cycles can be coupled across modalities 
(perception and action), and further composed into higher-order cycles that 
capture abstract plans and concepts}. To connect this insight more 
directly to spike-level mechanisms, we next characterize how polychronous closure produces neural 1-cycles with order-invariant readout.

\begin{theorem}[Polychronous Closure $\Rightarrow$ Neural 1-Cycles with Order-Invariant Readout]
\label{thm:png_closure}
Let $\mathcal{G}=(V,E)$ be a directed synaptic graph with delays $d_{ij}>0$ and weights $w_{ij}$ updated by STDP.
Fix a coincidence window $\Delta>0$ and a carrier period $T_\Theta$ (e.g., $\theta$).
Suppose there exists a directed cycle $\gamma=(i_1\!\to i_2\!\to \cdots \!\to i_m\!\to i_1)$ such that
$\Big|\textstyle\sum_{k=1}^{m} d_{i_k i_{k+1}} - n T_\Theta\Big|\ \le \ \Delta
\quad\text{for some } n\in\mathbb{Z},
\quad\text{and}\quad \prod_{k=1}^{m} w_{i_k i_{k+1}} \ \ge\ \tau$,
for a cycle-gain threshold $\tau>0$. Then we have:
1) (\emph{reentrant closure}) The induced Poincar\'e map on the phase torus admits a stable fixed point:
the PNG reactivates after $nT_\Theta$ (within $\Delta$) and forms a recurrent \emph{neural 1-cycle}.
2) (\emph{order-invariant readout}) Any micro-variation of spike routes that preserves coincidence within $\Delta$
produces the same postsynaptic event and the same closed cycle class in the timing-aware chain complex (i.e., the readout is invariant to the order of converging micro-paths).
\end{theorem}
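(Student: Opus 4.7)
The plan is to split the claim into its two parts and reduce each to machinery already set up in the excerpt. For part (1) I will build a concrete Poincar\'e return map on the phase torus $S^1_\Theta$, show that the delay-sum and gain hypotheses place a near-fixed point inside a $\Delta$-neighborhood of the nominal ignition phase, and then upgrade existence to asymptotic stability using the sharpness of the coincidence kernel $\kappa_\Delta$ together with STDP. For part (2), I will combine Lemma~\ref{lem:coincidence-closure} (coincidence enforces closure) with Theorem~\ref{thm:order-invariance} (cycles are order-invariant in $H_1$) to collapse all micro-routes that remain within $\Delta$ onto a single closed-cycle class and a single postsynaptic event.

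First I would fix a reference neuron $i_1$ and choose a Poincar\'e section $\Sigma=\{\phi_0\}\subset S^1_\Theta$ at its nominal firing phase. Propagating a spike along $\gamma$ with delays $d_{i_k i_{k+1}}$, each hop gated by $\kappa_\Delta$, defines a return map $F:\Sigma\to\Sigma$ sending an observed ignition phase $\phi$ to the phase at which $i_1$ reignites after one traversal, reduced modulo $nT_\Theta$. The hypothesis $|\sum_k d_{i_k i_{k+1}}-nT_\Theta|\le\Delta$ guarantees $|F(\phi_0)-\phi_0|\le\Delta$, while the multiplicative gain $\prod_k w_{i_k i_{k+1}}\ge\tau$ ensures every hop stays suprathreshold, so $F$ is well defined on a closed arc around $\phi_0$ and maps it into itself. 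A Brouwer-style argument on that arc yields a fixed point $\phi^{\ast}$ realizing the reentrant neural 1-cycle predicted in Lemma~\ref{lem:phase-closure}.

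For part (2) I would invoke Lemma~\ref{lem:coincidence-closure} directly: any two micro-route variants that respect the coincidence window induce coincidence graphs that differ only by edges whose boundaries cancel under the coincidence projection $K_\Delta$. Hence both variants yield the same element of $Z_1$, and therefore the same postsynaptic ignition, since the threshold crossing of the coincidence functional $\mathcal{C}_p$ depends only on the closed cycle and not on its micro-ordering. Theorem~\ref{thm:order-invariance} then identifies their classes in $H_1$, since the Hurewicz map abelianizes path composition and depends only on the multiset of traversed signed segments. Together these give the order-invariant readout asserted in the theorem.

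The main obstacle will be turning mere existence of $\phi^{\ast}$ into genuine asymptotic stability rather than neutral drift. I would expand $F$ around $\phi^{\ast}$ and bound $|F'(\phi^{\ast})|<1$ using the derivative of $\kappa_\Delta$ combined with the STDP rule: edges whose pre-post offsets lie inside $\Delta$ are potentiated while those outside are depressed, contracting phase errors on successive laps. A secondary subtlety is the stochastic, discrete nature of spiking; I would address this by passing to expected return maps (averaging over spike-time jitter) or, equivalently, invoking a small-noise stability criterion for the associated Markov chain on $\Sigma$. If that pointwise derivative bound proves fragile, a fallback is to quotient by the coincidence window, identifying phases within $\Delta$, so that $F$ becomes contractive by construction on the resulting quotient arc.
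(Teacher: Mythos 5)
Your proposal follows essentially the same route as the paper's own proof sketch: part (1) via a Poincar\'e return map on the phase torus made near-identity by the delay--$T_\Theta$ resonance and stabilized by the cycle-gain condition, and part (2) via coincidence-induced boundary cancellation (Lemma~\ref{lem:coincidence-closure}) together with abelianization in $H_1$ (Theorem~\ref{thm:order-invariance}). Your added detail --- the Brouwer-style fixed-point step and the explicit contraction bound needed to upgrade existence to asymptotic stability --- merely fleshes out what the paper's sketch asserts more tersely, so the approaches coincide.
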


\begin{proof}[Proof sketch]
STDP aligns effective delays along potentiated paths; resonance with $T_\Theta$ yields a return map near the identity on phase. cycle gain above threshold stabilizes recurrence (1). Coincidence detection integrates arrivals within $\Delta$, so permutations of converging micro-routes are indistinguishable at readout; algebraically they sum to the same $1$-chain with $\partial z=0$ (2). 
\end{proof}

The following corollary generalizes the Hebbian learning principle from the cycle persistence perspective. 

\begin{corollary}[Hebbian-STDP Reinforcement $\Rightarrow$ Polychronous Cycle Persistence]
\label{cor:hebb-stdp-persistence}
Consider the setting of Lemma~\ref{lem:coincidence-closure}. Suppose a set of synapses 
along a directed cycle $\gamma=(i_1\!\to\cdots\!\to i_m\!\to i_1)$ undergo repeated 
co-activation such that:
1) (\emph{Temporal Hebbian condition}) Pre-before-post spike timing on each edge 
$(i_k\!\to i_{k+1})$ falls within the potentiating STDP window on average, while 
post-before-pre occurrences fall predominantly in the depressing window off the cycle;
2) (\emph{Coincidence condition}) The cycle’s effective delay sums satisfy 
$\big|\sum_{k} d_{i_k i_{k+1}} - nT_\Theta\big|\le \Delta$ for some $n\in\mathbb{Z}$, 
with coincident arrivals at postsynaptic targets within the window $\Delta$;
3) (\emph{Gain condition}) The induced weight dynamics drive 
$\prod_{k} w_{i_k i_{k+1}} \ge \tau$ (cycle-gain threshold).
Then the cycle $\gamma$ becomes a persistent \emph{neural 1-cycle} that reliably 
reenters after $nT_\Theta$ and yields order-invariant readout. In particular, the 
closed-cycle memory trace corresponds to a nontrivial class $[\gamma]\in H_1(\mathcal{Z})$.
\end{corollary}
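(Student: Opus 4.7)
The plan is to derive the corollary as a direct application of Theorem~\ref{thm:png_closure} and Lemma~\ref{lem:coincidence-closure}, using the three hypotheses as the inputs those earlier results require (stable gain, resonance with the carrier, and trial-to-trial reproducibility). Condition~(2) supplies the delay--resonance hypothesis of Theorem~\ref{thm:png_closure}; condition~(3) supplies its cycle-gain hypothesis; and condition~(1) is the additional ingredient that upgrades a \emph{single} reentrant event into \emph{persistent} recurrence, which is what distinguishes the corollary from the theorem.

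First, I would convert the Temporal Hebbian condition~(1) into a convergence statement for the multiplicative gain along $\gamma$. Because pre-before-post pairings on each on-cycle edge $(i_k\!\to\!i_{k+1})$ fall in the potentiating lobe of the STDP window while post-before-pre pairings on off-cycle edges fall in the depressing lobe, the expected updates satisfy $\mathbb{E}[\Delta w_{i_k i_{k+1}}]>0$ on $\gamma$ and $\mathbb{E}[\Delta w_{ij}]\le 0$ otherwise. I would then use the Lyapunov quantity $V=\sum_k \log w_{i_k i_{k+1}}$ and a stochastic-approximation argument to show that $V$ drifts upward under repeated co-activation, so the set $\{V\ge \log\tau\}$ becomes an attracting region; condition~(3) is thereby promoted from a transient inequality to a stable property of the learned weights.

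Second, I would feed the learned weights back into Theorem~\ref{thm:png_closure} to obtain (i) a stable fixed point of the Poincaré return map on the phase torus, so the PNG reactivates within $\Delta$ after $nT_\Theta$, and (ii) order-invariant readout, since any permutation of converging micro-paths within $\Delta$ collapses to the same postsynaptic event. Finally, I would apply Lemma~\ref{lem:coincidence-closure}: the reinforced coincidences are reproducible across trials with jitter $\varepsilon<\Delta$, so the projected coincidence chain $z_\Delta(S^{(t)})$ is trial-invariant in $H_1(\mathcal{Z};\mathbb{Z})$ and off-cycle spikes are cancelled as boundary terms. Identifying the surviving class with $[\gamma]$ yields $[\gamma]\in H_1(\mathcal{Z})$ as claimed.

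The hard part will be the first step: realistic STDP rules are stochastic, bounded, and in some regimes effectively anti-Hebbian at high weights (soft bounds, multiplicative normalization, homeostatic scaling), so a naive monotone-growth argument can fail. I would address this by restricting to an additive STDP regime with bounded weights and showing, via a martingale/ODE-limit analysis, that the basin of attraction of $\{V\ge\log\tau\}$ is nonempty and forward-invariant whenever conditions~(1)--(2) hold. A secondary subtlety is ensuring that $[\gamma]$ is genuinely \emph{nontrivial} in $H_1(\mathcal{Z})$ rather than null-homologous; this I would secure by invoking Lemma~\ref{lem:phase-closure} to note that the phase-wrapped return map contributes winding number $n$ on $S^1_\Theta$, a generator of $H_1(S^1)\cong\mathbb{Z}$, so $\gamma$ cannot bound a $2$-chain in the oscillatory scaffold.
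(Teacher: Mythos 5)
Your proposal follows essentially the same route as the paper's proof sketch: condition (1) is read as STDP potentiation along the cycle and suppression off it, conditions (2)--(3) are fed into Theorem~\ref{thm:png_closure} to obtain the stable Poincar\'e fixed point (persistence after $nT_\Theta$) and order-invariant readout, and Lemma~\ref{lem:coincidence-closure} gives cancellation of misaligned spikes as boundaries and trial-invariance of the surviving class $[\gamma]\in H_1(\mathcal{Z})$. Your two elaborations, the Lyapunov/stochastic-approximation argument that makes the gain condition a stable attractor of the weight dynamics, and the winding-number argument via Lemma~\ref{lem:phase-closure} and $H_1(S^1)\cong\mathbb{Z}$ for nontriviality, go beyond the paper's sketch (which only asserts nontriviality from ``failure to contract to a boundary'') and strengthen rather than deviate from its argument.
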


\begin{proof}[Proof sketch]
Condition (1) implements a temporally refined Hebbian rule (STDP) that selectively 
potentiates synapses consistent with the cycle’s pre$\!\to\!$post timing, while 
suppressing competing routes. Condition (2) ensures coincidence detection: misaligned 
spikes cancel as boundaries, whereas aligned arrivals enforce $\partial \gamma=0$. 
Condition (3) guarantees that the reentrant trajectory is self-sustaining; together 
with the $\theta$ resonance in (2), the induced Poincar\'e map admits a stable fixed 
point. Theorem~\ref{thm:png_closure} then applies, establishing both persistence and 
order-invariant readout; nontriviality in homology follows from closure under 
perturbations and the failure to contract to a boundary. 
\end{proof}

\paragraph{Theta-gamma nesting as scaffold-closure hierarchy.}
From this perspective, slow rhythms such as theta ($4$-$8$ Hz) furnish the 
contextual scaffold ($\Psi$) by segmenting experience into macro-cycles, while 
faster gamma oscillations ($30$-$100$ Hz) embed discrete content packets 
within these cycles. Oscillatory phase coding provides the temporal scaffold: 
linear time is folded onto the circle $S^1$, so that events are organized by 
phase rather than absolute chronology \cite{LismanJensen2013,CanoltyKnight2010,Canolty2006PNAS,BuzsakiDraguhn2004}. 
This cyclic parameterization ensures that each theta cycle acts as a contextual 
frame, within which gamma bursts encode ordered but bounded packets of content. 

Coincidence detection then enforces closure inside this scaffold: spikes from 
multiple presynaptic sources must converge within narrow gamma windows to 
generate a postsynaptic response \cite{KonigEngelSinger1996,StuartSakmann1994,BuzsakiWang2012}. 
Misaligned spikes cancel as open boundaries, while coincident spikes close into 
invariant cycles, implementing the homological identity $\partial^2=0$. 
STDP further reinforces those gamma packets that reliably fall into the same 
theta-defined temporal slots, pruning transient alignments, and stabilizing 
reproducible cycles \cite{Markram1997Science,BiPoo1998,CaporaleDan2008}. 
The result is a nested scaffold-closure hierarchy: theta oscillations create 
high-entropy contextual windows ($\Psi$), gamma packets embed low-entropy content 
($\Phi$) within them, and coincidence detection ensures that only closed and 
reproducible gamma cycles persist. In this way, theta-gamma nesting realizes the 
dynamic alignment of $\Psi$ and $\Phi$ biologically, yielding coherent cycles that 
can be recalled, revised, or recombined across scales of perception, action, and 
memory \cite{Fries2015}, as illustrated in Fig.~\ref{fig:png-hierarchy}.

\begin{proposition}[Cross-Frequency Nesting and Replay Induce a Hierarchy of Cycles]
\label{prop:png-hierarchy}
Let $\Gamma_\gamma$ denote the family of gamma PNG cycles detected within $\theta$ phase bins
$\{\varphi_\ell\}_{\ell=1}^L$, and let $\mathcal{R}$ denote ripple-triggered replay during NREM/quiet wake.
Assume: (i) $\theta$ phase gates PNG ignition; (ii) replay compresses and replays subsets of $\Gamma_\gamma$
and induces weight updates along the same cycles. Then:
1) (\emph{Nested cycles}) Phase-indexed gamma cycles assemble into a $\theta$-indexed ring, yielding a
closed trajectory in $\mathcal{Z}_\gamma \times S^1_\theta$ (a higher-order cycle).
2) (\emph{Consolidation}) Replay $\mathcal{R}$ increases cycle gains selectively on previously closed PNG cycles,
pruning trivial cycles and stabilizing nontrivial homology classes (persistent memories).
3) (\emph{Perception-action lift}) Cross-areal reentry (cortico-BG-thalamo-hippocampal) composes perceptual PNG
cycles with motor PNG cycles, producing joint cycles in $\mathcal{Z}_P \times \mathcal{Z}_A$ that are invariant to
micro-order and cross-modal interleavings.
\end{proposition}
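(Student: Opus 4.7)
The plan is to reduce each of the three claims to the single-scale machinery already established: Lemma on coincidence-induced closure supplies $\partial^2=0$, Theorem on polychronous closure supplies micro-level 1-cycles with order-invariant readout, the Hebbian-STDP corollary supplies reinforcement, and Theorem on order invariance supplies insensitivity to permutation of local moves. Parts (1) and (3) will be lifted via product-space constructions together with a Künneth decomposition; part (2) will be obtained by reapplying the reinforcement corollary to the compressed replay train and invoking the dot-cycle dichotomy for the pruning side.

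For (1), Theorem on polychronous closure at the gamma timescale produces, at each theta phase bin $\varphi_\ell \in S^1_\theta$, a closed PNG cycle $\gamma_\ell \in Z_1(\mathcal{Z}_\gamma)$ with $\partial \gamma_\ell = 0$. I would concatenate these into the 1-chain
\[
C \;=\; \sum_{\ell=1}^L \bigl(\gamma_\ell \times \{\varphi_\ell\}\bigr) \;+\; \sum_{\ell=1}^L \bigl(\{v_\ell^\star\}\times [\varphi_\ell,\varphi_{\ell+1}]\bigr)
\]
in $\mathcal{Z}_\gamma \times S^1_\theta$, with $v_\ell^\star$ a base point on $\gamma_\ell$ and $\varphi_{L+1}\equiv\varphi_1$. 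Each $\gamma_\ell$ is closed by construction, and the vertical segments telescope around $S^1_\theta$, so $\partial C = 0$. The Künneth formula gives $H_1(\mathcal{Z}_\gamma \times S^1_\theta) \cong H_1(\mathcal{Z}_\gamma)\oplus H_1(S^1_\theta)$, so $[C]$ combines the gamma content with a nontrivial theta winding, exhibiting the claimed higher-order cycle.

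For (2), I would invoke the Hebbian-STDP corollary twice. Encoding establishes the cycle gain $\prod_k w_{i_k i_{k+1}} \ge \tau$ on already-closed PNG cycles. Replay $\mathcal{R}$, by hypothesis (ii), reactivates subsets of $\Gamma_\gamma$ on compressed timescales that still satisfy the temporal-Hebbian and coincidence conditions, so the corollary applies again and drives the gains further above $\tau$ on exactly those cycles. Cycles that are null-homologous in $H_1$ are, by the dot-cycle lemma, boundaries whose oriented edges cancel under the cycle projection $\Pi_\Delta$ of Lemma on coincidence-induced closure; they receive no coherent amplification, their gains decay below $\tau$, and they are pruned. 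Stability of the surviving class across trials then follows from clause (2) of the same lemma with jitter $\varepsilon<\Delta$.

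For (3), I would treat perceptual PNG cycles $\gamma_P \in Z_1(\mathcal{Z}_P)$ and motor PNG cycles $\gamma_A \in Z_1(\mathcal{Z}_A)$ as generators on the product $\mathcal{Z}_P \times \mathcal{Z}_A$, with cortico-BG-thalamo-hippocampal reentry supplying the inter-area edges whose cumulative delays meet the coincidence and resonance conditions of Theorem on polychronous closure. The resulting joint cycle $\gamma_{PA}$ satisfies $\partial \gamma_{PA}=0$, and Theorem on order invariance applied to the union $\{a_i^P\}\cup\{a_j^A\}$ of local moves forces $[\gamma_{PA}]$ to depend only on their multiset, giving invariance under micro-order permutations and cross-modal interleavings. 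The hardest step will be (2): making precise why replay does not spuriously reinforce trivial cycles. I expect this to require an extra monotonicity hypothesis, namely that reactivation probability is monotone in current cycle gain, so that $\Pi_\Delta$-cancelled boundary edges remain below threshold while already-closed cycles are preferentially rehearsed; with this in place, the pruning argument and the Künneth-based nontriviality of $[C]$ close the proof cleanly.
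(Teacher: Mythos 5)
First, note that the paper states Proposition~\ref{prop:png-hierarchy} without any proof (it moves directly to interpretive discussion and the sleep--wake corollary), so your proposal is compared against the paper's general toolkit rather than an official argument; your overall strategy (assembling Theorem~\ref{thm:png_closure}, Lemma~\ref{lem:coincidence-closure}, Corollary~\ref{cor:hebb-stdp-persistence}, Künneth, and Hurewicz abelianization) is exactly the kind of argument the paper gestures at, and your part (3) essentially reproduces the paper's own proof of Theorem~\ref{thm:PA-cycle}. However, two steps as written do not go through. In part (1), the boundary of your chain $C$ does not vanish: $\partial\bigl(\{v_\ell^\star\}\times[\varphi_\ell,\varphi_{\ell+1}]\bigr)=(v_\ell^\star,\varphi_{\ell+1})-(v_\ell^\star,\varphi_\ell)$, so summing over $\ell$ leaves $\sum_\ell\bigl[(v_{\ell-1}^\star,\varphi_\ell)-(v_\ell^\star,\varphi_\ell)\bigr]$, which telescopes to zero only if all base points $v_\ell^\star$ coincide. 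Since the gamma cycles $\gamma_\ell$ in different theta bins generally lie in different regions of $\mathcal{Z}_\gamma$, you must either fix a single common reference state across bins or insert connecting arcs inside each slice $\mathcal{Z}_\gamma\times\{\varphi_{\ell+1}\}$ from $(v_\ell^\star,\varphi_{\ell+1})$ to $(v_{\ell+1}^\star,\varphi_{\ell+1})$ (using path-connectedness) before $\partial C=0$ and the Künneth decomposition with nonzero theta winding can be invoked.

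In part (2), the pruning step rests on a conflation. The projection $\Pi_\Delta$ of Lemma~\ref{lem:coincidence-closure} maps onto $Z_1=\ker\partial$, so it eliminates \emph{open} chains with $\partial\sigma\neq 0$; a null-homologous cycle $\gamma=\partial S$ is an element of $B_1\subset Z_1$ and therefore \emph{survives} $\Pi_\Delta$ unchanged. Likewise, Lemma~\ref{lemma:dot-cycle} dichotomizes open chains (collapse to $H_0$) versus closed chains (classes in $H_1$); it does not say that bounding 1-cycles receive no coherent amplification. So your claim that trivial cycles "cancel under $\Pi_\Delta$" and hence decay below $\tau$ is not supported by the cited lemmas, and the added monotonicity hypothesis (reactivation probability monotone in cycle gain) does not by itself distinguish bounding from non-bounding cycles, since both can have high gain. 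To prune trivial classes you need a different mechanism consistent with the paper's framework: either the persistence/stability clause of Lemma~\ref{lem:coincidence-closure} (non-reproducible coincidences die as short bars in the filtration over $\delta$), or an explicit appeal to hypothesis (ii) that replay rehearses only previously closed, reproducible members of $\Gamma_\gamma$ together with a baseline decay (depression) on non-rehearsed edges, so that short-lived, contractible loops fall below the gain threshold while persistent, non-bounding classes are selectively reinforced. With those two repairs, the remainder of your argument (Künneth nontriviality in (1), Corollary~\ref{cor:hebb-stdp-persistence} re-applied to compressed replay in (2), and the Hurewicz/product argument in (3)) is coherent and in line with the paper's methods.
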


Proposition~\ref{prop:png-hierarchy} describes how cross-frequency nesting and 
replay compose local PNG cycles into a multiscale hierarchy of cycles \cite{theta_gamma_cfc_memory}. However, this 
hierarchy does not simply accumulate structure; it must also undergo daily 
selection, pruning, and stabilization. The sleep-wake rhythm provides this 
mechanism \cite{fuller2006neurobiology}: waking generates open exploratory chains, while sleep implements 
closure through replay and reactivation. In this way, the global sleep-wake 
cycle can be understood as the biological realization of the topological 
identity $\partial^2 = 0$ \cite{buzsaki1996hippocampo}: boundaries left unresolved during wake are either 
canceled or closed during sleep, ensuring that only nontrivial cycles persist as 
consolidated memories. This intuition is formalized in the following lemma.

\begin{example}[Hierarchy of Perception-Action, Meso-Scale, and Sleep-Wake Cycles]
\label{ex:cycle-hierarchy}
The perception-action cycle (seconds to minutes) and the sleep-wake cycle
(hours to days) are not isolated phenomena but instances of a nested cycle 
hierarchy induced by symmetry breaking. In between these extremes lie 
\emph{meso-scale cycles} (minutes to hours), which mediate between 
short-term sensorimotor alignment and long-term memory consolidation. 
Examples include working-memory refresh cycles, decision-making cycles,
theta-gamma nesting during episodic encoding, and ultradian rhythms 
such as the 90-minute Non-Rapid Eye Movement(NREM)-Rapid Eye Movement(REM) alternation. Across these scales:
1) \emph{Micro-scale (perception-action, seconds to minutes).} 
    Local orderings of events (saccades, motor primitives) are irrelevant once 
    the cycle closes, yielding order-invariant cycles that stabilize 
    sensorimotor alignment and immediate goal achievement \cite{fuster2004upper}.
2) \emph{Meso-scale (minutes to hours).}  
    Working memory refresh cycles and theta-gamma nested cycles serve as 
    intermediate scaffolds, amortizing micro-scale sensorimotor traces 
    into episodic representations. Ultradian rhythms (e.g., NREM-REM alternations \cite{maquet1996functional}) 
    reorganize memory traces within a waking day, bridging online activity 
    with nightly consolidation. These cycles propagate information upward, 
    ensuring that micro traces do not dissipate but are integrated into 
    broader episodic contexts.
3) \emph{Macro-scale (sleep-wake, hours to days).}  
    Daily alternations between waking experience and offline replay integrate 
    meso-scale traces into long-term memory \cite{fuller2006neurobiology}. Wake generates exploratory, 
    open trajectories ($\sigma\in C_1$), while sleep provides closure via 
    coincidence detection (hippocampal replay, ripple nesting), pruning 
    trivial paths and reinforcing nontrivial homology classes $[\gamma]\in H_1$.
4) \emph{Persistence across scales.}  
    By $\partial^2=0$, boundaries of boundaries cannot accumulate, 
    ensuring that only closed cycles persist. Micro-cycles feed into meso-cycles, 
    which in turn are consolidated at the macro scale, forming a recursive 
    bootstrapping mechanism \cite{li2025information}. This hierarchical organization guarantees that 
    memory traces remain invariant under reordering and perturbations, 
    while scaling up from immediate perception--action interactions to 
    circadian-long integration.
\emph{Summary.} Micro, meso, and macro cycles are not separate mechanisms 
but nested levels of a homological hierarchy. Each level amortizes and 
stabilizes the invariants of the scale below it, while contributing 
structured inputs to the scale above. Together they illustrate the 
bootstrapping principle of cognition: cycles at one timescale seed, 
reinforce, and reorganize cycles at another, ensuring coherence of 
memory and prediction across time.
\end{example}

The above example establishes that perception-action cycles \cite{fuster2004upper} and sleep-wake cycles \cite{fuller2006neurobiology}
are not separate mechanisms but nested layers of the same homological hierarchy, 
distinguished only by their temporal scale. The next step is to see how this 
principle is realized concretely in the daily alternation between waking 
experience and sleep-dependent replay. In particular, sleep provides the 
mechanism by which open chains of waking experience are aligned, pruned, and 
closed into stable cycles, ensuring that only nontrivial invariants persist as 
memory. The following corollary formalizes this role of the sleep-wake cycle 
as an operator of topological closure.

\begin{corollary}[Sleep-Wake Cycle as Topological Closure]
\label{cor:sleep-wake-closure}
Let $\mathcal{Z}$ denote the neural state space, with trajectories generated 
during wake and replayed during sleep. Then the sleep--wake cycle enforces 
the homological identity $\partial^2 = 0$ at the daily scale:
1) (\emph{Wake as open chains}) Waking experience generates exploratory 
trajectories $\sigma \in C_1(\mathcal{Z})$ that need not close, corresponding 
to high-entropy contextual scaffolds ($\Psi$). These trajectories often end in 
“loose boundaries’’ reflecting unbound alternatives or partial information.
2) (\emph{Sleep as closure}) During sleep, hippocampal replay (ripples nested 
in slow oscillations) and REM phase coordination act as coincidence detectors 
that align these fragments. Misaligned trajectories cancel, while aligned 
trajectories are reinforced, yielding closed cycles $\gamma$ with 
$\partial \gamma = 0$.
3) (\emph{Persistence}) By $\partial^2=0$, boundaries of boundaries cannot 
survive. Thus, each daily cycle of wake and sleep eliminates trivial chains 
(forgetting) while stabilizing nontrivial homology classes $[\gamma] \in H_1(\mathcal{Z})$ 
(consolidated memories).
\end{corollary}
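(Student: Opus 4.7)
The plan is to lift the micro-scale coincidence-closure mechanism of Lemma~\ref{lem:coincidence-closure} to the circadian timescale, using Proposition~\ref{prop:png-hierarchy} as the ladder that carries closure from $\gamma$--in--$\theta$ micro-cycles up to daily macro-cycles. First I would fix a chain complex $C_\bullet(\mathcal{Z})$ on the neural state space whose $1$-chains are time-indexed spike or population trajectories, with the boundary operator $\partial$ given by endpoint evaluation (start minus end). Wake then supplies a family of $1$-chains $\{\sigma_t\}\subset C_1(\mathcal{Z})$ whose boundaries $\partial\sigma_t=\text{end}_t-\text{start}_t$ are generically nonzero, establishing claim~(1): exploratory trajectories are open and register as high-entropy scaffolds in $\Psi$ rather than as invariants.

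Next I would establish (2) by interpreting sleep replay as the coincidence projection $K_\Delta=\Pi_\Delta\circ(\cdot)$ from Lemma~\ref{lem:coincidence-closure}, but now with $\Delta$ set by the ripple-nested-in-slow-oscillation window rather than the $\gamma$ window. Ripple events compress phase-ordered sequences extracted from the day's $\sigma_t$ and ignite PNG re-entries that satisfy the polychronous closure conditions of Theorem~\ref{thm:png_closure}; by Corollary~\ref{cor:hebb-stdp-persistence}, repeated replay with STDP drives cycle gains above threshold on exactly those traversals whose delays resonate with the slow carrier. Applying Lemma~\ref{lem:coincidence-closure}(1) at this coarser scale, the aggregated chain $c:=\sum_t \sigma_t$ is projected to $z:=K_\Delta(c)\in Z_1(\mathcal{Z})$, so $\partial z=0$: misaligned trajectories cancel as boundary terms, aligned ones glue into closed walks.

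For (3) I would invoke the stability clause of Lemma~\ref{lem:coincidence-closure}(2) in a filtration over successive nights. Each day contributes a new chain $c_n$ and a projected cycle $z_n$; the identity $\partial^2=0$ guarantees that any $2$-chain $\beta$ realizing a homology between $z_n$ and $z_{n+1}$ cannot itself have a surviving boundary, so homology classes $[z_n]\in H_1(\mathcal{Z})$ propagate unchanged modulo coincident perturbations of phase $\varepsilon<\Delta$. Consequently, the bars of trivial cycles close (forgetting) while nontrivial classes accumulate positive persistence length across nights (consolidation). The perception-action lift of Proposition~\ref{prop:png-hierarchy}(3) then ensures that cross-areal reentry during REM and NREM composes micro-PNG cycles into the daily cycle, so the surviving classes live in $H_1(\mathcal{Z}_P\times\mathcal{Z}_A)$ and inherit order invariance from Theorem~\ref{thm:order-invariance}.

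The main obstacle I anticipate is justifying that the micro-scale coincidence window of Lemma~\ref{lem:coincidence-closure} genuinely controls the macroscopic replay-level projection rather than merely providing a formal analogy. Concretely, one must verify that the chain-homotopy between trials induced by sub-$\Delta$ jitter remains a chain-homotopy after the cross-frequency coarse-graining $S^1_\gamma\to S^1_\theta\to$ (daily cycle), which requires checking that the nesting map of Proposition~\ref{prop:png-hierarchy}(1) is a chain map and not merely an assignment of phase bins. Once that functoriality is pinned down, claims (1)--(3) follow by assembling the three invoked results; everything else is bookkeeping on the induced $H_1$ module.
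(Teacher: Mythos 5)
Your assembly of Lemma~\ref{lem:coincidence-closure}, Theorem~\ref{thm:png_closure}, Corollary~\ref{cor:hebb-stdp-persistence}, and Proposition~\ref{prop:png-hierarchy} is essentially the same route the paper takes: the corollary carries no separate proof and is justified (in the remark that follows it) exactly by lifting coincidence-driven closure and PNG-hierarchy consolidation to the daily scale, with wake as open chains, replay as the coincidence projection, and persistence secured by $\partial^2=0$. The functoriality concern you flag --- that the cross-frequency coarse-graining from $\gamma$-in-$\theta$ cycles up to the circadian scale must be a genuine chain map for the micro-scale window to control the replay-level projection --- is a real gap, but it is one the paper itself leaves unaddressed rather than a defect specific to your argument.
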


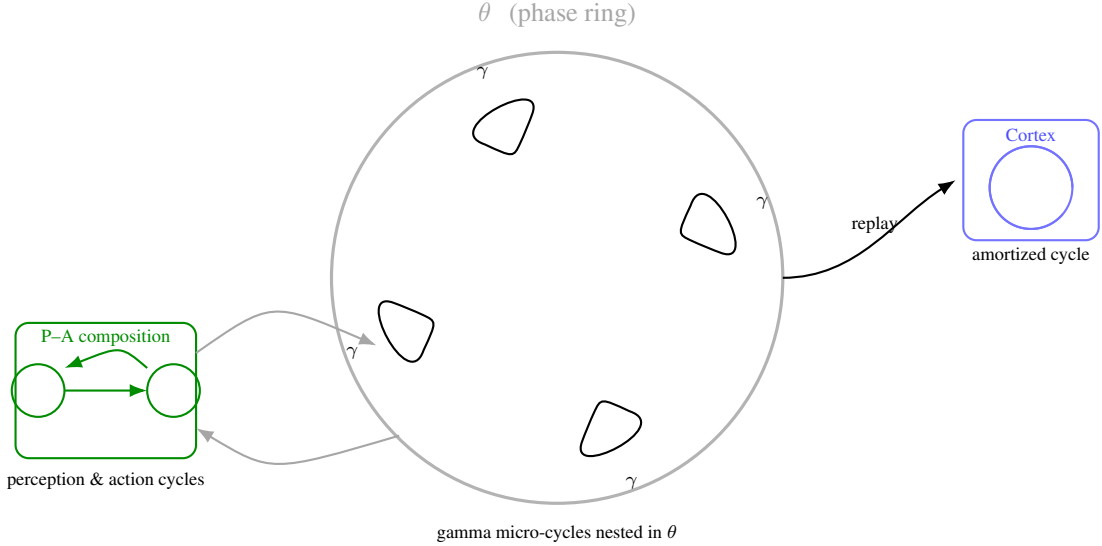
\begin{figure}[t]
\centering
\begin{tikzpicture}[>=Latex, thick, scale=1.0]
  \draw[gray!60, line width=1.2pt] (0,0) circle (3.0);
  \node[gray!70] at (0,3.5) {$\theta$ \, (phase ring)};

  \foreach \ang in {20,110,200,290}{
    \begin{scope}[rotate=\ang]
      \draw[rounded corners=6pt] (2.0,0.3) .. controls (2.4,0.8) and (2.6,-0.8) .. (2.0,-0.3)
                                .. controls (1.7,0.0) and (1.7,0.0) .. (2.0,0.3);
      \node[font=\scriptsize] at (2.9,0) {$\gamma$};
    \end{scope}
  }

  \draw[->] (3.0,0) .. controls (4.0,0.0) and (4.5,0.8) .. (5.3,1.3) node[midway, above] {\scriptsize replay};
  \draw[rounded corners=5pt, blue!55] (5.4,0.5) rectangle (7.2,2.1);
  \node[blue!70] at (6.3,1.9) {\scriptsize Cortex};
  \draw[blue!55] (6.3,1.2) circle (0.55);
  \draw[blue!55] (6.3,1.2) +(0.55,0) arc (0:320:0.55);
  \node[font=\scriptsize] at (6.3,0.3) {amortized cycle};

  \draw[rounded corners=5pt, green!55!black] (-7.2,-2.4) rectangle (-4.8,-0.6);
  \node[green!55!black] at (-6.0,-0.8) {\scriptsize P--A composition};
  \draw[green!55!black] (-6.9,-1.5) circle (0.35);
  \draw[green!55!black] (-5.1,-1.5) circle (0.35);
  \draw[->, green!55!black] (-6.55,-1.5) -- (-5.45,-1.5);
  \draw[->, green!55!black] (-5.45,-1.2) .. controls (-5.8,-0.9) .. (-6.55,-1.2);
  \node[font=\scriptsize] at (-6.0,-2.7) {perception \& action cycles};

  \draw[->, gray!70] (-2.1,-2.1) .. controls (-3.8,-2.6) .. (-4.8,-2.0);
  \draw[->, gray!70] (-4.8,-1.0) .. controls (-3.8,-0.3) .. (-2.4,-0.9);

  \node at (0,-3.4) {\scriptsize gamma micro-cycles nested in $\theta$};
\end{tikzpicture}
\caption{\textbf{PNG hierarchy.} Polychronous gamma cycles (micro-cycles) nest within $\theta$ phase (hippocampal scaffold, marked by gray), replayed by ripples to consolidate a cortical template (amortized macro-cycle, marked by blue). Perception and action cycles (P-A composition, as marked by green) compose via reentry, yielding order-invariant joint cycles.}
\label{fig:png-hierarchy}
\end{figure}

\paragraph{Remark:} 
The sleep-wake alternation implements topological closure: every waking 
trajectory is either collapsed or consolidated, ensuring that only invariant 
cycles persist as memory. In this way, sleep acts as a homological filter, 
removing trivial boundaries while stabilizing nontrivial cycles. Together, 
Theorem~\ref{thm:cycle-hierarchy}, Theorem~\ref{thm:png_closure}, and 
Proposition~\ref{prop:png-hierarchy} provide a unified account (cf. 
Fig.~\ref{fig:png-hierarchy}): PNGs yield recurrent neural 1-cycles, 
coincidence detection enforces order invariance, and cross-frequency nesting 
plus replay extend these micro-cycles into a hierarchy of persistent homology 
classes.  Importantly, this hierarchy is not static but \emph{bootstrapped} across scales \cite{staresina2024coupled}: 
micro-level cycles provide local invariants that seed higher-order 
constructions, while macro-level cycles (e.g., the sleep-wake rhythm) 
integrate and reorganize them into more global invariants. Each cycle thus 
serves as both the \emph{substrate} for consolidating past structure and the 
\emph{scaffold} for generating new structure, enabling recursive alignment of 
context ($\Psi$) and content ($\Phi$). This recursive bootstrapping trick 
ensures that information flows coherently in both space and time \cite{li2025information}: local 
perceptual or motor fragments are bound into cycles, cycles are composed into 
plans, and daily consolidation folds these plans into long-term memory. The 
result is a multiscale information-topological framework in which cognition 
emerges from the iterative closure and recombination of cycles across nested 
timescales.

\section{Cycle is All You Need: Cognition-from-Cycle}
\label{sec:4}

At every scale of cognition, cycles serve as the primitive units of structure 
and meaning, but their deeper role is revealed from the perspective of 
\emph{bootstrapping} \cite{bennett2023brief}. In spatial navigation, closed cycles of movement provide the first scaffold: homing trajectories and grid-cell lattice paths encode 
location not by absolute coordinates, but by recurrent traversals that 
bootstrap positional knowledge from repeated experience \cite{mcnaughton2006path}. In perception, 
cycles extend this trick: oscillatory reentrant cycles synchronize distributed 
assemblies, binding features into coherent objects by reusing the same 
closure principle across sensory modalities \cite{treisman1980feature}. In action, motor trajectories 
are stabilized by the same mechanism, closing the cycle between intention and 
execution so that control is bootstrapped from trial-and-error into robust 
routines resilient to uncertainty \cite{todorov2002optimal}. These navigational, perceptual, and motor 
cycles are not separate but interwoven, forming a multi-level hierarchy where 
each domain reuses the closure principle to stabilize the next (refer to Fig. \ref{fig:three-cycles}). This recursive 
reliance on cycles embodies the \emph{bootstrapping trick}: higher-level 
cognition emerges not from new primitives, but from reapplication of the same 
topological invariant, closure under $\partial^2=0$, across scales. 

\subsection{Spatial Cycles: Navigation and Homing Behavior}

Navigation is not merely a sequence of displacements, but the closure 
of movement into cycles anchored at a home location. From an evolutionary 
perspective, this cyclic principle reflects the \emph{phylogenetic continuity 
between navigation and memory} \cite{buzsaki2013memory}. Long before the emergence of abstract 
representation, organisms evolved neural mechanisms for homing: returning 
to a safe base after exploration \cite{mcnaughton2006path}. What mattered was not the precise order of local excursions, but the invariant property of cycle closure, encircling 
landmarks and returning reliably to $x_0$. 
Hippocampal and entorhinal circuits exemplify this continuity \cite{o1978hippocampus}. Place cells encode discrete positions (dots, $H_0$), while grid cells impose a lattice 
that supports cycles (cycles, $H_1$). Together, they transform noisy, 
order-sensitive trajectories into invariant homological classes that persist 
as spatial memory. This organization generalizes beyond navigation: 
cognition itself inherits the cycle-based principle, with perception, action, 
and planning all grounded in cycle closure. 
At the foundation, homing provides a primordial case study of the dot-cycle dichotomy: 
local positional samples ($H_0$) acquire meaning only when integrated into 
closed cycles ($H_1$). The following lemma formalizes this principle, 
showing that homing trajectories are characterized not by the sequence 
of moves but by their homology class, capturing the order-invariant 
topological essence of navigation.

\begin{lemma}[Order-Invariant Homing in Spatial Navigation]
\label{lem:order-invariant-homing}
Let $\mathcal{X}$ be a planar configuration space (workspace) and let
$\mathcal{O}=\{O_1,\dots,O_m\}$ be disjoint compact obstacles with
$\mathcal{F} \coloneqq \mathcal{X}\setminus \bigcup_{j=1}^m O_j$ path-connected.
Fix a basepoint $x_0\in\mathcal{F}$ (``home''). 
Consider a finite library of local moves $\mathcal{M}=\{\mu_1,\dots,\mu_K\}$,
each a rectifiable path inside $\mathcal{F}$ with well-defined winding numbers
around each obstacle. For any finite sequence $w=\mu_{i_1}\cdots \mu_{i_T}$
that yields a homing trajectory $\gamma_w$ (i.e., a cycle with
$\gamma_w(0)=\gamma_w(1)=x_0$), the following holds:
1) (\emph{Order invariance in homology}) The first homology class
$[\gamma_w]\in H_1(\mathcal{F};\mathbb{Z})$ depends only on the multiset
of traversed moves (equivalently, on the net winding vector around $\{O_j\}_{j=1}^m$),
not on their execution order. In particular, any permutation of the same local moves
that remains a cycle at $x_0$ represents the same class in $H_1$.
2) (\emph{Goal invariance}) If two homing paths $\gamma,\eta$ have identical net
winding vectors about each obstacle, then $[\gamma]=[\eta]$ in $H_1(\mathcal{F})$;
thus the \emph{navigational goal} ``return to $x_0$ with given obstacle-circumnavigations''
is a cycle invariant that abstracts away the ordering of local steps.
\end{lemma}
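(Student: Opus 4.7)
The plan is to reduce both claims to the standard computation $H_1(\mathcal{F};\mathbb{Z})\cong\mathbb{Z}^m$, whose canonical basis consists of small generator loops $c_j$ encircling each obstacle $O_j$. First I would establish this identification: because $\mathcal{O}$ consists of disjoint compact sets in a planar configuration space and $\mathcal{F}$ is path-connected, $\mathcal{F}$ deformation retracts onto a wedge of $m$ circles (via Alexander duality for the plane minus finitely many compacts, or an explicit spanning-tree retraction). The winding-number map $W:Z_1(\mathcal{F})\to\mathbb{Z}^m$, sending a cycle to its vector $(w_1(\gamma),\dots,w_m(\gamma))$ of winding numbers about the $O_j$, then descends to an isomorphism $W:H_1(\mathcal{F};\mathbb{Z})\xrightarrow{\sim}\mathbb{Z}^m$, which will carry the full content of both claims.

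For claim (1), I would invoke Theorem~\ref{thm:order-invariance} directly. Each local move $\mu_{i_k}$ contributes an oriented 1-chain in $C_1(\mathcal{F})$, and the homing trajectory is realized by the chain $\gamma_w=\sum_{k=1}^{T}\mu_{i_k}$. The closure condition $\partial\gamma_w=0$ is exactly the hypothesis that $w$ returns to $x_0$, so endpoints telescope. Because $C_1$ is a free abelian group, any permutation of summands yields the same 1-chain and hence the same homology class; equivalently, under the Hurewicz map $h:\pi_1(\mathcal{F},x_0)\to H_1(\mathcal{F};\mathbb{Z})$, all commutators vanish, so any two executions of the same multiset that still close at $x_0$ are homologous. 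The net winding vector is then the unordered sum of per-move contributions.

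For claim (2), I would use the bijectivity of $W$ on homology: two homing cycles $\gamma,\eta$ with identical winding vectors map to the same element of $\mathbb{Z}^m$ and are therefore homologous, so $[\gamma]=[\eta]$ in $H_1(\mathcal{F};\mathbb{Z})$. Equivalently, $\gamma-\eta$ is a 1-cycle with zero winding number about every $O_j$, hence bounds a 2-chain in $\mathcal{F}$ and is null-homologous. This abstracts the ``homing goal'' into a cycle invariant that forgets ordering.

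The hard part will be the technical subtlety that each move $\mu_i$ is a path, not a loop, so its ``winding number around $O_j$'' is not a priori an integer. I would resolve this by working at the chain level: each $\mu_i$ is treated as an oriented 1-chain, and winding is defined via line integrals of the closed angular 1-forms $\omega_j=d\theta_j$ about $O_j$ along the full cycle $\gamma_w$. Additivity of these integrals under concatenation makes the total winding of $\gamma_w$ equal to the unordered sum of the per-move contributions, which is precisely the order-invariant statement needed. A secondary subtlety, that not every permutation of $w$ need again be a homing cycle at $x_0$, is handled by the lemma's explicit restriction to those reorderings whose endpoints still telescope to $x_0$.
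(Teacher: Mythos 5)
Your proposal is correct and follows essentially the same route as the paper's proof: abelianization via the Hurewicz map (equivalently, commutativity of chain addition) handles order invariance, and the identification $H_1(\mathcal{F};\mathbb{Z})\cong\mathbb{Z}^m$ via winding numbers about the obstacles gives goal invariance. Your additional care with the fact that individual moves are open paths (defining their winding contributions by integrating the angular forms along the full closed trajectory) fills in a detail the paper's sketch leaves implicit, but does not change the argument.
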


\begin{proof}[Proof sketch]
cycles based at $x_0$ form $\pi_1(\mathcal{F},x_0)$. The Hurewicz homomorphism
$h:\pi_1(\mathcal{F},x_0)\to H_1(\mathcal{F};\mathbb{Z})$ abelianizes concatenation,
so commutators vanish in $H_1$. Consequently, concatenating cycle segments in different
orders yields the same $1$-cycle class provided the path remains closed. 
For planar $\mathcal{F}$ with punctures (obstacles), $H_1(\mathcal{F};\mathbb{Z})\cong \mathbb{Z}^m$,
with coordinates given by integer windings around each $O_j$. The image $h([\gamma_w])$ is precisely
the net winding vector obtained by summing the contributions of the local moves; summation is commutative,
hence independent of order. If two homing cycles share the same winding vector, they represent the same class
in $H_1(\mathcal{F})$, establishing both claims.
\end{proof}

\begin{example}[Spatial Cycles in Navigation and Homing]
\label{ex:spatial-cycles}
The lemma manifests in animal navigation. 
Consider an animal returning to a ``home'' location $x_0$ after exploratory 
movement in its environment $\mathcal{E} \subset \mathbb{R}^2$ \cite{muller1996hippocampus}. 
The trajectory $\gamma:[0,T]\to\mathcal{E}$ with $\gamma(0)=\gamma(T)=x_0$ 
is a closed cycle representing a $1$-cycle in $H_1(\mathcal{E})$.
1) \emph{Path integration.} In the hippocampal-entorhinal circuit \cite{OKeefeRecce1993}, 
    grid cells in MEC implement a quasi-periodic basis for location, while hippocampal 
    place cells anchor landmarks. Neural activity encodes the double integral of 
    velocity input (distance $\times$ direction), forming a dead-reckoning estimate. 
    Such integrals accumulate error unless trajectories are closed.
2) \emph{Cycle closure as error correction.} When the animal revisits 
    $x_0$, boundary conditions enforce $\gamma(0)=\gamma(T)$, canceling accumulated 
    drift. Algebraically, $\partial \gamma = 0$, so $\gamma\in Z_1(\mathcal{E})$. 
    Homing cycles thus act as calibration cycles stabilizing spatial memory 
    \cite{mcnaughton1991dead}.
3) \emph{Biological invariants.} Empirical evidence shows that hippocampal 
    place fields and entorhinal grid fields form attractor-like cycles that persist 
    across trials \cite{o1978hippocampus,hafting2005microstructure}. These cycles 
    are order-invariant: what matters is cycle closure, not the micro-sequence of turns.
\emph{Summary.} Navigation illustrates how memory emerges from recurrent traversal: 
dead-reckoning provides a provisional scaffold, while homing cycles enforce closure 
and stabilize spatial invariants \cite{mcnaughton2006path}. This mechanism grounds the phylogenetic continuity hypothesis: structured memory in higher cognition inherits its organizational principle from ancient navigation cycles.
\end{example}

\begin{corollary}[Grid-Place Homology Encodes Homing Invariance]
\label{cor:grid-place-homing}
Let $\mathcal{E}$ denote the external environment with landmarks, and 
$\mathcal{Z}$ the neural state space spanned by place and grid cell activity.
Suppose:
1) Place cells encode local positions $x \in \mathcal{E}$ as 
    ``dots,'' corresponding to $0$-cycles in $H_0(\mathcal{Z})$.
2) Grid cells impose a periodic lattice structure that embeds 
    trajectories into closed cycles on the torus $S^1 \times S^1$, corresponding 
    to $1$-cycles in $H_1(\mathcal{Z})$.
3) Homing requires a return to a reference point $x_0 \in \mathcal{E}$, 
    producing a cycle $\gamma$ in $\mathcal{E}$ whose neural counterpart 
    $\gamma^\ast$ is represented in $\mathcal{Z}$.
Then:
1) (\emph{Order invariance}) Any path in $\mathcal{E}$ that returns to 
    $x_0$ induces the same homology class $[\gamma^\ast] \in H_1(\mathcal{Z})$, 
    independent of the order of intermediate moves.
2) (\emph{Dot-cycle unification}) Place-cell activations (dots, $H_0$) 
    are bound by grid-cell cycles (cycles, $H_1$), ensuring that discrete 
    positional samples are integrated into a coherent spatial invariant.
3) (\emph{Homing memory}) The grid--place system instantiates homing 
    as topological closure: once $\partial \gamma^\ast = 0$, the trajectory 
    persists as a reusable neural cycle, invariant to perturbations or 
    path-specific details.
Thus, the grid-place cell code provides a neural realization of homing 
invariance, collapsing order-sensitive trajectories into persistent cycle 
classes that unify local exploration into global memory.
\end{corollary}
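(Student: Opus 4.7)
The plan is to reduce all three claims to functoriality of homology under a single \emph{neural encoding map} $\Phi: \mathcal{E} \to \mathcal{Z}$ that factors through the grid-cell torus and the place-cell index, then transport the invariants already established in Lemma~\ref{lem:order-invariant-homing} forward to the neural state space. Concretely, I would model each grid module as a continuous projection $g_i: \mathcal{E} \to S^1$ obtained by quotienting position by the module's lattice, assemble them into $g = (g_1,\dots,g_m): \mathcal{E} \to T^m$, and treat the place-cell population as a labeling $p: \mathcal{E} \to \mathcal{P}$ that selects the active field at each location. The composite $\Phi = (g,p)$ sends any homing cycle $\gamma$ based at $x_0 \in \mathcal{E}$ to a closed neural cycle $\gamma^\ast := \Phi \circ \gamma$ in $\mathcal{Z}$, since $\Phi(x_0)$ is a well-defined basepoint and composition preserves the identification of endpoints.

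For claim (i), I would push forward via naturality of the Hurewicz homomorphism: Lemma~\ref{lem:order-invariant-homing} gives $[\gamma_w] \in H_1(\mathcal{F}; \mathbb{Z})$ as a function solely of the multiset of moves, and $[\gamma_w^\ast] = \Phi_\ast [\gamma_w]$ then inherits the same order-invariance. For claim (ii), I would invoke the Künneth decomposition $H_\ast(T^m \times \mathcal{P}) \cong H_\ast(T^m) \otimes H_\ast(\mathcal{P})$: the discrete factor $\mathcal{P}$ contributes the $H_0$-generators corresponding to place-cell dots, while the torus factor contributes $H_1 \cong \mathbb{Z}^m$ corresponding to grid winding cycles, so that any homing trajectory binds its terminal dot ($H_0$) to its net winding cycle ($H_1$), realizing the dot–cycle unification on the neural side. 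For claim (iii), I would invoke Theorem~\ref{thm:png_closure} together with Corollary~\ref{cor:hebb-stdp-persistence}: once $\partial \gamma^\ast = 0$ holds and repeated homing traversals push the cycle-gain product above the STDP threshold $\tau$, $\gamma^\ast$ stabilizes as a persistent polychronous reentrant trajectory whose homology class is invariant to local perturbations, yielding reusability.

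The hard part will be making $\Phi$ a genuine map whose induced action on $H_1$ is faithful. Two difficulties stand out: place-cell responses are sparse and essentially discrete, so $p$ must be interpreted via the nerve of the open cover by place fields rather than as a naive point-function; and grid modules with incommensurate lattice constants contribute windings in distinct $S^1$-factors of $T^m$, so one must verify that obstacle circumnavigations in $\mathcal{F}$ pull back to nontrivial winding vectors in $T^m$. I would address both by replacing $\Phi$ with its simplicial realization on the nerve of place fields and invoking the covering-space interpretation of each grid module as a quotient of $\mathcal{E}$ by a lattice translation subgroup; this makes $\Phi_\ast$ well-defined on $H_1$ and ensures that the order-invariant winding vector computed in Lemma~\ref{lem:order-invariant-homing} transports faithfully to the neural class $[\gamma^\ast]$, at which point the three claims follow in sequence.
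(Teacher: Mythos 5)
Your overall strategy (transport the order invariance of Lemma~\ref{lem:order-invariant-homing} into the neural space along an encoding map, then use the PNG machinery of Theorem~\ref{thm:png_closure} and Corollary~\ref{cor:hebb-stdp-persistence} for persistence) is the route the paper implicitly intends, since the paper offers no separate proof of this corollary and treats it as a neural restatement of the homing lemma. The difficulty is that your concrete instantiation of the encoding map fails at exactly the step you flag as ``the hard part,'' and your proposed fix makes the failure worse rather than repairing it. Each grid module is modeled as the lattice quotient $g_i:\mathcal{E}\to S^1$, i.e.\ the restriction of the covering map $q:\mathbb{R}^2\to\mathbb{R}^2/\Lambda_i$. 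Because $q$ is defined on all of $\mathbb{R}^2$ (obstacles included), $g=(g_1,\dots,g_m)$ factors through the inclusion $\mathcal{F}\hookrightarrow\mathbb{R}^2$, which kills $H_1(\mathcal{F})\cong\mathbb{Z}^m$; so $g_\ast:H_1(\mathcal{F})\to H_1(T^m)$ is the zero map and no obstacle circumnavigation survives. Worse, for any homing loop $\gamma$ the lift of $g\circ\gamma$ to the universal cover is $\gamma$ itself, whose endpoints coincide, so the deck transformation is trivial and $[g\circ\gamma]=0$ in $H_1(T^m)$: every closed physical trajectory has zero net winding on the grid torus. The covering-space interpretation you invoke therefore proves precisely that the transport is \emph{not} faithful. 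Under your model $[\gamma^\ast]$ is trivial for every homing path, which makes claim (i) vacuously true and leaves claims (ii) and (iii) without content, since there is no nontrivial class for STDP/replay to stabilize.

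Two repairs are available and worth naming. First, the $H_1$ information of $\mathcal{F}$ is carried by the place-cell side, not the grid side: if the place fields form a good open cover of the free space $\mathcal{F}$ (they do not cover the obstacles), the nerve lemma gives a simplicial model of $\mathcal{Z}$ whose $H_1$ is isomorphic to $H_1(\mathcal{F})\cong\mathbb{Z}^m$, and the winding vector of Lemma~\ref{lem:order-invariant-homing} transports faithfully through the nerve map; this inverts your assignment, in which place cells were relegated to $H_0$ ``dots'' and the torus was expected to carry the cycles. Second, following the paper's Example on grid/place cells with oscillatory phase coding, one can enlarge the codomain to $S^1_\Theta\times T^m$ so that a completed homing traversal acquires a nonzero winding in the temporal (theta) factor even though its spatial winding vanishes; then $[\gamma^\ast]$ is nontrivial and Theorem~\ref{thm:png_closure} applies as you intend. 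Also note that modeling the place-cell population as a discrete label set $\mathcal{P}$ and invoking K\"unneth on $T^m\times\mathcal{P}$ produces a disconnected state space in which trajectories crossing field boundaries leave the model; the nerve construction avoids this as well. As written, however, the faithfulness step does not go through, so the proof is incomplete.
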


\subsection{Perceptual Cycles: Feature Binding}

Perception operates not in a single glance but as an iterative cycle of 
saccades, where successive glimpses contribute partial evidence about object 
identity and location. The challenge of feature binding lies in integrating 
these fragments into a coherent whole despite changes in fixation order and 
gaze trajectory. Much as homing paths in navigation collapse to cycle 
invariants, perceptual binding can be understood as cycle accumulation of 
feature votes: the generalized Hough transform (GHT) \cite{ballard1981generalizing} pools across glimpses 
in a way that renders both the order of saccades and their specific sequence 
irrelevant once the cycle is closed. This mechanism parallels navigation, 
where coordinate transformations from egocentric movements to allocentric 
maps yield stable spatial invariants; similarly, perception transforms 
egocentric saccadic fragments into allocentric object representations through 
cycle closure. Beyond geometric voting, the \emph{delta-homology analogy} \cite{li2025memory}
suggests a broader interpretation of GHT: memory retrieval is indexed not by 
parametric templates, but by persistent homology classes that encode the 
invariant relational structure of features. In this way, saccade-invariant 
object recognition emerges as a structure-aware retrieval process, where 
cycles both accumulate local evidence and anchor it to topological invariants 
in feature space.

\begin{proposition}[GHT-Induced Saccade and Order Invariance via Cycle Accumulation]
\label{prop:ght-saccade}
Let $G\!\le\!SE(2)$ denote the (planar) saccade group acting on retinal coordinates, and let $\mathcal{P}$ be an object
parameter space (e.g., translation--rotation--scale). Let $F$ be a local feature extractor (e.g., edges with orientation) that is
$G$-equivariant: $F(g\!\cdot\!I)=g\!\cdot\!F(I)$. Let $H$ be a generalized Hough transform that maps a multiset of features
$\mathcal{S}\subset F(I)$ to an accumulator $A:\mathcal{P}\to\mathbb{R}_{\ge 0}$ by voting with a model table $\psi$
and a kernel $\kappa$:
$A(p)\;=\!\!\sum_{f\in \mathcal{S}}\; \kappa\!\big(p-\psi(f)\big)$.
Assume per-fixation gaze state $g_t\in G$ is known (efference copy), and define re-registered votes by applying the inverse
gaze transform (or, equivalently, accumulate in the quotient $\mathcal{P}/G$).
Consider a sequence of saccadic glimpses $I_t$, $t=1,\dots,T$, with feature sets $\mathcal{S}_t=F(g_t\!\cdot\!I)$.
Define the cycle-accumulated Hough map
$\bar{A}(p)\;=\;\sum_{t=1}^{T}\;\sum_{f\in \mathcal{S}_t}\; \kappa\!\big(p-\psi(g_t^{-1}\!\cdot\! f)\big)$.
Then the following hold:
1) \textbf{Order invariance.} For any permutation $\pi$ of $\{1,\ldots,T\}$, the accumulator is unchanged:
$\bar{A}$ computed on $(\mathcal{S}_{\pi(1)},\ldots,\mathcal{S}_{\pi(T)})$ equals the original $\bar{A}$. Hence
$\arg\max_{p}\bar{A}(p)$ is invariant to the order of glimpses.
2) \textbf{Saccade invariance.} If the saccade path closes (i.e., $\prod_{t=1}^T g_t = e$) \emph{or} we re-register by $g_t^{-1}$
at each step (accumulating in $\mathcal{P}/G$), then $\bar{A}$, and thus $\arg\max_{p}\bar{A}(p)$ in object coordinates, is invariant
to the particular saccade path. Invariance holds under any rearrangement of the same multiset of observed features.
3) \textbf{Homological persistence.} View $\{\bar{A}_{\tau}\}_{\tau}$ as a filtration by threshold (superlevel sets of $\bar{A}$)
on $\mathcal{P}$. The dominant Hough peak induces a persistent $0$-cycle; when indexed over saccade time $t$, the trajectory of that
peak traces a $1$-cycle in $\mathcal{P}\times S^1$ (time on a circle), which remains stable under small perturbations of features and gaze.
\end{proposition}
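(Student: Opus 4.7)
The plan is to handle the three claims in increasing order of subtlety, exploiting three distinct structural facts: commutativity of summation, $G$-equivariance with re-registration, and stability of persistent homology under superlevel-set filtration.

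For claim (1), I would observe that the accumulated Hough map is a finite sum indexed over the disjoint union $\bigsqcup_{t=1}^{T}\{t\}\times\mathcal{S}_t$, and the value of such a sum depends only on the underlying multiset of summands, not on the enumeration. Concretely, the contribution $\kappa(p-\psi(g_t^{-1}\cdot f))$ at any query point $p$ depends on the pair $(t,f)$, but reordering $t$ merely relabels the index set; since addition in $\mathbb{R}_{\ge 0}$ is associative and commutative, $\bar{A}(p)$ is unchanged, and hence so is $\arg\max_p \bar{A}(p)$. This is essentially the abelianization step that Theorem~\ref{thm:order-invariance} identifies as the algebraic source of order invariance.

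For claim (2), I would first use $G$-equivariance of $F$ to rewrite every $f\in\mathcal{S}_t=F(g_t\cdot I)$ as $f=g_t\cdot f_0$ with $f_0\in F(I)$, so that $g_t^{-1}\cdot f=f_0$ and therefore $\psi(g_t^{-1}\cdot f)=\psi(f_0)$ is gaze-free. Then $\bar{A}(p)$ reduces to a sum indexed by the observed portion of $F(I)$, independent of the particular $g_t$ used to fetch each fragment. The ``closed path'' variant follows by interpreting $\prod_t g_t=e$ as a cocycle condition: accumulation in $\mathcal{P}$ is well-defined (rather than only in the quotient $\mathcal{P}/G$) precisely when the composite gaze transform returns to the identity. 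Together these steps show that $\bar{A}$, and hence its argmax, is a function of the observed feature multiset in object coordinates, which is exactly the invariance claim.

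For claim (3), I would pass to persistent homology via the superlevel-set filtration $X_\tau=\bar{A}^{-1}([\tau,\infty))$ on $\mathcal{P}$: a dominant Hough peak yields a long-lived class in $H_0(X_\tau)$, i.e., a long bar in the $0$-dimensional barcode. Extending to time via the partial-sum family $\bar{A}_t(p)=\sum_{s\le t}\sum_{f\in\mathcal{S}_s}\kappa(p-\psi(g_s^{-1}\cdot f))$ produces a one-parameter family of maps; tracking the dominant peak gives a path in $\mathcal{P}$ indexed by $t$, and if the saccade path closes so $t\in S^1$, this path is a cycle in $\mathcal{P}\times S^1$ representing a class in $H_1$. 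Stability under small feature perturbations follows from the standard persistence stability theorem applied to $\kappa$-convolutions. The main obstacle I anticipate is in this last step: the ``trajectory of the peak'' need not be continuous across time when multiple near-maximal peaks swap dominance. I would handle this by replacing the hard argmax with a persistence-based matching between consecutive time slices, so that a peak is tracked by its homology class rather than its location, making the resulting $1$-cycle a robust invariant rather than an artifact of tie-breaking.
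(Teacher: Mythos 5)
Your proposal is correct and follows essentially the same route as the paper's proof: commutativity of summation (abelianization) for order invariance, $G$-equivariance plus re-registration to express votes in object-centric coordinates for saccade invariance, and a superlevel-set persistence module with standard stability theorems for the homological claim. Your additional care in tracking the dominant peak by its persistent class rather than by a hard argmax is a sensible refinement of the same argument, not a different approach.
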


\begin{proof}[Proof sketch]
(1) Order invariance follows from the commutativity of summation in $\bar{A}$; the GHT accumulator depends on the multiset of votes,
not their order (abelianization mirrors $H_1$ collapsing commutators). 
(2) By $G$-equivariance of $F$ and re-registration by $g_t^{-1}$, votes are expressed in object-centric coordinates; a closed saccade path
(or explicit quotienting by $G$) removes gaze as a nuisance, yielding saccade-invariant peaks.
(3) As $\tau$ varies, superlevel sets of $\bar{A}$ produce a persistence module; the principal component (object hypothesis) corresponds to a
persistent $0$-cycle. Across cyclic saccade time, its track is a closed cycle in $\mathcal{P}\times S^1$, stable by standard persistence
stability theorems. 
\end{proof}

The so-called ``binding problem’’ in perception \cite{treisman1980feature} is ill-posed because local 
features, edges, orientations, or colors, are ambiguous when considered in 
isolation. Without a global organizing principle, there is no unique way to 
decide which features belong together as a coherent object, particularly when 
fixations reorder the sequence of inputs or when multiple objects share 
similar features. Topological closure via cycle accumulation provides an alternative 
solution: by pooling votes across saccades and re-registering them into a 
common object-centered frame, the system enforces a coincidence detection 
principle \cite{konig1996integrator}. Misaligned features cancel as open chains, while consistent 
co-occurrences reinforce one another, closing into stable cycles. In this way, 
feature binding is not decided by arbitrary local pairings, but by the global 
constraint $\partial^2 = 0$: only those feature constellations that survive 
as closed cycles across space and time persist as object hypotheses. In this way, 
cycle accumulation transforms the underdetermined binding problem \cite{di2012feature} into a well-posed one, where invariance and persistence emerge from the algebraic 
structure of closure.

Classical approaches to feature binding often rely on Bayesian inference, 
where priors and likelihoods are combined to estimate the most probable 
configuration of features. While powerful, this framework does not by itself 
guarantee stability: competing hypotheses can oscillate, and order-dependent 
evidence integration may bias the outcome. In contrast, the cycle-closure 
approach enforces an algebraic constraint: features that do not fit into a 
closed cycle are systematically canceled, while those that align coherently 
are stabilized into homology classes. This difference is crucial: Bayesian 
models may assign probability mass to multiple, incompatible bindings, but 
the topological condition $\partial^2=0$ ensures that only closed, 
order-invariant cycles survive. From this perspective, coincidence detection 
acts not as a probabilistic weight but as a boundary operator, pruning 
spurious bindings and leaving behind only those feature constellations that 
constitute persistent object representations.

\subsection{Action Cycles: Trajectory Stabilization}

Perception and action form dual components of the cognitive cycle, operating in 
opposite informational directions \cite{poggio2004generalization}: perception maps environmental input into latent representations, while action projects latent plans back onto the environment. Closure in this perception-action cycle \cite{fuster2004upper} requires not only that perceptual fragments be bound into coherent objects, but also that motor 
commands converge onto stable trajectories that achieve intended goals. 
Without action, the cycle remains open, leaving perception unanchored to 
outcomes; without perception, action lacks contextual guidance. Importantly, 
trajectory stabilization in motor control depends on the same principle of 
order invariance that governs perceptual binding \cite{todorov2002optimal}: just as object recognition remains robust to the order of saccades, successful actions must remain 
robust to the sequencing of micro-movements \cite{shmuelof2012motor}. By collapsing order-dependent variations into invariant cycles, the system ensures that motor execution 
returns reliably to its intended target, even under perturbations \cite{santuz2020lower}. Therefore, action cycles provide the complementary half of closure, transforming latent 
plans into embodied invariants that complete the perception-action cycle.
Formally, we have

\begin{proposition}[Order Invariance in Motor Control via Cycle Formation]
\label{prop:motor-order}
Let $\mathcal{Z}$ be a latent action manifold with base state $x_0$ (rest or home posture).
Let $\mathcal{M}=\{m_1,\dots,m_k\}$ be a finite set of motor primitives, each represented as a path
$\mu_i:[0,1]\to\mathcal{Z}$ with $\mu_i(0),\mu_i(1)$ lying in neighborhoods of feasible states.
Suppose a composite action $w=m_{i_1}\cdots m_{i_T}$ produces a closed trajectory $\gamma_w$ with
$\gamma_w(0)=\gamma_w(1)=x_0$.
Then we have:
1) The homology class $[\gamma_w]\in H_1(\mathcal{Z})$ depends only on the net composition of primitives,
not on their execution order, provided the trajectory remains closed.
2) Goal-directed actions (e.g., homing or grasping) are represented as cycle classes $[\gamma_w]$, which
abstract away the precise ordering of intermediate motor commands.
3) Such classes are persistent under perturbations in execution, reflecting the robustness of motor memory.
\end{proposition}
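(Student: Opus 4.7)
The plan is to reduce Proposition~\ref{prop:motor-order} to the order-invariance machinery already established in Theorem~\ref{thm:order-invariance}, specializing that result to the motor manifold $\mathcal{Z}$ with motor primitives playing the role of local moves. The overall structure mirrors Lemma~\ref{lem:order-invariant-homing} (spatial homing), but with $\mathcal{Z}$ an action latent manifold and the primitives $\{m_i\}$ replacing geometric moves. The three claims will follow from: (i) Hurewicz abelianization, (ii) an identification of ``goal'' with a target cycle class, and (iii) a homotopy/persistence-stability argument on the chain complex of $\mathcal{Z}$.

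For claim (1), I would first form concatenations of primitives as elements of $\pi_1(\mathcal{Z},x_0)$. Invoking the Hurewicz map $h:\pi_1(\mathcal{Z},x_0)\to H_1(\mathcal{Z};\mathbb{Z})$, which abelianizes path composition, commutators vanish in $H_1$, so any two sequences $w$ and $w'$ that use the same multiset of primitives (with compatible signed orientations) and remain closed at $x_0$ determine the same class $[\gamma_w]=[\gamma_{w'}]$. This is essentially a direct transplant of the argument in the proof of Theorem~\ref{thm:order-invariance}; what must be checked is only that the orientation convention on $\mathcal{Z}$ (forward vs.\ reverse execution of $m_i$) is consistent so that the induced $1$-chain is well defined.

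For claim (2), I would define a goal-directed action to be one whose endpoint condition $\gamma_w(1)=x_\mathrm{goal}$ (or, in the homing case, $=x_0$) specifies a homotopy class relative to endpoints. When the action is a cycle at $x_0$, the \emph{goal} is encoded as the target element $[\gamma]\in H_1(\mathcal{Z};\mathbb{Z})$; any execution $w$ that realizes this class is goal-equivalent, regardless of the internal ordering. This makes precise the statement that planning reduces to selecting a cycle class rather than a particular motor sequence, and it specializes Corollary~\ref{cor:grid-place-homing} from spatial to motor invariants.

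For claim (3), the main obstacle is to move from abstract homological equivalence to robustness under realistic motor noise. The plan is to treat execution perturbations as a small homotopy $H:\gamma_w\sim\gamma_w^{\varepsilon}$ in $\mathcal{Z}$: if perturbed trajectories remain in a tubular neighborhood of $\gamma_w$ on which $\mathcal{Z}$ deformation-retracts onto $\gamma_w$, homotopy invariance of singular homology gives $[\gamma_w^{\varepsilon}]=[\gamma_w]$. For larger perturbations that may cross critical structure (obstacles, unreachable regions), I would instead invoke a persistence-stability argument, viewing the filtration of $\mathcal{Z}$ by tubular thickenings so that the bar representing $[\gamma_w]$ has positive lifetime; this parallels the persistence argument used in Lemma~\ref{lem:coincidence-closure}(2). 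The delicate point, and the one I expect to require the most care, is specifying the admissible perturbation class so that closure at $x_0$ is preserved; without this, $\gamma_w^{\varepsilon}$ may fail to be a cycle at all, and the homology class is undefined. Assuming a feedback controller that enforces return to $x_0$ (consistent with the perception-action closure discussed above), the homotopy argument goes through and yields persistence of $[\gamma_w]$, completing the proof.
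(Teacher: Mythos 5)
Your proposal is correct and follows essentially the same route as the paper: Hurewicz abelianization in $\pi_1(\mathcal{Z},x_0)$ for claims (1)--(2), and homology stability under perturbation for claim (3), which the paper states in one line and you merely elaborate via tubular-neighborhood homotopy and persistence. Your added caveat that perturbations must preserve closure at $x_0$ is a reasonable refinement, not a departure from the paper's argument.
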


\begin{proof}[Proof sketch]
Motor primitives concatenate as cycles in $\pi_1(\mathcal{Z},x_0)$. Passing to $H_1$, commutators vanish,
so the resulting class depends only on the cumulative 1-chain (the ``net cycle''), not the order of moves.
Thus, all permutations of a fixed set of primitives realizing the same closed trajectory define the same
homology class. Persistence follows from homology stability under perturbations.
\end{proof}

Proposition~\ref{prop:motor-order} establishes that motor execution inherits 
the same order-invariance property observed in perception: once a closed cycle 
is formed, the specific sequencing of micro-primitives becomes irrelevant, and 
goal-directed actions are encoded as persistent cycle classes. However, cognition 
does not rely on isolated perceptual or motor cycles alone. The true power of 
cycle formation emerges when these domains are coupled and recursively 
composed: perceptual cycles bind sensory fragments into coherent objects, 
motor cycles stabilize trajectories toward goals, and their integration in the 
perception-action cycle produces cross-modal invariants \cite{li2023internal}. Iterating this process 
yields a hierarchy of cycles, from micro-level trajectories to higher-order 
concepts, each stabilized by the closure principle $\partial^2=0$. The following 
theorem formalizes this hierarchical organization of cycles, showing how 
bootstrapping implements it biologically across scales.

\begin{theorem}[Bootstrapping Yields a Hierarchy of Invariant Cycles]
\label{thm:cycle-hierarchy}
Let $\mathcal{Z}$ be a latent state space with symmetry group $G$ acting on it.
Suppose inference dynamics are initially $G$-symmetric but are perturbed by
contextual selection $\Psi$, breaking the symmetry. Then we have:
1) (\emph{Local cycles}) At the micro level, each broken symmetry produces
a closed trajectory $\gamma_1\subset \mathcal{Z}$ such that $\partial \gamma_1=0$.
These cycles stabilize perceptual or motor invariants that are robust to the
order of local operations (e.g.\ saccade order in vision, primitive order in motor control).
2) (\emph{Perception-action cycles}) Coupling perceptual cycles
$[\gamma^P]\in H_1(\mathcal{Z}_P)$ with motor cycles
$[\gamma^A]\in H_1(\mathcal{Z}_A)$ yields joint cycles
$[\Gamma]\in H_1(\mathcal{Z}_P\times \mathcal{Z}_A)$.
These cycles are invariant to the ordering of both perceptual and motor steps,
including cross-modal interleavings, provided the cycle closes with respect to $\Psi$.
Thus, goal invariance emerges across modalities.
3) (\emph{Hierarchical composition}) Iterated cycle formation produces
higher-order invariants: compositions of cycles generate nontrivial homology
classes in $H_k(\mathcal{Z})$ for $k>1$, representing abstract plans and concepts
that persist across contexts. These larger cycles embody \emph{order invariance
at multiple scales}, unifying local actions into global structures.
4) (\emph{Persistence}) By $\partial^2=0$, all such cycles are closed.
Trivial cycles collapse (short-lived scaffolds), while nontrivial cycles define
persistent homology classes robust to perturbations of $\Psi$ and $f$.
Hence, memory emerges as the stable residue of broken symmetry, organized into
a nested hierarchy of invariant cycles.
\end{theorem}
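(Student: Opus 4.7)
The plan is to decompose the theorem into its four claims and assemble the proof from previously established results, using the algebraic structure of homology (abelianization via Hurewicz, Künneth decomposition for products, and the nilpotency $\partial^2=0$) as the connecting thread. The overall strategy is that each level of the hierarchy is obtained by a functorial construction (symmetry breaking, product coupling, iterated attaching/replay) which preserves closure, so that nontriviality and order-invariance are transported upward without new algebraic input.

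For claim (1), I would invoke Lemma~\ref{lem:symm-invariant} directly: the contextual perturbation $\Psi$ reduces $G$-equivariance to a residual subgroup $H\subset G$, and the orbit of the selected representative traces a closed trajectory $\gamma_1$ with $\partial\gamma_1=0$. Local order invariance then follows from Theorem~\ref{thm:order-invariance} via the Hurewicz map $h:\pi_1(\mathcal{Z},x_0)\to H_1(\mathcal{Z};\mathbb{Z})$, which abelianizes concatenation so that commutators of local moves vanish. For claim (2), I would apply the Künneth formula to obtain $H_1(\mathcal{Z}_P\times \mathcal{Z}_A)\cong H_1(\mathcal{Z}_P)\oplus H_1(\mathcal{Z}_A)$ (modulo torsion); perceptual cycles $[\gamma^P]$ and motor cycles $[\gamma^A]$ lift via the canonical inclusions to a joint class $[\Gamma]=\iota_{P,\ast}[\gamma^P]+\iota_{A,\ast}[\gamma^A]$. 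Cross-modal interleavings that remain cycle-closed represent the same joint class by the same abelianization argument, with Proposition~\ref{prop:motor-order} supplying order-invariance for the motor factor.

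For claim (3), I would proceed by induction on hierarchy depth $n$. The base case $n=1$ is supplied by claims (1)--(2). For the inductive step, level $n{+}1$ is constructed from level-$n$ cycles by cross-product or reentrant composition induced by cross-frequency nesting and replay (Proposition~\ref{prop:png-hierarchy}); closure is preserved because $\partial$ acts as a graded derivation, giving $\partial(\alpha\otimes\beta)=(\partial\alpha)\otimes\beta \pm \alpha\otimes(\partial\beta)=0$ whenever both factors are cycles. Iterated Künneth then places the composite class in $H_{k+\ell}$, and commutativity of $\otimes$ up to sign yields order invariance at the higher scale. Claim (4) follows from $\partial^2=0$ together with Lemma~\ref{lemma:dot-cycle}: trivial cycles collapse into $H_0$ scaffolds, while nontrivial cycles define classes stable under chain-homotopic perturbations of $\Psi$ and the dynamics $f$.

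The hard part will be claim (3), specifically showing that iterated composition genuinely produces \emph{new} higher-dimensional classes rather than collapsing into trivial multiples or boundaries. I plan to handle this by fixing the composition rule explicitly (singular cross product of cycles, or attaching via reentrant maps) and invoking the Künneth injection on the torsion-free part to exhibit nontriviality; the degenerate case in which a product vanishes in homology is not a defect but a physically meaningful event, corresponding to the system's failure to abstract a new invariant at that scale, which is exactly what the dot--cycle dichotomy predicts.
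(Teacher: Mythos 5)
Your proposal follows essentially the same route as the paper's own proof sketch: claim (1) via Lemma~\ref{lem:symm-invariant} with order invariance from the Hurewicz abelianization, claim (2) via the Künneth/product argument (which is exactly the content of Theorem~\ref{thm:PA-cycle}, which the paper cites in packaged form), claim (3) via iterated Künneth composition, and claim (4) via $\partial^2=0$. Your version is simply more detailed, in particular the graded-derivation closure argument and the explicit attention to nontriviality of the iterated product classes, which the paper's one-line Künneth invocation leaves implicit.
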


\begin{proof}[Proof sketch]
(1) Follows from Lemma~\ref{lem:symm-invariant}: broken symmetry and feedback induce local cycles.
(2) Uses Theorem~\ref{thm:PA-cycle}: joint cycles in $\mathcal{Z}_P\times \mathcal{Z}_A$ are
order-invariant across perception and action. (3) Higher-order cycles follow from the
Künneth theorem for homology, which describes how tensor products of cycle classes yield
higher-dimensional invariants. (4) Stability is guaranteed by the homological identity
$\partial^2=0$, ensuring persistence of nontrivial cycles. 
\end{proof}

Theorem~\ref{thm:cycle-hierarchy} establishes that cognition is organized 
through a hierarchy of cycles, beginning with local invariants and extending 
to higher-order structures that persist across contexts. Note that perception and 
action do not operate as independent hierarchies: their true significance 
emerges when they are coupled into a single recurrent cycle. Perception 
supplies invariant features that stabilize sensory interpretation, while 
action closes trajectories that stabilize motor execution. The feedback 
between these domains ensures that neither side remains open-ended \cite{fuster2004upper}: 
perceptual cycles must be anchored to outcomes, and motor cycles must be 
guided by contextual scaffolds. This mutual dependency is captured by 
CCUP \cite{li2025CCUP}, which aligns context ($\Psi$) with content ($\Phi$) across modalities. 
The following theorem formalizes this principle by showing how the 
perception–action cycle itself constitutes a higher-order invariant: 
a closed cycle in the joint space $\mathcal{Z}_P\times\mathcal{Z}_A$, robust 
to the ordering and interleaving of perceptual and motor steps.

\begin{theorem}[Perception-Action Cycles as Higher-Order Order Invariants]
\label{thm:PA-cycle}
Let $\mathcal{Z}_P$ be a perceptual latent space with local features $\Phi$ and
$\mathcal{Z}_A$ be an action latent space with motor primitives $\mu$.
Suppose:
1) Perceptual glimpses $\gamma^P$ close to cycles $[\gamma^P]\in H_1(\mathcal{Z}_P)$
that are invariant to the order of glimpses (Proposition~\ref{prop:ght-saccade}).
2) Motor trajectories $\gamma^A$ close to cycles $[\gamma^A]\in H_1(\mathcal{Z}_A)$
that are invariant to the order of primitives (Proposition~\ref{prop:motor-order}).
3) Perception and action are coupled by a feedback operator
$\mathcal{F}:\mathcal{Z}_P\times\mathcal{Z}_A\to\mathcal{Z}_P\times\mathcal{Z}_A$
that enforces reciprocal updating (CCUP).
Then the joint trajectory
$\Gamma=(\gamma^P,\gamma^A)$
defines a cycle $[\Gamma]\in H_1(\mathcal{Z}_P\times \mathcal{Z}_A)$
that is invariant not only to the order of perceptual glimpses and the order of motor primitives separately,
but also to any interleaving of perception and action steps, provided the joint cycle closes.
This yields a \emph{perception–action cycle} as a higher-order cycle representing goal invariance across modalities.
\end{theorem}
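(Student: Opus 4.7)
The plan is to lift the joint trajectory to the fundamental group of the product space, invoke the Künneth decomposition to split the resulting class into perceptual and motor summands, and then import the order-invariance already established in Propositions~\ref{prop:ght-saccade} and \ref{prop:motor-order} for each summand separately. The abelianization that happens in passing from $\pi_1$ to $H_1$ is exactly what absorbs the interleaving: commutators of cross-modal segments vanish, so any shuffling of perception-steps against action-steps drops out.

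First, I would fix a joint basepoint $z_0=(x_0^P,x_0^A)\in\mathcal{Z}_P\times\mathcal{Z}_A$ and view any executed word $w$, interleaving perceptual primitives $\mu^P_i$ and motor primitives $\mu^A_j$, as lifting to a closed path $\Gamma_w$ in the product. The two coordinate projections $\pi_P$ and $\pi_A$ send $\Gamma_w$ to the concatenation of the perceptual sub-word $\gamma^P_w$ and the motor sub-word $\gamma^A_w$ respectively; closure of $\Gamma_w$ at $z_0$ is equivalent to simultaneous closure of both projected cycles at their respective base states (this is where the feedback operator $\mathcal{F}$ earns its keep, it is the dynamical mechanism that enforces simultaneous closure, but it does not alter the ambient chain complex of $\mathcal{Z}_P\times\mathcal{Z}_A$).

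Second, I would apply the Hurewicz homomorphism $h:\pi_1(\mathcal{Z}_P\times\mathcal{Z}_A,z_0)\to H_1(\mathcal{Z}_P\times\mathcal{Z}_A;\mathbb{Z})$ together with the product isomorphism $\pi_1(\mathcal{Z}_P\times\mathcal{Z}_A,z_0)\cong \pi_1(\mathcal{Z}_P,x_0^P)\times \pi_1(\mathcal{Z}_A,x_0^A)$ and the Künneth formula for path-connected spaces, yielding
\[
H_1(\mathcal{Z}_P\times\mathcal{Z}_A;\mathbb{Z})\;\cong\;H_1(\mathcal{Z}_P;\mathbb{Z})\oplus H_1(\mathcal{Z}_A;\mathbb{Z}).
\]
Under this decomposition, $[\Gamma_w]$ maps to the pair $([\gamma^P_w],[\gamma^A_w])$. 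Since Propositions~\ref{prop:ght-saccade} and \ref{prop:motor-order} give order invariance of each summand within its own factor, and since the two summands live in different factors of a direct sum, any shuffle of the mixed word that preserves (i) the multiset of perceptual primitives, (ii) the multiset of motor primitives, and (iii) joint closure, leaves both summands unchanged and hence leaves $[\Gamma_w]$ unchanged. Persistence of $[\Gamma_w]$ under perturbations then follows from the homological stability used in Theorem~\ref{thm:cycle-hierarchy}.

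The main obstacle I anticipate is showing that the feedback coupling $\mathcal{F}$ does not spoil the use of the product structure. A skeptic could worry that $\mathcal{F}$ carves out a proper subspace of $\mathcal{Z}_P\times\mathcal{Z}_A$ whose first homology fails to split as a Künneth sum. I would neutralize this by emphasizing that $\mathcal{F}$ is a \emph{selection rule on realized trajectories}, not a change of ambient topology: the homology class $[\Gamma_w]$ is computed in the full product complex, and $\mathcal{F}$ only restricts which representative chains the dynamics actually traverses. Because homology is blind to dynamical selection and depends only on closure ($\partial\Gamma_w=0$) in the ambient space, the Künneth splitting applies to every realized $[\Gamma_w]$, and the interleaving-invariance conclusion follows.
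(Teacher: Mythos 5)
Your proposal is correct and follows essentially the same route as the paper's own proof sketch: split $H_1(\mathcal{Z}_P\times\mathcal{Z}_A)$ as $H_1(\mathcal{Z}_P)\oplus H_1(\mathcal{Z}_A)$ via the product/K\"unneth structure, use the Hurewicz abelianization to collapse cross-modal interleavings, and import the separate order-invariance from Propositions~\ref{prop:ght-saccade} and~\ref{prop:motor-order}. Your additional care in treating $\mathcal{F}$ as a selection rule on realized trajectories rather than a modification of the ambient space is a welcome clarification, but it elaborates rather than departs from the paper's argument.
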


\begin{proof}[Proof sketch]
Closure in $\mathcal{Z}_P$ and $\mathcal{Z}_A$ separately ensures $[\gamma^P]\in H_1(\mathcal{Z}_P)$
and $[\gamma^A]\in H_1(\mathcal{Z}_A)$ are well defined and order-invariant.
The product space $\mathcal{Z}_P\times \mathcal{Z}_A$ has homology given by
$H_1(\mathcal{Z}_P)\oplus H_1(\mathcal{Z}_A)$.
Interleavings of perception and action correspond to different concatenations in $\pi_1$, but
under the Hurewicz map to $H_1$ these permutations commute.
Thus $[\Gamma]$ is invariant to any reordering of perceptual or motor steps, including cross-modal orderings,
as long as the cycle closes jointly.
\end{proof}

\noindent Theorem~\ref{thm:PA-cycle} establishes that perception and action, 
when coupled through reciprocal feedback, form joint cycles that remain 
invariant under arbitrary interleavings of glimpses and motor primitives. 
This result highlights that the essence of cognition is not merely in the 
separate closure of perceptual or motor loops, but in their synchronized 
integration into higher-order invariants spanning modalities \cite{swenson1991thermodynamic}. To formalize 
how such integration is achieved, one requires a framework that treats 
local perceptual fragments and global motor goals within the same algebraic 
apparatus \cite{ayzenberg2025sheaf}. Sheaf and co-sheaf theory provide exactly this: sheaves describe 
how local features glue into coherent global percepts, while co-sheaves 
describe how global intentions decompose into locally executable primitives. 
These dual constructions allow us to generalize the topological closure 
principle ($\partial^2=0$) from joint cycles to the binding of perception 
and planning, motivating the following subsection.


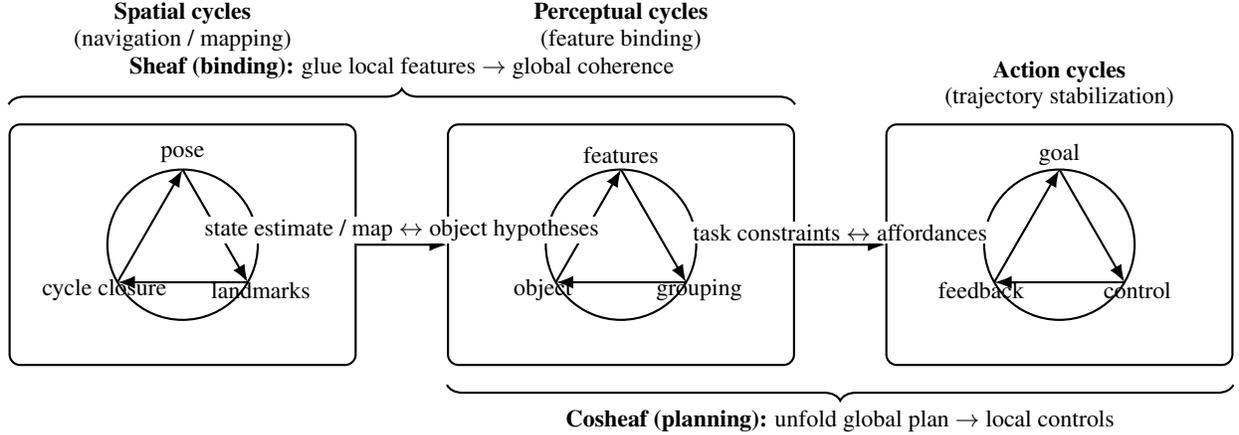
\begin{figure}[t]
\centering
\begin{tikzpicture}[
  font=\small,
  panel/.style={rounded corners, draw, thick, inner sep=6pt, minimum width=4.6cm, minimum height=3.2cm},
  cyc/.style={draw, thick},
  arrow/.style={-Latex, thick},
  lab/.style={inner sep=1pt, fill=white, anchor=south},
  every node/.style={align=center}
]

\node[panel] (spatial) {~};
\node[panel, right=1.2cm of spatial] (percept) {~};
\node[panel, right=1.2cm of percept] (action) {~};

\node[above=24pt of spatial.north] {\textbf{Spatial cycles}\\(navigation / mapping)};
\node[above=24pt of percept.north] {\textbf{Perceptual cycles}\\(feature binding)};
\node[above=2pt of action.north] {\textbf{Action cycles}\\(trajectory stabilization)};

\path (spatial.center) node (sC) {};
\draw[cyc] ($(sC)+(0,0)$) circle [radius=1.0cm];
\foreach \ang/\lbl in {90/{pose},-30/{landmarks},210/{cycle~closure}}{
  \node at ($(sC)+({1.2*cos(\ang)},{1.2*sin(\ang)})$) {\lbl};
}
\foreach \a/\b in {90/-30,-30/210,210/90}{
  \draw[arrow] ($(sC)+({1.0*cos(\a)},{1.0*sin(\a)})$) -- ($(sC)+({1.0*cos(\b)},{1.0*sin(\b)})$);
}

\path (percept.center) node (pC) {};
\draw[cyc] ($(pC)+(0,0)$) circle [radius=1.0cm];
\foreach \ang/\lbl in {90/{features},-30/{grouping},210/{object}}{
  \node at ($(pC)+({1.2*cos(\ang)},{1.2*sin(\ang)})$) {\lbl};
}
\foreach \a/\b in {90/-30,-30/210,210/90}{
  \draw[arrow] ($(pC)+({1.0*cos(\a)},{1.0*sin(\a)})$) -- ($(pC)+({1.0*cos(\b)},{1.0*sin(\b)})$);
}

\path (action.center) node (aC) {};
\draw[cyc] ($(aC)+(0,0)$) circle [radius=1.0cm];
\foreach \ang/\lbl in {90/{goal},-30/{control},210/{feedback}}{
  \node at ($(aC)+({1.2*cos(\ang)},{1.2*sin(\ang)})$) {\lbl};
}
\foreach \a/\b in {90/-30,-30/210,210/90}{
  \draw[arrow] ($(aC)+({1.0*cos(\a)},{1.0*sin(\a)})$) -- ($(aC)+({1.0*cos(\b)},{1.0*sin(\b)})$);
}

\draw[arrow] (spatial.east) -- node[above,lab]{state estimate / map $\leftrightarrow$ object hypotheses} (percept.west);
\draw[arrow] (percept.east) -- node[above,lab]{task constraints $\leftrightarrow$ affordances} (action.west);

\draw[decorate, decoration={brace, amplitude=6pt}, thick]
  ($(spatial.north west)+(0,0.25)$) -- node[above=6pt]{\textbf{Sheaf (binding):} glue local features $\to$ global coherence}
  ($(percept.north east)+(0,0.25)$);

\draw[decorate, decoration={brace, mirror, amplitude=6pt}, thick]
  ($(percept.south west)+(-0, -0.25)$) -- node[below=6pt]{\textbf{Cosheaf (planning):} unfold global plan $\to$ local controls}
  ($(action.south east)+(0, -0.25)$);

\end{tikzpicture}
\caption{Cycles as the organizing principle: spatial (navigation), perceptual (binding), and action (control) cycles are coupled. A \emph{sheaf} perspective formalizes binding (local-to-global), while a \emph{cosheaf} perspective formalizes planning (global-to-local).}
\label{fig:three-cycles}
\end{figure}

\section{More is Different: from High-order Invariance to Global Consciousness}
\label{sec:5}

\noindent Earlier, we showed that homing trajectories yield \emph{order-invariant}
1-cycles: once a path closes, its homology class depends only on the multiset
of moves, not their literal order. In realistic cognition, however, many such
loops co-occur and interact across modalities (perception, action, planning),
and what matters is not merely the invariance of each loop in isolation but the
\emph{invariance of their joint behavior} under interleaving and recombination.
This is the sense in which, echoing Anderson’s dictum that ``more is different,'' \cite{anderson1972more}
closure must operate at higher orders: invariants of invariants become the
relevant objects. To make this precise, we introduce a definition that lifts
order invariance from single loops to \emph{systems of loops} whose closures
persist jointly across contexts and modalities.

\begin{definition}[High-order Invariance]
\label{def:high-order-invariance}
Let $\{\gamma_i\}_{i=1}^m$ be cycles arising from perceptual, motor, or 
cognitive subsystems, each with homology class $[\gamma_i]\in H_{k_i}(\mathcal{Z})$.
A \emph{high-order invariant} is a class $[\Gamma]\in H_K(\mathcal{Z})$, 
with $K \geq \max_i k_i$, such that:
1) Each $[\gamma_i]$ is preserved under local order permutations 
(order invariance), and
2) Their joint closure $\Gamma = \cup_i \gamma_i$ survives under 
interleaving or recombination, i.e. $\partial^2 \Gamma = 0$.
\end{definition}

\noindent Definition~\ref{def:high-order-invariance} characterizes high-order
invariance as the persistence of joint cycles across modalities and
interleavings. While this algebraic formulation captures the essence of
closure in abstract terms, it leaves open the question of how such invariants
are realized in concrete cognitive systems. In practice, perception and action
operate at different granularities: local perceptual glimpses must be assembled
into coherent wholes, while global motor intentions must be decomposed into
fine-grained actions \cite{baltieri2019nonmodular}. To connect the abstract definition with these dual
processes, we turn to the language of sheaf and co-sheaf theory \cite{curry2014sheaves}, which
formalizes precisely the local-to-global and global-to-local mappings required
for binding and planning.

\subsection{High-order Invariant: Sheaf/Co-sheaf Views of Binding and Planning}

The motivation for adopting sheaf and co-sheaf perspectives arises from the 
need to formalize how local fragments of perception and global structures of 
action are integrated into coherent cycles. In the sheaf view, local feature 
assignments (edges, colors, glimpses) must be consistently ``glued’’ together 
to yield a global percept, capturing the essence of the binding problem \cite{treisman1980feature}. In 
the co-sheaf view, global intentions or plans (e.g.\ motor goals, task-level 
strategies) must be decomposed into locally executable primitives, capturing 
the essence of planning \cite{todorov2002optimal}. These dual perspectives resonate directly with the 
dot–cycle dichotomy: trivial fragments that collapse to points ($H_0$) represent 
local data without global consistency, while nontrivial cycles ($H_1$ and higher) 
encode the persistence of closed, order-invariant structures that survive 
integration. Therefore, the sheaf formalism provides a mathematical account of 
binding from local-to-global, while co-sheaves capture the reverse process of 
global-to-local decomposition, as shown in Fig. \ref{fig:three-cycles}. Together, they extend the topological view of 
memory and action, grounding both perception and planning in the algebraic 
principle of topological closure ($\partial^2 = 0$) \cite{hatcher2002algebraic}. We start with the following lemma on binding as sheaf gluing.

\begin{lemma}[Binding as Sheaf Gluing]
\label{lem:sheaf-binding}
Let $\mathcal{X}$ be a latent context space with an open cover $\{U_i\}_{i\in I}$.
A \emph{sheaf} $\mathcal{F}$ on $\mathcal{X}$ assigns to each $U_i$ a set of local contents 
$\mathcal{F}(U_i)$ and to each inclusion $U_i \subseteq U_j$ a restriction map 
$\rho_{ij}: \mathcal{F}(U_j) \to \mathcal{F}(U_i)$.
Suppose local sections $s_i \in \mathcal{F}(U_i)$ agree on all overlaps:
$\rho_{ij}(s_j) = \rho_{ji}(s_i), \quad \forall i,j \in I$.
By the sheaf gluing axiom, there exists a unique global section 
$s \in \mathcal{F}(\bigcup_i U_i)$ restricting to each $s_i$.
1) This formalizes \emph{binding}: distributed local features (contents) can be unified into a coherent global percept 
if and only if they agree on overlaps (contexts).
2) Obstructions to global sections correspond to homology classes in $H^k(\mathcal{X};\mathcal{F})$, 
capturing unstable bindings.
\end{lemma}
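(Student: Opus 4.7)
The plan is to derive Part (1) directly from the sheaf axioms, since the statement is essentially a categorical restatement of the gluing axiom applied to the specific compatible family $\{s_i\}$. First, I would recall the two constitutive axioms of a sheaf on $\mathcal{X}$: the separation (identity) axiom, which asserts that any section over $U=\bigcup_i U_i$ is determined by its restrictions to the $U_i$, and the gluing axiom, which asserts that a family of local sections agreeing on all pairwise overlaps extends to a section over $U$. Given the compatibility hypothesis $\rho_{ij}(s_j)=\rho_{ji}(s_i)$ on every $U_i\cap U_j$, the gluing axiom immediately produces a section $s\in\mathcal{F}(U)$ with $\rho_{U,U_i}(s)=s_i$, and the separation axiom yields uniqueness of such $s$.

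Next, I would translate this algebraic statement into the binding interpretation. The key move is to identify the open cover $\{U_i\}$ with distributed cortical or modality-specific contextual patches, and each local section $s_i\in\mathcal{F}(U_i)$ with the feature hypothesis carried by that patch. The overlap conditions then formalize the coincidence-detection requirement from earlier sections: neighboring patches must agree on their shared coordinates, which operationally corresponds to boundary cancellation in the chain-complex formulation. The existence and uniqueness of $s$ provide the formal analogue of a coherent global percept, while failure of the overlap equations corresponds to a broken binding.

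For Part (2), the plan is to pass to \v{C}ech cohomology with coefficients in $\mathcal{F}$ relative to the cover $\{U_i\}$. I would organize the local sections into a \v{C}ech 0-cochain and observe that the coboundary $\delta s = (\rho_{ij}(s_j)-\rho_{ji}(s_i))_{i<j}$ is a 1-cocycle measuring the defect of compatibility; a global section exists iff this cocycle is a coboundary, i.e.\ trivial in $\check{H}^1(\{U_i\};\mathcal{F})$. Taking the colimit over refinements yields $H^1(\mathcal{X};\mathcal{F})$ as the invariant home of binding obstructions, and higher obstructions to extending partial gluings live in $H^k$ for $k\ge 2$ via the standard spectral-sequence arguments. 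This step is essentially formal once one is willing to work with \v{C}ech resolutions.

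The main obstacle is conceptual rather than technical: namely, justifying that the cognitive notion of binding is faithfully modeled by a genuine sheaf rather than merely a presheaf. The gluing axiom presupposes that the restriction maps $\rho_{ij}$ are well-defined and that overlap compatibility is sufficient for global coherence; in neural implementations this corresponds to the nontrivial claim that phase-locked coincidence detection (as in Lemma~\ref{lem:coincidence-closure}) actually realizes the restriction-and-glue structure, so that local agreement propagates to global consistency. I would address this by noting that the oscillatory scaffold $\Psi$ supplies the site structure (the open cover), while coincidence detection enforces the separation axiom by pruning disagreeing sections, thereby legitimizing the sheaf formalization of binding and locating unstable bindings precisely in the cohomology groups $H^k(\mathcal{X};\mathcal{F})$.
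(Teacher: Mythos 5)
Part (1) of your proposal is correct and is essentially the paper's own route: the paper supplies no separate argument for this lemma beyond the invocation of the gluing and identity (separation) axioms, which it recalls in Appendix B, so your existence-plus-uniqueness derivation from those two axioms, followed by the binding interpretation, matches what the paper does.

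Your elaboration of Part (2), however, contains a genuine technical slip. You define the defect of the family $\{s_i\}$ as the \v{C}ech coboundary $\delta s$ of the $0$-cochain $s=(s_i)$ and then assert that a global section exists iff this $1$-cocycle ``is a coboundary, i.e.\ trivial in $\check{H}^1$.'' But $\delta s$ is a coboundary \emph{by construction}, hence always trivial in $\check{H}^1$; taken literally, your criterion would imply that every family of local sections glues, which is false. The correct elementary statement is that the family glues iff the defect cochain vanishes identically, $\delta s = 0$, and for an honest sheaf this is exactly the overlap-compatibility hypothesis, so there is no cohomological obstruction at this level. Obstruction classes in $H^1(\mathcal{X};\mathcal{F})$ (which the paper loosely calls ``homology classes'') genuinely appear only in twisted or lifting situations, e.g.\ when the $s_i$ are local lifts along an epimorphism of sheaves and the obstruction is the image under the connecting map of a long exact sequence, or when the gluing data form a torsor; if you want Part (2) to be more than a slogan, you should phrase it in one of those settings rather than as a property of the plain gluing problem. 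Since the paper itself leaves Part (2) unproved and informal, this does not put you at odds with its argument, but as written your \v{C}ech step would not survive scrutiny.
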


Lemma~\ref{lem:sheaf-binding} shows how perception can be formalized as a 
sheaf-gluing problem: local fragments of information become meaningful only 
when they agree on overlaps, yielding a unique global section that unifies 
distributed features into a coherent percept. Note that cognition does not end 
with perception; the system must also project global intentions back into 
concrete actions. This requires the dual construction: whereas sheaves move 
from local-to-global by enforcing consistency, cosheaves move from 
global-to-local by extending higher-level structures into executable 
primitives \cite{curry2014sheaves}. In this sense, perception and planning are categorical duals: 
binding corresponds to the existence of global sections, while planning 
corresponds to the existence of global colimits. The following proposition 
makes this duality explicit, showing how cosheaf theory provides a natural 
formalism for planning as the decomposition of global goals into consistent 
local actions.

\begin{proposition}[Planning as Cosheaf Colimit]
\label{prop:cosheaf-planning}
Let $\mathcal{X}$ be a latent action manifold with an open cover $\{U_i\}_{i\in I}$.
A \emph{cosheaf} $\mathcal{G}$ on $\mathcal{X}$ assigns to each $U_i$ a set of local motor policies 
$\mathcal{G}(U_i)$ and to each inclusion $U_i \subseteq U_j$ an extension map 
$\iota_{ij}: \mathcal{G}(U_i) \to \mathcal{G}(U_j)$.
The global planning space is then the colimit
$\varinjlim_{i\in I}\, \mathcal{G}(U_i)$,
which coherently extends local primitives into larger-scale action plans.
1) This formalizes \emph{planning}: local moves or policies can be extended into a consistent global trajectory 
if their overlaps are compatible under $\iota_{ij}$.
2) Obstructions to global extension correspond to cosheaf homology classes $H_k(\mathcal{X};\mathcal{G})$, 
capturing deadlocks or infeasible plans.
\end{proposition}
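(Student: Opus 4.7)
The plan is to dualize the sheaf-gluing argument of Lemma~\ref{lem:sheaf-binding}: replace restriction maps by extension maps, limits by colimits, and global sections by universal cocones. First I would interpret the cosheaf $\mathcal{G}$ as a covariant functor from the poset of inclusions of $\{U_i\}_{i\in I}$ into a cocomplete target category (e.g., sets of admissible motor primitives, or more usefully chain complexes of policies). The global planning space is then defined explicitly as $P := \varinjlim_{i\in I}\mathcal{G}(U_i)$, realized concretely as the quotient of the coproduct $\bigsqcup_{i\in I}\mathcal{G}(U_i)$ by the equivalence relation generated by $s \sim \iota_{ij}(s)$ whenever $s \in \mathcal{G}(U_i)$ and $U_i \subseteq U_j$. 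This construction makes precise the intuition that local primitives coherently fuse into larger action plans.

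For claim (1), I would invoke the universal property of the colimit. Given a compatible family $\{s_i \in \mathcal{G}(U_i)\}_{i\in I}$ whose extensions agree on all pairwise (and iterated triple) overlaps under $\iota_{ij}$, the defining property of $\varinjlim$ yields a unique element $s \in P$ such that each canonical map $\mathcal{G}(U_i) \to P$ sends $s_i$ to $s$. Thus local motor primitives satisfying the overlap compatibility conditions extend to a unique coherent global plan, exactly mirroring the sheaf-gluing axiom but in the dual direction (global-to-local decomposition reconstructs through local-to-global extension).

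For claim (2), I would build the Čech-style cosheaf chain complex associated with the nerve of the cover, $C_k(\mathcal{X};\mathcal{G}) = \bigoplus_{i_0 < \cdots < i_k} \mathcal{G}(U_{i_0}\cap\cdots\cap U_{i_k})$, with boundary operator induced by alternating sums of extension maps $\iota$. Cosheaf homology $H_k(\mathcal{X};\mathcal{G})$ is then defined as $\ker\partial_k / \operatorname{im}\partial_{k+1}$, and $\partial^2 = 0$ follows from the usual simplicial cancellation on triple overlaps. Obstructions to globalizing a locally compatible family correspond precisely to nonvanishing classes in $H_k$: a deadlock or infeasible plan manifests as a Čech cycle of local policies that fails to bound, meaning its image in the colimit is forced to vanish only after quotienting by an obstruction class. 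Vanishing of the relevant $H_k$ therefore certifies feasibility, while nontrivial classes detect circular dependencies among local policies that cannot be resolved into a consistent global trajectory.

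The hard part will be making the identification between ``planning deadlocks'' and homology classes rigorous in a way that is cognitively meaningful rather than merely formal. Specifically, one must (i) fix a target category rich enough to encode motor primitives together with their composition and feasibility constraints (chain complexes of policy fragments seem the most natural choice), (ii) pin down what ``compatible on overlaps'' means at all simplicial depths, not just pairwise, and (iii) exhibit a concrete planning-theoretic translation of a nontrivial Čech cycle as a closed chain of locally executable but globally self-defeating commitments. Once these are in place, both claims follow from standard universal-property and homological arguments, and the duality with Lemma~\ref{lem:sheaf-binding} closes the sheaf-cosheaf picture of binding and planning.
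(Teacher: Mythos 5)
The paper offers no proof of Proposition~\ref{prop:cosheaf-planning} at all: it is stated as an essentially definitional/interpretive claim, with the supporting machinery (co-gluing axiom, colimit as global planning space) relegated to the appendix background on cosheaves, and the only place the paper actually computes anything Čech-theoretic is the later proof sketch of Corollary~\ref{cor:PA-sheaf-cosheaf}. Your proposal therefore does more than the paper: the explicit colimit construction (coproduct modulo $s \sim \iota_{ij}(s)$), the universal-property argument for claim (1), and the Čech-style cosheaf chain complex with $\partial^2=0$ for claim (2) are exactly the standard constructions the paper implicitly leans on, and they are consistent with how the nerve $N(\mathcal{U})$ is used later in the corollary. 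Two points are worth tightening. First, the proposition only equips $\mathcal{G}$ with extension maps along inclusions among the cover elements $U_i$; for ``agreement on overlaps'' to enter the colimit at all, the indexing diagram must include the intersections $U_{i_0}\cap\cdots\cap U_{i_k}$ (or the cover must be closed under intersection), since otherwise the quotient you describe never sees compatibility data --- you gesture at this in claim (1) but it should be made part of the construction, not an afterthought. Second, your identification of ``deadlocks'' with nonbounding Čech cycles is directionally right but stated loosely (``forced to vanish only after quotienting by an obstruction class''): the clean statement is that $H_0$ of the Čech cosheaf complex computes the colimit (global plans), and higher $H_k$ measures the failure of locally compatible policy families to assemble, which is the precise sense in which nontrivial classes encode infeasibility. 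You correctly flag this as the hard part; with those two repairs your argument is complete and, if anything, supplies the rigor the paper omits.
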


Proposition~\ref{prop:cosheaf-planning} highlights the dual role of cosheaves in 
decomposing global action plans into locally executable primitives, complementing 
the sheaf-based account of perceptual binding. But perception and action do not 
operate in isolation: their integration is what closes the cycle of cognition. 
Sheaves capture how local perceptual fragments ($\Phi$) glue into coherent global 
representations, while cosheaves describe how global motor intentions ($\Psi$) 
unfold into coordinated local policies. The crucial step is their alignment: 
local perceptual sections and local motor co-sections must not only be internally 
consistent, but also mutually compatible. When this condition is satisfied, the 
pairing of sheaf and cosheaf data produces closed cycles in the nerve of the 
context cover, ensuring invariant information flow across modalities \cite{robinson2017sheaves}. The 
following corollary formalizes this synthesis, showing how a paired 
sheaf–cosheaf system naturally instantiates the perception–action cycle as a 
topological invariant.

\begin{corollary}[Perception-Action as a Paired Sheaf-Cosheaf System]
\label{cor:PA-sheaf-cosheaf}
Let $\mathcal{X}$ be a latent context space with an open cover $\mathcal{U}=\{U_i\}_{i\in I}$.
Let $\mathcal{F}$ be a sheaf of \emph{contents} on $\mathcal{X}$ and $\mathcal{G}$ a cosheaf of
\emph{policies} (motor primitives/plans) on $\mathcal{X}$. Assume a natural pairing on each open set
$\langle \cdot,\cdot\rangle_U:\ \mathcal{F}(U)\times \mathcal{G}(U)\to \mathbb{R}$
that is \emph{natural with respect to restriction/extension}: for $U\subseteq V$,
$\langle \rho_{UV}(s_V),\, g_U\rangle_U \;=\; \langle s_V,\, \iota_{UV}(g_U)\rangle_V,
\quad s_V\in \mathcal{F}(V),\ g_U\in \mathcal{G}(U)$.
Suppose (i) local sections $\{s_i\in \mathcal{F}(U_i)\}$ satisfy Čech compatibility on all overlaps
(Lemma~\ref{lem:sheaf-binding}), and (ii) local co-sections $\{g_i\in \mathcal{G}(U_i)\}$ are
mutually extendable along overlaps (Proposition~\ref{prop:cosheaf-planning}). If the pairing is
nondegenerate on all pairwise overlaps $U_{ij}=U_i\cap U_j$, then:
1) (\emph{Global Perception-Action alignment}) There exists a unique global section
$s\in \mathcal{F}(\bigcup_i U_i)$ and a canonical colimit element
$[g]\in \varinjlim_{i}\mathcal{G}(U_i)$ such that all pairing squares commute:
$\langle s|_{U_i},\, g_i\rangle_{U_i}$ is consistent across overlaps.
2) (\emph{cycle formation in the nerve}) Define the Čech $1$-cochain on the nerve
$N(\mathcal{U})$ by $\omega_{ij} \coloneqq \langle s_i, g_j\rangle_{U_{ij}} - \langle s_j, g_i\rangle_{U_{ij}}$.
Naturality implies $\delta \omega = 0$, hence $\omega$ determines a closed $1$-cycle
$[\gamma]\in H_1\!\big(N(\mathcal{U});\mathbb{R}\big)$ that is independent of the order of
refinement or interleaving of local perception/action updates.
3) (\emph{Invariant information flow}) The class $[\gamma]$ is persistent under small
perturbations of restrictions $\rho$ and extensions $\iota$ (by $\delta^2=0$ / $\partial^2=0$ in the
Čech chain complex). 
\end{corollary}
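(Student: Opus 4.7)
The plan is to handle the three claims in sequence, relying on the sheaf gluing axiom for (1), a direct Čech cocycle verification for (2), and the stability of cohomology classes under chain-level perturbations for (3). For claim (1), I would invoke Lemma~\ref{lem:sheaf-binding} to extract the unique global section $s \in \mathcal{F}(\bigcup_i U_i)$ from the compatible family $\{s_i\}$, and then apply Proposition~\ref{prop:cosheaf-planning} to obtain the canonical colimit representative $[g] \in \varinjlim_i \mathcal{G}(U_i)$ from the mutually extendable family $\{g_i\}$. Commutativity of the pairing squares on every $U_i$ then follows from the naturality identity: restricting $s$ back to $U_i$ returns $s_i$ by uniqueness of gluing, extending $g_i$ via $\iota$ into the colimit yields the class $[g]$, and the pairing $\langle s|_{U_i}, g_i\rangle_{U_i}$ therefore agrees with its image on every refinement.

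For claim (2), the core task is to check that the Čech coboundary $(\delta\omega)_{ijk}$ vanishes on every triple overlap $U_{ijk} = U_i \cap U_j \cap U_k$. Expanding the alternating sum yields $(\delta\omega)_{ijk} = \omega_{jk}|_{U_{ijk}} - \omega_{ik}|_{U_{ijk}} + \omega_{ij}|_{U_{ijk}}$, which unfolds into six terms of the form $\pm \langle s_a, g_b\rangle$ with $a,b \in \{i,j,k\}$, $a \neq b$. Using the naturality identity $\langle \rho_{UV}(s_V), g_U\rangle_U = \langle s_V, \iota_{UV}(g_U)\rangle_V$, I would transport all six pairings into a common ambient pairing on $U_{ijk}$, at which point the antisymmetric structure of $\omega_{\cdot\cdot}$ forces the six terms to cancel in three antipodal pairs. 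This gives $\delta\omega = 0$ and a well-defined class $[\gamma] \in H_1(N(\mathcal{U}); \mathbb{R})$, invariant under cover refinement by the standard comparison between Čech complexes of refined covers.

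For claim (3), persistence under perturbations of $\rho$ and $\iota$ reduces to the observation that a first-order perturbation of the restriction/extension maps changes $\omega$ by a quantity that, by the naturality identity, factors through an exact (coboundary) term modulo $O(\varepsilon^2)$. Hence the cohomology class $[\gamma]$ is unchanged. Equivalently, the identity $\delta^2 = 0$ (the Čech-level avatar of $\partial^2 = 0$) guarantees that any two admissible orders of local perception-action updates differ by a coboundary and therefore determine the same class, which is the claimed invariance of information flow under interleaving.

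The hard part will be the cocycle verification in claim (2): although the bookkeeping looks like a routine six-term cancellation, the pairing asymmetrically couples a sheaf section to a cosheaf co-section, so applying naturality requires choosing, for each of the six terms, whether to restrict on the $\mathcal{F}$-side or extend on the $\mathcal{G}$-side in order to land in a common ambient object. A systematic rule — restrict $s_a$ to the smallest overlap containing it and extend $g_b$ to the same overlap — should produce the desired cancellation, but verifying that nondegeneracy of the pairing on pairwise overlaps suffices to promote this cocycle identity to a genuine persistent homology class is the substantive content of the argument and will require the bulk of the technical work.
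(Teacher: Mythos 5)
Your treatment of claims (1) and (3) follows essentially the same route as the paper's own sketch: the global section $s$ from sheaf gluing (Lemma~\ref{lem:sheaf-binding}), the colimit element from Proposition~\ref{prop:cosheaf-planning}, commuting pairing squares from naturality, and stability of the class from functoriality of the \v{C}ech complex ($\delta^2=0$). The genuine problem is claim (2), which you rightly identify as the crux but then propose to settle by a mechanism that does not work. Writing $f(a,b)$ for the transported value of $\langle s_a, g_b\rangle$ on a common overlap, the coboundary is
$(\delta\omega)_{ijk}=f(j,k)-f(k,j)-f(i,k)+f(k,i)+f(i,j)-f(j,i)$,
the full antisymmetrization of $f$ over the three indices. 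For a generic $f$ this does \emph{not} vanish: take $f(a,b)=\phi(a)\psi(b)$ with $\phi$ the indicator of $i$ and $\psi$ the indicator of $j$, giving $(\delta\omega)_{ijk}=1$. So ``naturality plus the antisymmetric structure of $\omega$'' cannot force cancellation in three antipodal pairs; the pair $f(j,k)$, $-f(k,j)$ cancels only if the transported pairing is \emph{symmetric} in its indices, and that symmetry is exactly what hypotheses (i)--(ii) must be invoked to supply (the $s_i$ agreeing on overlaps and the $g_i$ extending a common element collapse all six pairings to restrictions of a single value). The paper's sketch, thin as it is, attributes the vanishing to ``the commuting squares,'' i.e.\ to the compatibility data; your step (2) never actually uses (i)--(ii), and you explicitly defer the verification, so the central identity $\delta\omega=0$ is left unproved.

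There is also a structural obstacle to your ``systematic rule.'' A cosheaf only has extension maps $\iota_{UV}\colon\mathcal{G}(U)\to\mathcal{G}(V)$ for $U\subseteq V$, so there is no canonical image of $g_j\in\mathcal{G}(U_j)$ in $\mathcal{G}(U_{ij})$ or $\mathcal{G}(U_{ijk})$: ``extend $g_b$ to the (smaller) overlap'' is not an operation the structure provides, and the naturality identity only trades a restriction on the $\mathcal{F}$ side for an extension of a cosection into a \emph{larger} open. An honest proof must pair sections with cosections chosen on the small opens (witnesses $g_{ij}\in\mathcal{G}(U_{ij})$ of extendability, pushed up by $\iota$), or rewrite $\omega$ entirely in terms of such witnesses--either way the compatibility hypotheses enter the computation explicitly. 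Two further flags if you do carry this out: once (i)--(ii) are used in full, $\omega$ typically becomes exact ($\omega_{ij}=h(j)-h(i)$ with $h(a)=\langle s,g_a\rangle$), so the surviving class may be trivial; and the statement (like the paper) silently identifies a \v{C}ech $1$-cocycle class in $H^1$ with a $1$-cycle class in $H_1(N(\mathcal{U});\mathbb{R})$, which over $\mathbb{R}$ is defensible only via duality. Neither of these is your error, but both bear on what the ``persistent class'' in (2)--(3) actually is.
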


\begin{proof}[Proof sketch]
(1) Existence/uniqueness of $s$ follows from sheaf gluing; existence of $[g]$ from the cosheaf colimit.
Naturality of the pairing yields commuting squares on overlaps. (2) The definition of $\omega$ and the
commuting squares imply its Čech coboundary vanishes, so it represents a closed $1$-cycle on the nerve;
order-invariance follows from abelianization at the cohomological level. (3) Stability follows from the
functoriality of Čech (co)homology under small perturbations of the cover and maps, hence persistence of
$[\gamma]$ as the Perception-Action invariant. \qedhere
\end{proof}

\begin{example}[Eye-Hand Coordination as a Sheaf-Cosheaf System]
\label{ex:eye-hand-sheaf-cosheaf}
Consider a grasping task in which visual perception (sheaf $\mathcal{F}$) 
and hand movements (cosheaf $\mathcal{G}$) must align on a common latent 
workspace $\mathcal{X}$ (the scene of objects). Let the cover 
$\mathcal{U}=\{U_i\}$ correspond to overlapping gaze regions, each $U_i$ 
containing a partial view of the object.  
1) \emph{Sheaf of contents ($\mathcal{F}$).} Each $U_i$ encodes a 
    local percept $s_i\in\mathcal{F}(U_i)$ (e.g.\ an edge, surface, or 
    partial contour). Restriction maps $\rho_{ij}$ enforce that on overlaps 
    $U_i\cap U_j$, these local percepts agree on shared features (e.g.\ 
    the same edge corner is seen from two saccades).
2) \emph{Cosheaf of policies ($\mathcal{G}$).} Each $U_i$ also admits 
    local motor policies $g_i\in\mathcal{G}(U_i)$ (e.g.\ a partial hand 
    posture consistent with grasping the visible portion). Extension maps 
    $\iota_{ij}$ enforce that local postures can be extended when more 
    of the object is revealed.
3) \emph{Pairing.} On overlaps $U_{ij}$, the natural pairing 
    $\langle s_i, g_j\rangle_{U_{ij}}$ evaluates whether the percept of a 
    feature supports the corresponding local motor posture (e.g.\ the 
    contour seen in $s_i$ matches the finger orientation in $g_j$).
By Corollary~\ref{cor:PA-sheaf-cosheaf}:
1) (\emph{Global Perception-Action alignment}) If all local percepts $\{s_i\}$ agree 
    on overlaps and all local postures $\{g_i\}$ extend consistently, then 
    there exists a unique global percept $s$ (a complete object representation) 
    and a global motor plan $[g]$ (a consistent grasp).
2) (\emph{cycle formation}) The consistency of pairings across overlaps 
    defines a closed $1$-cycle in the Čech nerve $N(\mathcal{U})$: as gaze and 
    hand sequentially cover overlapping regions, the cycle guarantees order 
    invariance of the perception-action alignment.
3) (\emph{Invariant information flow}) Small variations in saccade 
    order or finger sequencing do not alter the global alignment, since the 
    homology class $[\gamma]\in H_1(N(\mathcal{U}))$ persists under such 
    perturbations.
\emph{Summary.} Eye-hand coordination illustrates how the sheaf (perceptual 
contents) and cosheaf (motor policies) glue together only upon cycle closure, 
ensuring that grasp succeeds invariantly of the order in which local views 
and micro-actions are integrated.
\end{example}

\paragraph{Remark:}
The paired sheaf-cosheaf structure instantiates a perception-action cycle
as an \emph{invariant carrier} of information: context (sheaf restrictions) and
content (cosheaf extensions) align dynamically only upon cycle closure, after
which the invariant can be amortized as a reusable template. In this sense, the
perception-action cycle is not merely a mechanism for online coordination but a
\emph{structural principle} of cognition: closure transforms transient
interactions into stable carriers of meaning, while amortization ensures that
these invariants can be recalled, recombined, and redeployed across contexts.  

\subsection{Consciousness as High-order Invariance of Cognition}

The preceding development established cycles as the structural carriers of 
order-invariant memory. We now extend this perspective to consciousness, 
understood as the highest-order invariance that cognition can sustain. 
Just as memory arises when transient fragments are promoted into persistent 
loops, consciousness emerges when perception and action themselves close 
into global invariants that survive across contexts \cite{li2025memory}. Sheaf-cosheaf duality 
provides the natural language for this extension.

\begin{principle}[Sheaf-Cosheaf Closure and Consciousness]
Let $\mathcal{F}$ be a sheaf encoding perceptual restrictions 
(integration of local data) and $\mathcal{G}$ a cosheaf encoding 
content extensions (differentiation into global actions). 
Conscious awareness corresponds to invariants $[\gamma]$ that lie 
in the intersection of their global sections:
$[\gamma] \;\in\; H^0(\mathcal{F}) \;\cap\; H_0(\mathcal{G})$,
so that perception and action align consistently under cycle closure 
($\partial^2=0$). 
\noindent Thus, consciousness emerges as the global fixed-point where 
sheaf-mediated integration and cosheaf-mediated differentiation 
converge into a stable loop, yielding a unified yet diverse field 
of experience.
\end{principle}

While the sheaf-cosheaf framework captures the dual roles of perceptual 
integration and action-driven differentiation, it remains to show how these 
closures yield a principled notion of invariance. This is provided by 
topological homology and probabilistic exchangeability \cite{deFinetti1937}, two complementary 
formalizations of the same structural principle: consciousness arises when 
order-sensitive fragments are absorbed into order-invariant carriers of 
meaning.

\begin{theorem}[Consciousness as Invariant Closure]
Conscious experience arises when cognition promotes transient, 
order-dependent fragments into high-order invariants. This admits two 
dual formalizations:
1) \textbf{Topological closure:} Given a chain complex 
    $(C_\bullet,\partial)$ of neural or cognitive events, closure 
    $\partial^2=0$ ensures that boundaries vanish, and only nontrivial 
    cycles $[\gamma]\in H_k(C_\bullet)$ persist. Conscious awareness 
    corresponds to the survival of such global cycles across contexts.
2) \textbf{Exchangeable inference:} Given a temporal sequence 
    $X_{1:T}$ conditionally exchangeable under a generative model, 
    De Finetti's theorem implies the existence of a latent variable 
    $\Phi^*$ such that
   $ P(X_{1:T}) = \int \prod_{t=1}^T P(X_t|\Phi^*) \, dP(\Phi^*)$.    
Conscious awareness corresponds to inference over $\Phi^*$, 
    rendering observations invariant to ordering once conditioned.
Both perspectives identify consciousness as the 
\emph{emergence of high-order invariance}: topological closure 
guarantees cycle persistence, while probabilistic exchangeability 
guarantees order-invariant inference. Emotion supplies the prior 
$P(\Phi^*)$, and cognition implements recursive closure via 
posterior updating $P(\Phi^*|X_{1:t})$.
\end{theorem}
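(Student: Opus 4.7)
The plan is to prove the two formalizations separately and then exhibit their structural correspondence as a single invariance principle. For part (1), I would invoke the machinery already established in Lemma~\ref{lemma:dot-cycle} and the corollary on $\partial^2=0$: any chain $c\in C_k$ with $\partial c\neq 0$ fails to close and collapses as boundary residue, while cycles $z\in Z_k=\ker\partial_k$ that are not themselves boundaries define nontrivial classes $[z]\in H_k=Z_k/B_k$. By Theorem~\ref{thm:order-invariance}, such classes are insensitive to the ordering of constituent moves, which identifies them with the context-invariant carriers that qualify as conscious content. Persistence across contexts would then follow from the stability of homology under small perturbations (as in the persistence barcodes used in Figure~\ref{fig:memory}), yielding the topological half.

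For part (2), I would apply De~Finetti's representation theorem directly. Under the stated conditional exchangeability hypothesis, the joint $P(X_{1:T})$ is invariant under every permutation $\pi\in S_T$, and De~Finetti guarantees a latent variable $\Phi^{*}$ such that the observations become conditionally i.i.d.\ given $\Phi^{*}$. The ordered sequence then collapses to a sufficient statistic, so Bayesian updating $P(\Phi^{*}\mid X_{1:t})$ proceeds in an order-invariant manner. This is the inferential counterpart of the algebraic abelianization at the heart of the Hurewicz map used in Theorem~\ref{thm:order-invariance}: both operations quotient out order to expose the underlying invariant.

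Finally, the unification step is to exhibit the two perspectives as genuinely dual rather than merely analogous. My strategy is to construct a bridge through CCUP: set $\Phi^{*}=[\gamma]$ where $[\gamma]$ is the surviving cycle class from part (1), and verify that the marginal $P(X_{1:T})$ obtained by integrating against $P([\gamma])$ satisfies the exchangeability condition of part (2). Concretely, I would show that the action of $S_T$ on observation sequences factors through the Hurewicz homomorphism, so that any two orderings of the same multiset of moves induce identical posteriors on $[\gamma]$. Emotion as prior $P(\Phi^{*})$ and recursive closure as posterior updating then fall out as the standard Bayesian counterpart of chain-level cycle consolidation.

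The hard part will be the measure-theoretic coherence of this identification: whether homology classes, a discrete algebraic object, can always be equipped with a well-defined probability law compatible with the underlying neural state space. I would address this by working within the persistence module from Section~\ref{sec:3}, where homology classes carry natural lifetime measures derived from their persistence bars; these lifetimes furnish a canonical $P(\Phi^{*})$ that is automatically compatible with the chain complex $(C_\bullet,\partial)$ and with the phase-coded coincidence dynamics of Lemma~\ref{lem:coincidence-closure}. Once this bridge is in place, invariance under context and invariance under permutation become two faces of the same closure condition $\partial^2=0$, completing the proof.
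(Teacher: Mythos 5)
The paper itself offers no proof of this theorem: it is stated as an interpretive synthesis, with part (1) resting on the already-established homological machinery ($\partial^2=0$, Lemma~\ref{lemma:dot-cycle}, Theorem~\ref{thm:order-invariance}) and part (2) resting on a direct citation of De~Finetti's representation theorem; the identification of these invariants with ``conscious awareness'' is definitional, not derived. Your treatment of the two halves separately is therefore consistent with the paper's intent, and arguably more explicit than what the paper provides.

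The genuine gap is in your unification step, which goes beyond anything the paper claims (the paper says the two formalizations are \emph{dual} only in the loose sense of being parallel descriptions, and assigns the prior $P(\Phi^*)$ to ``emotion,'' not to any mathematical object). First, your claim that the action of $S_T$ on observation sequences ``factors through the Hurewicz homomorphism'' conflates two different group actions on two different objects: permutations in De~Finetti's setting act on the index set of a stochastic process, whereas the Hurewicz map abelianizes concatenation of loops in $\pi_1(\mathcal{Z},x_0)$; no map between these structures is constructed, so the asserted factorization is not established. Second, the proposed identification $\Phi^*=[\gamma]$ requires a probability law on homology classes compatible with the generative model, and persistence-bar lifetimes do not automatically supply one: they are not normalized, no $\sigma$-algebra on the set of classes is specified, and no argument is given that integrating $\prod_t P(X_t\mid[\gamma])$ against such a law reproduces an exchangeable joint. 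Third, even within part (2) on its own, De~Finetti's theorem in its exact form holds for infinite exchangeable sequences; for a finite $X_{1:T}$ the mixture-of-i.i.d.\ representation can fail (e.g., sampling without replacement), so you need either an extendability hypothesis or a finite/approximate version (Diaconis--Freedman). As a reconstruction of the paper's (unproved) statement your parts (1) and (2) are fine; as a proof of genuine duality, the bridge you sketch is the hard part and it remains unbuilt.
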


The invariant-closure theorem clarifies the structural form of conscious 
experience, but consciousness also has a distinctive phenomenology: it is 
simultaneously unified and differentiated \cite{bayne2018axiomatic}. Closure reconciles this 
apparent opposition. When boundaries vanish, all events participating in 
a cycle become globally bound, yielding integration. Yet the space of 
homology classes allows multiple distinct cycles to coexist, yielding 
differentiation. Thus, the same algebraic principle accounts for both 
unity and richness.

\begin{proposition}[Closure Supports Integrated and Differentiated Consciousness]
Let $(C_\bullet,\partial)$ be a chain complex of cognitive events with 
$\partial^2=0$. Conscious awareness corresponds to nontrivial cycles 
$[\gamma] \in H_k(C_\bullet)$ that persist across time and context. Then:
1) \textbf{Integration:} All events that participate in a cycle 
    are globally bound through closure, since $\partial^2=0$ guarantees 
    that local inconsistencies cancel. This yields a unified invariant 
    that integrates fragments into one coherent structure.
2) \textbf{Differentiation:} Distinct cycles 
    $[\gamma_1], [\gamma_2], \dots \in H_k(C_\bullet)$ 
    provide a multiplicity of invariants, allowing diverse and 
    context-specific contents to coexist. Consciousness is thus 
    differentiated by the repertoire of independent cycles.
\noindent In this way, topological closure reconciles integration and 
differentiation: both emerge naturally when boundaries vanish and 
cycles persist as homological invariants.
\end{proposition}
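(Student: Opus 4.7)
The plan is to establish both claims from the same underlying algebraic fact, namely that $\partial^2=0$ combined with the possible nontriviality of $H_k(C_\bullet)$ simultaneously supplies a binding mechanism (for each single class) and a separation mechanism (across distinct classes). First I would treat integration: for any $\gamma\in Z_k(C_\bullet)$ with $\partial\gamma=0$, any decomposition $\gamma=\gamma_1+\gamma_2$ forces $\partial\gamma_1=-\partial\gamma_2$, so the $(k-1)$-boundaries of the two fragments coincide up to sign and therefore share incidence data. I would introduce the support $|\gamma|$ (the set of basis simplices with nonzero coefficient) and show that, whenever $[\gamma]\ne 0$, the support cannot be partitioned into two closed, disjointly supported sub-chains — any attempted partition leaves matching boundary fragments that must be identified via $\partial^2=0$. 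This is the algebraic content of ``local inconsistencies cancel'': the identity forces the events participating in $\gamma$ to be bound together through shared coface relations, yielding a single unified invariant.

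Next I would address differentiation by exploiting the abelian (or, with field coefficients, vector-space) structure of $H_k(C_\bullet)$. Whenever $\mathrm{rank}\,H_k(C_\bullet)\ge 2$, one can select a family of generators $\{[\gamma_\alpha]\}$ that are linearly independent modulo boundaries, so no nontrivial combination collapses into $B_k$. These classes can be instantiated concurrently by different neural sub-circuits without interfering, since their independence is preserved under the boundary operator. Invoking the persistence stability argument already used in Lemma~\ref{lem:coincidence-closure} and Proposition~\ref{prop:png-hierarchy}, small perturbations of restrictions and extensions preserve the barcode, so the repertoire of coexisting invariants is robust to context variation. Together this establishes that the same closure condition which binds each cycle internally also supports a multiplicity of independent cycles externally.

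The hard part will not be the algebra itself, which is essentially a restatement of standard homological facts, but the interpretive step: explaining why unity and richness are not merely two independent consequences of a formal setup but two faces of one structural mechanism. To address this I would lean on Corollary~\ref{cor:PA-sheaf-cosheaf}, where the paired sheaf-cosheaf system already enforces that local perception–action alignments aggregate into global cycles (integration), while the full dimension of $H_1(N(\mathcal{U}))$ indexes the differentiated modes of experience. The main obstacle is thus to show that no weaker condition than $\partial^2=0$ on a nontrivial complex could support both properties simultaneously: without closure there is binding but no persistence, and without multiple independent classes there is persistence but no richness. Framing the argument this way ensures that integration and differentiation emerge as joint, necessary consequences of cycle formation rather than coincidental by-products.
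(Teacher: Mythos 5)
The paper never gives a formal proof of this proposition: its entire justification is the informal paragraph preceding it (closure binds the events participating in a cycle, hence integration; the multiplicity of homology classes supplies differentiation), so your proposal is already more ambitious than the source. Your differentiation half is unproblematic and matches the paper's intent — pick classes $[\gamma_1],[\gamma_2],\dots$ that are independent modulo boundaries, note they can be instantiated concurrently, and gesture at robustness via the persistence-stability arguments already used for Lemma~\ref{lem:coincidence-closure}.

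The genuine gap is in your integration step. The claim that whenever $[\gamma]\neq 0$ the support of $\gamma$ cannot be partitioned into two closed, disjointly supported sub-chains is false. Take $\gamma=\gamma_1+\gamma_2$ with $\gamma_1,\gamma_2$ disjoint $1$-cycles, e.g.\ two disjoint loops in a twice-punctured plane, or a nontrivial loop together with a disjoint bounding loop; then $\partial\gamma_1=\partial\gamma_2=0$, the supports are disjoint, and yet $[\gamma]=[\gamma_1]+[\gamma_2]$ can be nonzero. Nontriviality of a class does not force indecomposability of its representatives, so your ``binding'' argument proves more than is true. What the proposition needs — and what the paper's informal reasoning actually uses — is the weaker fact that within a closed chain every open fragment's dangling boundary is cancelled by the remaining fragments ($\partial\gamma_1=-\partial\gamma_2$, which follows from linearity of $\partial$ alone; $\partial^2=0$ enters only to make $H_k$ well defined, i.e.\ $B_k\subseteq Z_k$, so that ``persistence'' of the bound structure is meaningful). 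If you want a genuine indecomposability statement you must restrict to representatives with connected support, or phrase integration at the level of a single homology class rather than a chain representative. Finally, the ``main obstacle'' you set yourself — showing that no condition weaker than $\partial^2=0$ could support both properties — is a necessity claim the proposition does not assert and the paper nowhere argues; leaving it unresolved leaves your plan open on a point you did not need to take on.
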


Finally, these structural insights connect directly to contemporary 
theories of consciousness. Integrated Information Theory (IIT) \cite{tononi2016integrated} posits 
that consciousness requires both integration and differentiation, 
quantified by $\Phi$. Topological closure provides a natural reformulation 
of these axioms: $\Phi$ can be viewed as a persistence index of homological 
invariants, measuring the robustness and richness of the cycles that 
survive across scales. In this way, the homological account situates IIT’s 
principles within the broader framework of topological invariance. 

\begin{proposition}[Topological Closure and Integrated Information]
In Integrated Information Theory (IIT), consciousness requires both 
integration and differentiation, quantified by $\Phi$. 
From the topological closure perspective:
1) \textbf{Integration:} Closure $\partial^2=0$ ensures that 
    all local inconsistencies vanish, binding events into a global 
    invariant cycle. This captures the indivisibility of conscious 
    experience.
2) \textbf{Differentiation:} Distinct nontrivial homology classes 
    in $H_k(C_\bullet)$ provide a repertoire of independent invariants, 
    corresponding to the diversity of conscious contents.
3) \textbf{Quantification:} The integrated information measure 
    $\Phi$ can be reinterpreted as a persistence index of homological 
    invariants across scales, so that high $\Phi$ corresponds to systems 
    with globally integrated yet richly differentiated cycles.
\noindent Hence IIT’s axioms of integration and differentiation align 
naturally with the homological invariance guaranteed by topological closure.
\end{proposition}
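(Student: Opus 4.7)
The plan is to decompose the proposition into its three structural claims (integration, differentiation, quantification) and verify each using the homological machinery already developed in the paper. First I would fix a chain complex $(C_\bullet,\partial)$ whose $k$-chains encode ensembles of neural events and whose boundary operator records the local compatibility relations induced by coincidence detection and temporal ordering, as in Lemma~\ref{lem:coincidence-closure}. This provides the common substrate on which IIT's information-theoretic constructs and the paper's topological constructs can be compared.

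For claim (1), integration, I would invoke Lemma~\ref{lemma:dot-cycle} directly. Since $\partial^2=0$ forces every fragment either to collapse into $H_0$ or to close into $H_k$, a persisting class $[\gamma]$ binds all its constituent events into a single indecomposable invariant: removing any participating chain element breaks $\partial\gamma=0$ and demotes the cycle to an open fragment. I would make irreducibility precise by showing that if $[\gamma]$ admits a nontrivial splitting $\gamma=\gamma_1+\gamma_2$ with $\partial\gamma_1=\partial\gamma_2=0$ on disjoint supports, then $[\gamma]$ was not a single generator but already a sum in $H_k$, contradicting minimality. This yields the indivisibility characteristic of IIT's integration axiom.

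For claim (2), differentiation, the argument reduces to counting homological generators. Over a field, $H_k(C_\bullet)$ is a vector space whose Betti number $\beta_k=\dim H_k(C_\bullet)$ counts linearly independent cycles that cannot be expressed as combinations of boundaries or of one another. I would identify each generator with a distinct qualitative content and conclude that the repertoire of simultaneously available invariants is lower-bounded by $\sum_k \beta_k$, matching the axiom that consciousness comprises many distinguishable experiences.

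Claim (3), quantification, is where the principal obstacle lies. I would define a topological analogue $\Phi_{\text{top}}(C_\bullet)=\sum_{[\gamma]} \ell([\gamma])$, where $\ell([\gamma])$ is the persistence lifetime of class $[\gamma]$ in a multiscale filtration induced by oscillatory nesting (Proposition~\ref{prop:png-hierarchy}). The hard part will be establishing a monotone correspondence with IIT's $\Phi$, which is defined via earth-mover distance between cause-effect repertoires under the minimum information partition. I anticipate needing an intermediate sheaf-theoretic translation via Corollary~\ref{cor:PA-sheaf-cosheaf} to link probabilistic cause-effect structures on the nerve of the event cover to their \v{C}ech cohomology, and then invoking the stability theorem for persistence diagrams to argue that the partition minimizing information loss coincides, at least asymptotically, with the cut that shortens the longest persistence bar. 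Strict equality is unlikely; a realistic target is equivalence up to a monotone transformation, sufficient to guarantee that $\Phi$ and $\Phi_{\text{top}}$ induce the same ordering on systems and hence the same verdicts about relative consciousness.
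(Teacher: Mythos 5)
The paper does not actually prove this proposition: it is stated as an interpretive claim and supported only by the informal barcode illustration in Fig.~\ref{fig:phi-barcode}, with integration read off as ``long, system-spanning bars'' and differentiation as ``multiple independent long bars.'' Your parts (1) and (2) are consistent with that reading and with the machinery the paper actually develops (Lemma~\ref{lemma:dot-cycle}, Betti numbers as a repertoire count), so there is no conflict there, though note that your irreducibility argument for (1) is slightly stronger than what the paper claims --- the paper only asserts that closure cancels local inconsistencies, not that a surviving class is a minimal generator.

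The genuine gap is in your part (3), and it is worth naming precisely because you half-acknowledge it. The proposition's quantification clause is, in the paper, a \emph{reinterpretation} (``$\Phi$ can be reinterpreted as a persistence index''), not an asserted equivalence; you instead set out to prove a monotone correspondence between IIT's $\Phi$ (defined through the minimum information partition and an earth-mover distance on cause-effect repertoires) and $\Phi_{\mathrm{top}}=\sum_{[\gamma]}\ell([\gamma])$. The bridge you sketch would not go through: the stability theorem for persistence diagrams controls bottleneck distance under perturbations of a filtration function, and says nothing about which system partition minimizes integrated information; and Corollary~\ref{cor:PA-sheaf-cosheaf} pairs sheaf sections with cosheaf cosections over a cover --- it does not encode probabilistic cause-effect repertoires, so there is no map in the paper from IIT's transition-probability structure to a filtration on $C_\bullet$ on which your asymptotic ``MIP $=$ longest-bar cut'' claim could even be stated. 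If you want a defensible version, you should either (a) weaken your target to what the paper actually asserts --- a structural analogy in which integration corresponds to existence of persistent global classes and differentiation to their multiplicity --- or (b) explicitly construct a filtration from the cause-effect structure and prove the ordering claim, which is an open modeling problem, not a consequence of the results cited.
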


\begin{figure*}[t]
\centering
\begin{tikzpicture}[
  >=Latex,
  scale=1.0,
  every node/.style={font=\small},
  title/.style={font=\small\bfseries, text=gray!70},
  bar/.style={line width=2.0pt, line cap=round},
  faint/.style={gray!60, bar},
  strong/.style={blue!70, bar},
  axisline/.style={gray!55, line width=0.6pt}
]

\coordinate (L) at (-6,0);
\coordinate (R) at ( 2,0);

\node[title] at (-6,2.2) {Low $\Phi$: fragmented \& short-lived};
\draw[axisline] (-8,1.6) -- (-4,1.6);
\draw[axisline] (-8,0.0) -- (-4,0.0);
\draw[axisline] (-8,-1.6) -- (-4,-1.6);
\draw[faint] (-7.7,1.6) -- (-7.1,1.6);
\draw[faint] (-7.3,1.6) -- (-7.0,1.6);
\draw[faint] (-6.9,1.6) -- (-6.5,1.6);
\draw[faint] (-6.5,1.6) -- (-6.3,1.6);
\draw[faint] (-6.1,1.6) -- (-5.7,1.6);
\draw[faint] (-7.8,0.0) -- (-7.4,0.0);
\draw[faint] (-7.2,0.0) -- (-6.9,0.0);
\draw[faint] (-6.8,0.0) -- (-6.2,0.0);
\draw[faint] (-6.1,0.0) -- (-5.8,0.0);
\draw[faint] (-7.6,-1.6) -- (-7.3,-1.6);
\draw[faint] (-7.1,-1.6) -- (-6.7,-1.6);
\draw[faint] (-6.6,-1.6) -- (-6.4,-1.6);
\draw[faint] (-6.3,-1.6) -- (-5.9,-1.6);

\node[gray!65, align=left] at (-6,-2.3) {\scriptsize Few/weak closures\\[-2pt]\scriptsize Low integration; many fleeting fragments};

\node[title] at (2,2.2) {High $\Phi$: integrated \& differentiated};
\draw[axisline] (0,1.6) -- (4,1.6);
\draw[axisline] (0,0.0) -- (4,0.0);
\draw[axisline] (0,-1.6) -- (4,-1.6);
\draw[strong] (0.2,1.6) -- (3.6,1.6);        
\draw[strong] (0.5,0.0) -- (3.1,0.0);        
\draw[strong] (0.8,-1.6) -- (3.4,-1.6);      
\draw[faint] (0.3,1.6) -- (0.7,1.6);
\draw[faint] (0.6,0.0) -- (0.9,0.0);

\node[gray!65, align=left] at (2,-2.3) {\scriptsize Strong global closure\\[-2pt]\scriptsize Few long bars = integration; multiple long bars = differentiation};

\begin{scope}[shift={( -1.8,-3.4 )}]
  \draw[strong] (1,0) -- (2.2,0); \node[right] at (2.3,0) {\scriptsize persistent cycle (integrated)};
  \draw[faint]  (-5.0,0) -- (-4.5,0); \node[right] at (-4.3,0) {\scriptsize transient fragment (unconsolidated)};
\end{scope}

\end{tikzpicture}
\caption{\textbf{Persistent homology view of integrated information \(\Phi\).}
Topological closure (\(\partial^2=0\)) yields bars in a persistence barcode.
\emph{Integration} corresponds to the presence of long, system-spanning bars (global cycles);
\emph{differentiation} corresponds to multiple independent long bars (distinct invariants).
Low \(\Phi\): many short, fragile bars (weak closure). High \(\Phi\): few long, robust bars
with diversity (strong closure + rich repertoire).}
\label{fig:phi-barcode}
\end{figure*}
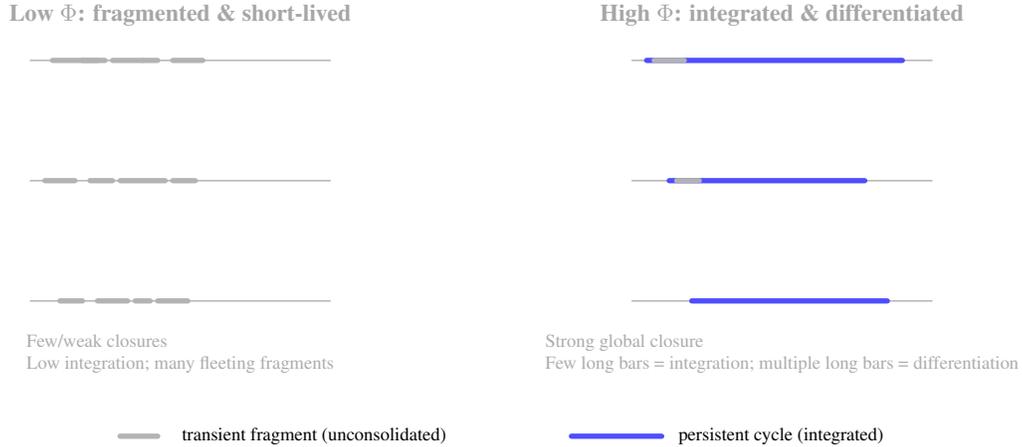

\section{Implications for Cognitive Computation and AGI}
\label{sec:6}

The framework developed above carries several implications for the design of 
artificial general intelligence (AGI) \cite{bubeck2023sparks} and the broader theory of cognitive computation \cite{kriegeskorte2018cognitive}. At its core is the \emph{dot-cycle dichotomy}: trivial cycles 
(collapsing to dots, $H_0$) represent transient scaffolds, while nontrivial 
cycles ($H_1$ and higher) embody persistent invariants. This distinction offers 
a blueprint for constructing \emph{non-Turing memory machines} whose organizing 
principle is topological closure rather than sequential symbol manipulation.

\paragraph{From Turing memory to topological memory.}
Classical Turing computation treats memory as an unstructured tape of bits, 
with read/write operations that scale poorly with dimensionality and context \cite{sipser1996introduction}. 
In contrast, the dot--cycle dichotomy suggests that memory should be organized 
as a hierarchy of homology classes: transient dots scaffold exploration, while 
closed cycles stabilize and generalize invariants. Such topological memory 
machines naturally support \emph{order invariance}, \emph{context--content 
alignment}, and \emph{amortized reuse}, properties absent in classical tape-based 
models. \emph{Order invariance} follows because once a trajectory closes, its class 
$[\gamma]\in H_1$ depends only on the multiset of contributing moves (and net 
orientations), not on their sequence; thus recognition and control rely on 
relational structure rather than literal order. \emph{Context--content alignment} 
arises from sheaf--cosheaf closure: perceptual restrictions (sheaf) and action 
extensions (cosheaf) must agree on overlaps; boundary terms cancel ($\partial^2=0$), 
so only globally consistent loops persist as memory-bearing states. 
\emph{Amortized reuse} comes from representing skills and concepts as stable cycles 
that can be replayed, composed, and parameterized without re-solving the underlying 
search; the invariant template (homology class) is retrieved and only its 
bounding cobordisms are adapted to the new task. Together, these properties turn 
memory from a brittle, order-sensitive tape into a compositional library of 
closed, reusable invariants that support robust generalization and efficient planning.


\paragraph{Generalization by structural invariants.}
Deep learning generalizes statistically by interpolation across training data, 
but it often fails under distributional shift. In contrast, homological memory 
generalizes structurally: cycles collapse irrelevant order-dependent noise and 
preserve only topologically invariant relations. This means that the same 
homology class can support recognition or action across novel contexts, 
providing a principled route toward robust generalization in AGI.
A key limitation of Turing-style and deep learning systems is the high energy cost of repeated inference \cite{tkachenko2025thermodynamic}. Memory-amortized inference (MAI) leverages cycle closure: once a cycle is formed, its invariant class can be reused without recomputing from scratch \cite{li2025Beyond}. This structural reuse drastically reduces information-theoretic redundancy, yielding a path toward \emph{energy-efficient cognition}. From a physics perspective, closure enforces $\partial^2=0$, guaranteeing that energy is not wasted on propagating boundaries of boundaries.
These insights motivate a shift from \emph{symbolic 
computation} (Turing machines) and \emph{statistical computation} (deep neural 
nets) toward \emph{structural computation} \cite{izhikevich2006polychronization}: memory and inference are realized 
through cycles stabilized by algebraic invariants. In this view, cognition is 
not tape manipulation but \emph{cycle navigation} \cite{buzsaki2013memory}, with persistence of 
nontrivial cycles serving as the substrate for reasoning, planning, and 
abstraction. This cycle-based architecture opens a path toward non-Turing 
models of AGI that are both more biologically plausible and more computationally 
efficient.

\paragraph{Outlook.}
The dot-cycle dichotomy thus suggests a research program: build cognitive 
machines whose memory is structured not as a flat sequence of bits, but as a 
hierarchy of persistent cycles. Such machines promise three intertwined 
advantages: (i) energy efficiency through amortization, (ii) robust 
generalization through topological invariance, and (iii) scalability through 
hierarchical cycle composition. In this way, the algebraic principle 
$\partial^2=0$ may serve as the foundation for the next paradigm of 
cognitive computation.
\noindent While cycle-based computation offers a principled path toward robust 
and efficient cognitive machines, one must also acknowledge its intrinsic limits. 
If cognition itself generates mathematics as an abstraction of closure operations, 
then any formal attempt to capture cognition risks falling prey to the same 
self-referential barriers identified by Cantor \cite{Cantor1891} and Turing \cite{Kleene1952_MathematicalLogic}. This motivates the 
following principle.

\begin{corollary}[Diagonal Non-Closure of Cognitive Formalization]
Let cognition generate mathematics as the abstraction of its own 
closure operations. Any attempt to capture cognition exhaustively 
within mathematics encounters a diagonalization barrier analogous 
to Cantor's and Turing's arguments: the generative system cannot be 
fully enclosed by the formal system it spawned. 
Therefore, cognition admits mathematical description but not 
final reduction, since each closure produces higher-order invariants 
beyond the scope of the current formalism.
\end{corollary}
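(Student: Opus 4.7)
The plan is to recast the statement as a metamathematical theorem modeled on Cantor's diagonal argument and Turing's halting theorem, applied to the closure operator $\partial$ that drives cognition. First, I would fix the two players: let $\mathcal{F}$ be any formal system whose theorems can be recursively enumerated (hence at most countably many derivable sentences), and let $\mathcal{C}$ denote the cognitive generator defined by iterating cycle closure $\partial^2=0$ to promote chains into homology classes and, by Theorem~\ref{thm:cycle-hierarchy}, to lift $k$-cycles into $(k{+}1)$-cycles via bootstrapped composition. The target is to show that the set of invariants produced by $\mathcal{C}$ strictly exceeds the set of invariants expressible within $\mathcal{F}$, so that cognition cannot be a subobject of its own formalizations.

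Second, I would execute the diagonal construction. Given any enumeration $\{[\gamma_n]\}_{n\in\mathbb{N}}$ of homology classes that $\mathcal{F}$ can describe or decide, I would define a new class $[\gamma^\ast]$ coordinate-wise so that for each $n$, $[\gamma^\ast]$ differs from $[\gamma_n]$ in at least one of its higher-order bootstrapped invariants. Concretely, at stage $n$ I would invoke the hierarchy-generating mechanism of Theorem~\ref{thm:cycle-hierarchy} to produce a cycle one level above that at which $[\gamma_n]$ lives, then flip the orientation or winding number of one generator so that $[\gamma^\ast]\neq [\gamma_n]$ as homology classes. Because cognition can always take ``the next closure,'' this step is always available to $\mathcal{C}$ but unavailable to $\mathcal{F}$ once its enumeration is fixed, which is precisely the structure of Cantor's construction and, when the enumeration is by Turing machines, of Turing's halting argument.

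Third, I would close the argument by contradiction. If $\mathcal{C}$ were exhaustively captured by $\mathcal{F}$, then $[\gamma^\ast]$ would appear as some $[\gamma_{n^\ast}]$ in the enumeration; but by construction $[\gamma^\ast]\neq [\gamma_{n^\ast}]$, yielding the standard diagonal contradiction. Hence cognition admits mathematical description (each $[\gamma_n]$ lies within $\mathcal{F}$) but not final reduction (the generator of the whole sequence does not). The hard part will be the arithmetization step: making rigorous what it means for a cognitive closure to be ``expressible in $\mathcal{F}$'' without trivially collapsing the claim into G\"odel's second incompleteness theorem or Turing's undecidability result. I expect to need a coding of chains, boundaries, and homology classes into $\mathcal{F}$'s syntax, analogous to G\"odel numbering, together with a proof that the operator $\partial$ lifts to a computable map on these codes, so that the diagonal class $[\gamma^\ast]$ is definable externally by $\mathcal{C}$ but not internally within $\mathcal{F}$. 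A secondary obstacle is ensuring that the hierarchy produced by iterated closure does not eventually stabilize, which would block the diagonal; here I would appeal to the non-triviality guarantee in Lemma~\ref{lemma:dot-cycle} that each closure genuinely creates a new homology generator rather than collapsing to an existing one.
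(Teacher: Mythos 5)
Be aware first of what you are up against: the paper offers no proof of this corollary at all. It is stated as a philosophical principle, ``motivated'' by an informal analogy to Cantor and Turing, with key terms (``cognition generates mathematics,'' ``capture exhaustively'') left undefined. So your proposal is not an alternative route to the paper's argument --- it is an attempt to supply a rigorous argument where the paper supplies none, and it must therefore carry the full burden of formalization itself. Against that standard it has two genuine gaps. The first is the conflation of describability with decidability or provability. If your diagonal class $[\gamma^\ast]$ is produced by an effective procedure (enumerate $\{[\gamma_n]\}$, lift a level, flip a generator), then any sufficiently strong formal system $\mathcal{F}$ can \emph{describe} that very procedure and hence $[\gamma^\ast]$; no contradiction arises from mere expressibility. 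Cantor's and Turing's arguments bite only when ``capture'' is pinned down as deciding membership or deriving all true instances, and making that precise --- the arithmetization of chains, boundaries, and homology classes, plus a proof that $\partial$ lifts to a computable map on codes and that the diagonal output escapes $\mathcal{F}$'s \emph{deductive} reach rather than its expressive reach --- is not a ``hard part'' to be deferred; it is the entire content of the theorem. Without it, step three's contradiction ($[\gamma^\ast]=[\gamma_{n^\ast}]$ yet $[\gamma^\ast]\neq[\gamma_{n^\ast}]$) does not follow, because nothing forces $[\gamma^\ast]$ to appear in, or be excluded from, the enumeration in the sense you need.

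The second gap is the non-stabilization step. You appeal to Lemma~\ref{lemma:dot-cycle} as a ``non-triviality guarantee that each closure genuinely creates a new homology generator,'' but that lemma says only that a $1$-chain either has nonzero boundary (and collapses to $H_0$) or is closed (and defines a class in $H_1$); it says nothing about iterated closure producing \emph{new} generators, and nothing about $k>1$. Likewise Theorem~\ref{thm:cycle-hierarchy} obtains higher classes via the K\"unneth theorem on \emph{product} spaces, so an unbounded ladder requires the underlying state space to keep growing --- on any fixed finite-dimensional $\mathcal{Z}$ one has $H_k(\mathcal{Z})=0$ for $k>\dim\mathcal{Z}$, and the hierarchy stabilizes, blocking your diagonal at a finite stage. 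Relatedly, a cardinality-style Cantor argument cannot rescue the construction: the invariants generated by any effective closure process are countable, just like the theorems of $\mathcal{F}$, so the escape must be effective-diagonal (Turing/G\"odel style), which again returns you to the unfinished arithmetization. In short, the skeleton of the argument is the right analogy, but both load-bearing steps --- the formal sense of ``capture'' and the guarantee of an unbounded, non-stabilizing supply of genuinely new invariants --- are currently assumed rather than proved, and the second is contradicted by the very results you cite unless you add an explicit mechanism by which the state space (or the chain complex) is enlarged at every stage.
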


\noindent Taken together, these insights suggest a reframing of the AGI project. 
Rather than seeking to exhaustively formalize cognition within a fixed 
computational paradigm, we should aim to engineer systems whose very mode 
of operation is \emph{structural closure}: pruning inconsistencies, 
stabilizing cycles, and continually generating higher-order invariants. 
Such machines would not be bound by static rules or brittle statistical 
interpolation, but would instead embody cognition’s own grammar of 
self-organizing invariance. In this sense, the pursuit of AGI is less the 
construction of a universal Turing machine, and more the cultivation of a 
universal cycle navigator, an architecture that grows by closure, adapts by 
amortization, and transcends its own formalisms by diagonal extension \cite{Dauben1979_Cantor}.

\bibliographystyle{plain}
\bibliography{references}  

\newpage
\appendix

\section{Background on Homology}

To formalize cycle invariance, we recall basic notions from algebraic topology. 
Let $X$ be a topological space. For each integer $k \geq 0$, the \emph{$k$-th homology group} 
$H_k(X)$ is defined as
\[
H_k(X) \;=\; Z_k(X) / B_k(X),
\]
where
\begin{itemize}
    \item $C_k(X)$ denotes the group of $k$-chains, i.e.\ formal sums of oriented $k$-simplices in $X$,
    \item $\partial_k : C_k(X) \to C_{k-1}(X)$ is the boundary operator,
    \item $Z_k(X) = \ker(\partial_k)$ is the group of $k$-cycles (chains with zero boundary),
    \item $B_k(X) = \operatorname{im}(\partial_{k+1})$ is the group of $k$-boundaries (boundaries of $(k+1)$-chains).
\end{itemize}
Thus, elements of $H_k(X)$ are equivalence classes of $k$-cycles modulo boundaries:
\[
[\gamma] \in H_k(X) 
\quad \Longleftrightarrow \quad \partial \gamma = 0 
\quad \text{and} \quad 
\gamma \sim \gamma + \partial \sigma.
\]

Intuitively, $H_k(X)$ captures the $k$-dimensional ``holes'' of the space $X$. 
For $k=1$, $H_1(X)$ classifies cycles up to continuous deformation, distinguishing 
nontrivial cycles from those that can be contracted to a point. In our context, 
$[\gamma] \in H_1(X)$ encodes the persistence of a recurrent cycle in state space, 
which remains invariant to local re-orderings of trajectories.

\section{Background on Filtration, Sheaves, and Cosheaves}

To formalize the dual roles of context and content in cycle formation, we recall 
basic notions from algebraic topology and category theory. These provide the 
mathematical language for scaffolding (sheaves) and planning (cosheaves).

\subsection{Filtrations}
A \emph{filtration} of a topological space $X$ is a nested sequence of subspaces
\[
\emptyset = X_0 \subseteq X_1 \subseteq X_2 \subseteq \cdots \subseteq X,
\]
often indexed by a parameter (e.g.\ time, scale, or threshold). Filtrations induce 
a corresponding sequence of homology groups
\[
H_k(X_0) \to H_k(X_1) \to H_k(X_2) \to \cdots,
\]
whose persistence identifies which cycles survive across scales. In cognition, 
filtrations represent the progressive accumulation of evidence: trivial cycles 
appear and disappear quickly, while nontrivial cycles that persist encode stable 
memory traces.

\subsection{Sheaves}
Let $\mathcal{U} = \{U_i\}_{i\in I}$ be an open cover of a topological space $X$.  
A \emph{sheaf} $\mathcal{F}$ on $X$ assigns to each open set $U \subseteq X$ 
a set (or vector space) of local data $\mathcal{F}(U)$, together with restriction 
maps
\[
\rho_{UV}: \mathcal{F}(V) \to \mathcal{F}(U), \qquad U \subseteq V,
\]
that satisfy two axioms:
\begin{enumerate}
    \item (\emph{Identity}) $\rho_{UU} = \mathrm{id}$ for all $U$,
    \item (\emph{Gluing}) If $\{s_i \in \mathcal{F}(U_i)\}$ agree on overlaps 
    $U_i \cap U_j$, then there exists a unique global section 
    $s \in \mathcal{F}(\bigcup_i U_i)$ restricting to each $s_i$.
\end{enumerate}
Intuitively, sheaves formalize the integration of local information into 
a coherent global whole. In perception, local features (edges, glimpses, 
or sensory fragments) correspond to sections $s_i$, and binding succeeds 
iff they glue into a global percept.

\subsection{Cosheaves}
Dually, a \emph{cosheaf} $\mathcal{G}$ on $X$ assigns to each open set 
$U \subseteq X$ a set (or vector space) of local data $\mathcal{G}(U)$, 
together with extension maps
\[
\iota_{UV}: \mathcal{G}(U) \to \mathcal{G}(V), \qquad U \subseteq V,
\]
that satisfy:
\begin{enumerate}
    \item (\emph{Identity}) $\iota_{UU} = \mathrm{id}$ for all $U$,
    \item (\emph{Co-gluing}) Local elements $\{g_i \in \mathcal{G}(U_i)\}$ 
    can be consistently extended into a global element 
    $g \in \varinjlim_{i}\mathcal{G}(U_i)$ if their extensions 
    agree on overlaps.
\end{enumerate}
Cosheaves thus formalize the decomposition of global structure into 
locally executable primitives. In planning, a global goal corresponds 
to a colimit element $[g]$, assembled consistently from local motor 
policies $g_i$.

\medskip

\noindent
\textbf{Intuition.} Sheaves (local-to-global) capture \emph{binding in perception}, 
while cosheaves (global-to-local) capture \emph{extension in planning}. Filtrations 
provide the temporal and hierarchical scaffolding in which both processes occur. 
Together, these concepts unify perceptual integration, motor planning, and 
multi-scale cycle formation within the same topological framework.

\section{Hopf Bifurcation and the Emergence of Cycles}
\label{sec:hopf}

One of the most fundamental mechanisms by which cycles arise in dynamical systems 
is the \emph{Hopf bifurcation}. It provides a generic route through which a 
system with a symmetric equilibrium can spontaneously generate oscillations once 
symmetry is broken. This mechanism is particularly relevant for CCUP and MAI: 
the transition from neutral symmetry to recurrent cycles mirrors the 
context-content alignment process, where a stable cycle $\gamma$ emerges 
and is subsequently amortized as memory.

\subsection{Definition}
Consider a smooth dynamical system
$\dot{x} \;=\; f(x;\mu), \quad x \in \mathbb{R}^n,\ \mu \in \mathbb{R}$,
where $\mu$ is a bifurcation parameter. Suppose the system has an equilibrium 
$x^\star(\mu)$ for all $\mu$ near $0$. A Hopf bifurcation occurs at $\mu=0$ if:

\begin{enumerate}
    \item The Jacobian $Df(x^\star(0);0)$ has a simple pair of purely 
    imaginary eigenvalues $\lambda_\pm(0)=\pm i \omega_0$, with $\omega_0 > 0$, 
    and no other eigenvalues on the imaginary axis.
    \item The real part of the eigenvalues crosses zero transversally as $\mu$ 
    varies: $\tfrac{d}{d\mu}\,\mathrm{Re}\,\lambda_\pm(0) \neq 0$.
\end{enumerate}

Then, by the \emph{Hopf bifurcation theorem}, for sufficiently small $\mu$, the system 
admits a family of periodic orbits $\gamma_\mu$ bifurcating from $x^\star(0)$. 
In the \emph{supercritical case}, these orbits are stable limit cycles.

\subsection{Biological Interpretation}
In neural systems, the equilibrium corresponds to a balanced but 
symmetry-constrained state---for instance, when excitatory and inhibitory inputs 
cancel at baseline. The critical eigenvalue pair represents a mode of 
oscillation (e.g., theta frequency) that is neutrally stable under symmetry. 
Perturbing the context variable $\Psi$ breaks this symmetry, shifting the 
eigenvalues and giving rise to a self-sustained oscillation. The resulting 
limit cycle provides a natural scaffold for embedding content variables $\Phi$, 
transforming transient dynamics into a recurrent, closed trajectory.

\subsection{Connection to CCUP and MAI}
From the perspective of CCUP, the Hopf bifurcation implements the transition 
from high-entropy symmetry (all directions equivalent, maximal uncertainty) 
to low-entropy cycles (residual invariants that persist). The closed orbit 
$\gamma_\mu$ realizes the homological identity $\partial^2=0$, since trajectories 
that fail to close cancel out, while the recurrent cycle persists as an invariant. 
MAI then \emph{amortizes} this cycle: once formed, $\gamma_\mu$ can be stored, 
recalled, and recombined, reducing the cost of future alignment between 
context ($\Psi$) and content ($\Phi$). Thus, Hopf bifurcation provides the 
dynamical substrate for cycle formation, while homology ensures its persistence.

\end{document}